\documentclass[]{pkuai4m}

\usepackage[toc,page,header]{appendix}

\usepackage{minitoc}
\usepackage{multirow}     % 支持 \multirow
\usepackage{multicol}     % 支持多列排版（部分场景）
\usepackage{booktabs}     % \toprule \midrule \bottomrule
\usepackage{colortbl}     % 单元格或行着色
\usepackage[table]{xcolor} % 颜色支持（含 table 选项）
\usepackage{makecell}     % 允许在表格中换行 / 对齐
\usepackage{graphicx}     % 支持 \resizebox
\usepackage{minted}
\usepackage{xcolor}
\usepackage{fontawesome5}
\usepackage{svg}
\usepackage{caption}
\usepackage{fancyvrb}
\usepackage{fvextra}
\usepackage{enumitem}
\usepackage{wrapfig}
\usepackage{mathtools}
\usepackage{needspace}

\usepackage[most]{tcolorbox}
\tcbuselibrary{breakable,skins}

\usepackage{amsthm,amsmath,amssymb}

\allowdisplaybreaks[2]

\usepackage{aliascnt}

\theoremstyle{plain}

\newtheorem{theorem}{Theorem}

\newaliascnt{corollary}{theorem}

\aliascntresetthe{corollary}

\newaliascnt{lemma}{theorem}
\newtheorem{lemma}[lemma]{Lemma}
\aliascntresetthe{lemma}

\newaliascnt{proposition}{theorem}
\newtheorem{proposition}[proposition]{Proposition}
\aliascntresetthe{proposition}

\newaliascnt{cited}{theorem}

\aliascntresetthe{cited}

\newtheorem*{theorem*}{Theorem}
\newtheorem*{problem*}{Problem}
\newtheorem*{remark*}{Remark}
\newtheorem*{claim*}{Claim}
\newtheorem*{conjecture*}{Conjecture}

\theoremstyle{definition}

\newaliascnt{definition}{theorem}

\aliascntresetthe{definition}

\newaliascnt{notation}{theorem}

\aliascntresetthe{notation}

\newaliascnt{remark}{theorem}

\aliascntresetthe{remark}

\newaliascnt{problem}{theorem}

\aliascntresetthe{problem}

\AtBeginDocument{
  \crefname{theorem}{Theorem}{Theorems}
  \Crefname{theorem}{Theorem}{Theorems}

  \crefname{corollary}{Corollary}{Corollaries}
  \Crefname{corollary}{Corollary}{Corollaries}

  \crefname{lemma}{Lemma}{Lemmas}
  \Crefname{lemma}{Lemma}{Lemmas}

  \crefname{proposition}{Proposition}{Propositions}
  \Crefname{proposition}{Proposition}{Propositions}

  \crefname{cited}{Cited Result}{Cited Results}
  \Crefname{cited}{Cited Result}{Cited Results}

  \crefname{definition}{Definition}{Definitions}
  \Crefname{definition}{Definition}{Definitions}

  \crefname{notation}{Notation}{Notations}
  \Crefname{notation}{Notation}{Notations}

  \crefname{remark}{Remark}{Remarks}
  \Crefname{remark}{Remark}{Remarks}

  \crefname{problem}{Problem}{Problems}
  \Crefname{problem}{Problem}{Problems}
}

\tcbset{
  pku theorem box/.style={
    enhanced jigsaw,
    breakable,
    arc=1mm,
    boxrule=0pt,
    fonttitle=\bfseries,
    left=1.5mm,
    right=1.5mm,
    top=1mm,
    bottom=1mm,
    pad at break=1mm,
    before skip=6pt plus 2pt minus 2pt,
    after skip=6pt plus 2pt minus 2pt,
  }
}

\tcolorboxenvironment{theorem}{
    pku theorem box,
    colback=blue!5,
    colframe=blue!5,
}

\tcolorboxenvironment{proposition}{
    pku theorem box,
    colback=blue!5,
    colframe=blue!5,
}

\tcolorboxenvironment{lemma}{
    pku theorem box,
    colback=blue!5,
    colframe=blue!5,
}

\tcolorboxenvironment{corollary}{
    pku theorem box,
    colback=blue!5,
    colframe=blue!5,
}

\tcolorboxenvironment{definition}{
    pku theorem box,
    colback=red!5,
    colframe=red!5,
}

\tcolorboxenvironment{problem*}{
    pku theorem box,
    colback=orange!5,
    colframe=orange!5,
}

\tcolorboxenvironment{conjecture*}{
    pku theorem box,
    colback=orange!5,
    colframe=orange!5,
}

\tcolorboxenvironment{cited}{
    pku theorem box,
    colback=green!5,
    colframe=green!5,
}

\newcommand{\AIVN}{Iteris}

\title{Iteris: Agentic Research Loops for Computational Mathematics}

\author[1,5]{Leheng Chen}
\author[1,5]{Zihao Liu}
\author[1]{Wanyi He}
\author[2,3,4,5\P]{Bin Dong}

\affiliation[]{
$^{1}$School of Mathematical Sciences, Peking University \\
$^{2}$Beijing International Center for Mathematical Research and the New Cornerstone Science Laboratory, Peking University \\
$^{3}$Center for Machine Learning Research, Peking University\\
$^{4}$Center for Intelligent Computing, Great Bay Institute for Advanced Study, Great Bay University \\
$^{5}$Zhongguancun Academy \\
}
\contribution[\P]{Corresponding author}

\def\emailicon{\raisebox{-1.5pt}{\includegraphics[height=1.05em]{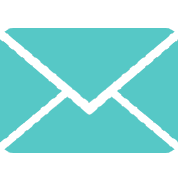}}}

\checkdata[ \emailicon \hspace{0.3em} Correspondence ]{\email{dongbin@math.pku.edu.cn}}

\date{\today}

\abstract{
Recent advances in large language models and agentic AI systems have enabled significant progress in mathematical discovery, from solving competition problems to tackling research-level conjectures. However, open problems in computational mathematics have received comparatively less attention: research in this area often requires not only proofs but also numerical experimentation, adversarial constructions, and algorithm design. In this paper, we introduce  an agentic research system, \AIVN{}, designed for open problems in computational mathematics. We apply \AIVN{} to two open problems from a recent Simons Workshop collection~\cite{AmselEtAl2026OpenQuestions}. In these case studies, \AIVN{} generated numerical evidence, constructions, and proof drafts that led, after expert review and correction, to verified results. The first result is a phase diagram for the asymptotic comparison between conjugate gradient and randomized coordinate descent on power-law spectra; the second is a counterexample showing that QR factorization with column pivoting can fail to select well-conditioned submatrices even under low coherence. These case
studies suggest that agentic AI systems can participate meaningfully in
research workflows for open problems in computational mathematics, while human validation remains essential.

}

\begin{document}
\maketitle

\section{Introduction}
Recent advances in large language models and agentic AI systems have opened new possibilities for mathematical discovery. Frontier reasoning models can now solve increasingly difficult competition and graduate-level problems through natural-language inference~\cite{guo2025deepseek,comanici2025gemini,yang2025qwen3}. Beyond direct problem solving, systems such as FunSearch~\cite{RomeraParedes2024FunSearch} and AlphaEvolve~\cite{NovikovEtAl2025AlphaEvolve} have shown that language models can be embedded in iterative search loops, where candidate programs are generated, evaluated, and improved. More recently, autonomous mathematics systems have begun to target research-level questions, including long-horizon natural-language proof search, conjecture resolution, and human--AI co-mathematical workflows~\cite{FengEtAl2026AutonomousMath,JuEtAl2026ConjectureResolution,ZhengEtAl2026AICoMathematician}. These developments suggest that AI systems are moving from solving isolated mathematical exercises toward participating in genuine mathematical research.

However, open problems in computational mathematics present a different kind of challenge. Progress in this area rarely consists of producing a proof in one pass, nor can it always be reduced to optimizing a single executable objective. A typical research trajectory may involve numerical experimentation, adversarial construction, and proof development. Thus computational mathematics requires an AI system to coordinate several research modes.

In this paper, we introduce \AIVN{}, an agentic research system designed for open problems in computational mathematics. \AIVN{} is organized as an \emph{explore--plan--execute} loop. In each iteration, an exploration agent first probes plausible directions and tests whether the current branch is becoming locally stagnant. A plan agent then reads the global project state and chooses structured tasks for the next iteration. Specialized execution agents carry out the selected tasks, including foundation work, numerical experiment, proof construction, and route review. All project state is maintained through files, which serve both as long-term memory and as structured messages between agents. This design allows \AIVN{} to pursue long research trajectories while keeping research facts separate and checkable.

We apply \AIVN{} to open problems from the recent collection \emph{Linear Systems and Eigenvalue Problems: Open Questions from a Simons Workshop}~\cite{AmselEtAl2026OpenQuestions}. Iteris produced long-horizon exploratory artifacts and proof drafts for these two problems; the final mathematical results were obtained after human verification, mathematical repair where needed, and expository reorganization. The first (\textit{Problem 2.4}) concerns the asymptotic comparison between conjugate gradient (CG) and randomized coordinate descent (RCD) on power-law spectra. We establish a
fixed-parameter phase diagram for the normalized cost ratio between RCD
and CG; see
Theorem~\ref{thm:cg-power-law-phase}. The second (\textit{Problem 4.3}) concerns whether QR factorization with column pivoting (QRCP) reliably selects well-conditioned submatrices from a matrix with orthonormal row. We construct a low-coherence counterexample family showing that QRCP can fail to do so; see Theorem~\ref{thm:qrcp-source-scale-obstruction}. 

\AIVN{} generated the main exploratory and deductive
artifacts that made the two results possible, while the final mathematical
arguments required expert verification and editing. In the CG case, the system found the main phase structure, but
one rate analysis in the \(p<1\) regime initially relied on an unjustified
stronger assumption; this gap was detected by human inspection and repaired
through further human--AI interaction. In the QRCP case, the
system identified the correct obstruction, but the generated
proof trajectory was circuitous and required substantial reorganization into a
readable proof.

Together, these results provide concrete evidence that agentic AI systems can
move beyond benchmark problem solving and contribute to the research workflow
for open problems in computational mathematics.

\paragraph{Contributions.}
\begin{enumerate}[leftmargin=*]
  \item We describe the \AIVN{} system: an explore--plan--execute agentic loop system for distinct research modes in computational mathematics.
  \item We establish a fixed-parameter phase diagram for the CG-versus-RCD
  comparison on power-law spectra, giving asymptotic rates for the normalized
  cost ratio between RCD and CG
  (Theorem~\ref{thm:cg-power-law-phase}).
  \item We answer the QRCP orthonormal-selection question in the negative by
  constructing a low-coherence orthonormal-row counterexample family
  (Theorem~\ref{thm:qrcp-source-scale-obstruction}).
\end{enumerate}

The rest of the paper is organized as follows.  Section~\ref{sec:related-work} reviews related work.  Section~\ref{sec:framework} describes the \AIVN{} system framework.  Section~\ref{sec:open-problems} states the two open problems and their resolutions.  Section~\ref{sec:discussion} discusses broader implications.  Appendices~\ref{app:cg-proof} and~\ref{app:qrcp-proof} give proof details.
\section{Related work}\label{sec:related-work}

\paragraph{AI systems for mathematical discovery.}
AI has increasingly become a tool for mathematical exploration and constructive search. Early work showed that machine learning can reveal patterns in mathematical data and help guide the formulation of new conjectures and theorems~\cite{Davies2021Guiding}. More recent systems use language models as search engines over explicit mathematical or algorithmic objects: FunSearch searches over programs with an automated evaluator~\cite{RomeraParedes2024FunSearch}, while AlphaEvolve extends evaluator-guided search to larger algorithmic code artifacts~\cite{NovikovEtAl2025AlphaEvolve}. Research-level mathematics has also become an explicit target. First Proof proposed research-level mathematical questions as a benchmark for AI reasoning~\cite{AbouzaidEtAl2026FirstProof}; autonomous-mathematics systems such as Aletheia study long-horizon natural-language research processes~\cite{FengEtAl2026AutonomousMath}; and automated conjecture-resolution systems connect informal mathematical reasoning with formal verification~\cite{JuEtAl2026ConjectureResolution}. Another important paradigm is human--AI collaboration: recent AI co-mathematician systems are designed to help pure mathematicians carry out research-level work through ideation, search, computation, and theorem proving~\cite{ZhengEtAl2026AICoMathematician}. These developments mark rapid progress in AI-assisted and increasingly autonomous mathematics, but open problems in applied and computational mathematics remain much less systematically explored.

\paragraph{Agentic research workflows.}
A parallel line of work views research automation as a long-horizon workflow rather than a single model call. ReAct introduced interleaved reasoning and acting in external environments~\cite{YaoEtAl2023ReAct}, and scientific agents such as Coscientist further demonstrate the use of LLM agents for scientific planning and experimentation~\cite{Boiko2024Coscientist}. Closest to full AI-research automation, The AI Scientist frames machine-learning research as an end-to-end agentic process spanning idea generation, literature search, experiment implementation, result analysis, manuscript drafting, and automated review~\cite{LuEtAl2026AIScientist}. These works establish agentic workflows as a promising route to scientific discovery, but they are mostly evaluated in laboratory, or benchmark-mediated settings; computational mathematics requires a tighter integration of numerical experimentation, algorithm design, and proof-sensitive reasoning.
\section{Framework}\label{sec:framework}

Research in computational mathematics moves between several modes: formulating conjectures, running numerical tests, building proofs, and reviewing whether a route should continue. \AIVN{} is an \emph{explore--plan--execute} agentic loop system for coordinating these modes (\Cref{fig:framework}). The system has three main parts: an agentic explore--plan--execute protocol, execution agents for different research modes, and an exploration agent that tests possible directions before the plan agent chooses the next task pool.

\begin{figure}
    \centering
    \includegraphics[width=0.8\linewidth]{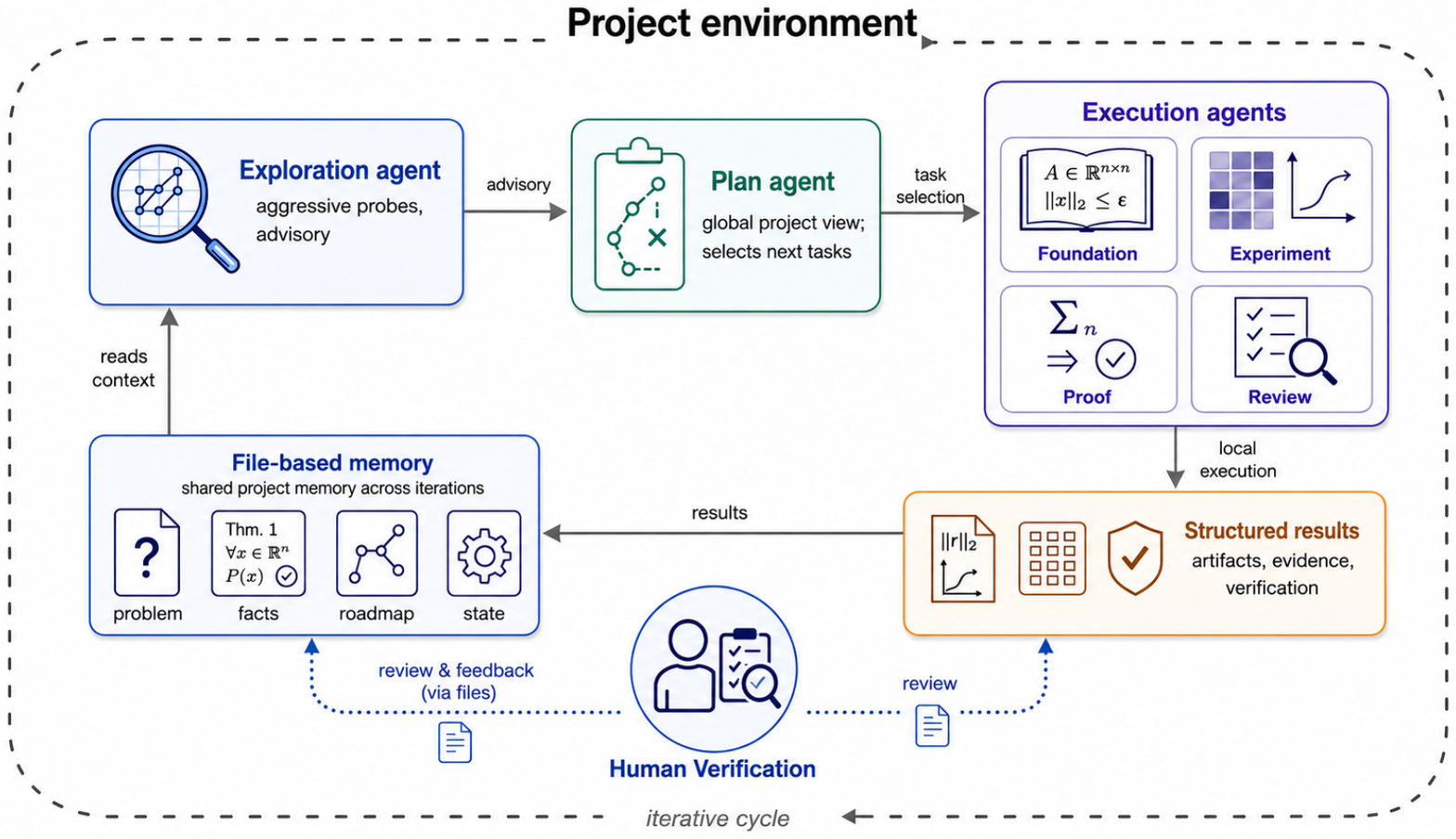}
    \caption{\AIVN{} agent}
    \label{fig:framework}
\end{figure}

\subsection{Agentic explore-plan-execute protocol}\label{subsec:agentic-iteration-protocol}

Each iteration follows the same order: \emph{explore}, \emph{plan}, and \emph{execute}. The three phases separate broad search, global task selection, and local research work:
\begin{enumerate}[leftmargin=*,itemsep=0.35em]
    \item \emph{Explore.} \textbf{Exploration agent} runs probes before the next task pool is fixed. This work consists of light repository-level probing: reading relevant project files, inspecting recent iterations, searching prior facts, and drafting scratch reasoning over a few plausible routes. The exploration agent does not write the official plan. It writes scratch files and an advisory that records the recommended task shape, the local-minimum check, the routes considered, the reason the direction is high value now, and the risks or non-goals that keep the task safe.
    \item \emph{Plan.} \textbf{Plan agent} reads the project files and the exploration advisory. It has the global view of the project: the problem statement, roadmap, project memory, route status, prior facts, frontier state, and recent iteration results. Its job is to decide the task pool for the next iteration. The Plan agent writes \(\texttt{TASK\_POOL.json}\), which specifies each task's category, goal, input files, allowed scope, verification requirements, and result contract.
    \item \emph{Execute.} Each selected \textbf{Execution agent} receives its assigned task from \(\texttt{TASK\_POOL.json}\) as local context. It performs exactly one research action using its task-specific skills and writes structured result files. These result files are added back to project memory and become input for the next iteration.
\end{enumerate}

The protocol is organized around files. Project files store memory across iterations, and they also pass structured information between agents. Thus files play two roles. First, they are memory: they keep the project state, reviewed facts, route decisions, and claim limits stable across many iterations. Second, they are messages: advisories, task pools, and result files move information from one agent to the next.

\subsection{Execution agents for research modes}\label{subsec:executor-agents}

Execution agents implement the local research actions selected by the plan agent. Each execution agent is equipped with a task-specific agent skill: a Markdown-based capability specification that defines its purpose, admissible inputs, execution rules, and output schema. This makes different research modes reusable while keeping each execution step narrow and checkable.

\AIVN{} currently uses four execution agent categories:
\begin{enumerate}[leftmargin=*,itemsep=0.2em]
 \item \textbf{Foundation agent}: audits sources, freezes definitions, drafts conjecture cards, and maintains claim discipline.
 \item \textbf{Experiment agent}: runs numerical work, such as test harnesses, diagnostics, and counterexample searches. Experimental evidence is never treated as proof.
 \item \textbf{Proof agent}: constructs and checks proof artifacts for stable statements, special cases, or route-level lemmas.
 \item \textbf{Review agent}: evaluates evidence, proof gaps, route value, and claim status; it may recommend deepening, reframing, or retiring a branch.
\end{enumerate}

The plan agent selects the execution agent categories needed in each iteration. This is similar to a mathematician deciding whether the next step should be source cleanup, computation, proof work, or strategic review. The selected execution agent does not make global project decisions. It applies the skills specified by \(\texttt{TASK\_POOL.json}\) within the assigned scope and returns structured result files. This separation helps prevent false progress: numerical evidence, proof sketches, reviewed facts, and formal theorem claims are kept distinct.

\subsection{Exploration agent for escaping local inertia}\label{subsec:exploration-agent}

The exploration agent is separated from the plan agent because planning and exploration play different roles. This separation matters as the plan agent does not directly perform research actions. It works from the global project files, so it may keep improving the route that already looks most natural. This can be useful when the route is correct, but it can also lead to local parameter tuning or route inertia. In open problems, the useful next move may require trying a side idea before it is clear that the idea deserves an official task.

The exploration agent fills this gap by running aggressive probes. Its role is not to decide the official task pool, but to produce decision signals that help the plan agent avoid over-committing to the current branch. In practice, this means reviewing recent iterations, checking whether the branch has narrowed into incremental follow-up work, retrieving prior facts when they bear on the decision, and comparing a small number of plausible routes before recommending a candidate task shape. The plan agent may then adopt, modify, or reject this recommendation when writing \(\texttt{TASK\_POOL.json}\).

With this separation, planning remains globally coherent, execution remains locally scoped, and exploration gives the loop a way to search beyond its current path.

Together, these components form the \AIVN{} system. In the open-problems section, we report two success-case trajectories in which \AIVN{} ran for over a hundred iterations per problem and generated artifacts that contributed to the final results; the theorem statements and final proofs were verified, repaired where necessary, and organized by the authors. In our experiments, the generation agent is implemented using the OpenAI Codex agent with GPT-5.5 as the underlying model.
\section{Open problems and resolutions}\label{sec:open-problems}

This section states the two mathematical problems from the Simons workshop collection~\cite{AmselEtAl2026OpenQuestions}. For each problem we give the theorem statement, and a proof sketch. Full proof details appear in Appendices~\ref{app:cg-proof} and~\ref{app:qrcp-proof}. The proof of Problem II has also been verified in Lean\footnote{https://github.com/frenzymath/qrcp-bounded-coherence-obstruction}.

\subsection{Overview}

\begin{table}[h]
\centering
\begin{tabular}{lll}
\toprule
Problem & Resolution type & Details \\
\midrule
CG vs. RCD on power-law spectra & fixed-parameter phase diagram & Appendix~\ref{app:cg-proof} \\
QRCP orthonormal selection & counterexample & Appendix~\ref{app:qrcp-proof} \\
\bottomrule
\end{tabular}
\caption{The two open-problem resolutions reported in this paper.}
\end{table}

\subsection{Problem I: CG versus sketching}

\paragraph{Background.}
Conjugate gradient (CG)~\cite{HestenesStiefel1952,Saad2003} and randomized sketch-and-project methods~\cite{GowerRichtarik2015,LeventhalLewis2010,Nesterov2012} are two classical iterative methods for solving symmetric positive-definite linear systems.  They rely on different mechanisms: CG exploits global spectral structure through Krylov subspaces and residual polynomials, whereas randomized coordinate descent (RCD) and related coordinate sketch-and-project methods process only one coordinate at each step.  The individual steps of RCD are therefore extremely cheap, but it may need more steps; the natural unit of comparison is an epoch, namely \(n\) coordinate updates, rather than a single coordinate step. The Simons workshop collection~\cite{AmselEtAl2026OpenQuestions} \textit{Problem 2.4} asks how these two mechanisms compare asymptotically in a source model with power-law spectra.

\medskip
The problem concerns this epoch-scaled comparison in a regime where
condition number alone is not predictive.  The stopping times depend jointly
on spectral decay, the source distribution of the right-hand side, and the
orientation of the eigenvectors relative to the coordinate basis. Related
work connects sketch-and-project convergence to randomized SVD~\cite{DerezinskiRebrova2024}
and analyzes subspace-constrained RCD~\cite{LokRebrova2025}.  We ask, for
each fixed \((p,\epsilon)\), which method has the asymptotic advantage and
whether the answer forms a phase diagram.

\medskip
Fix \(p>0\) and let
\[
A_n
=
U_n\operatorname{diag}(1^{-p},2^{-p},\ldots,n^{-p})U_n^*,
\qquad
b_n=A_nz_n,
\]
where \(U_n\in\mathbb C^{n\times n}\) is Haar unitary and \(z_n\) is an
independent uniformly distributed complex unit vector.  Thus the exact solution
is
\[
x_{\star,n}=A_n^{-1}b_n=z_n.
\]
Both algorithms start from \(x_0=0\), and errors are measured in the
\(A_n\)-norm
\[
\|v\|_{A_n}^2=v^*A_nv.
\]

The CG method is exact, unpreconditioned conjugate gradient.  The randomized
method is the coordinate sketch-and-project method used in the open problem:
at step \(t\), it samples \(i_t\in\{1,\ldots,n\}\) with probability
\[
\mathbb P(i_t=i\mid A_n)=\frac{A_n(i,i)}{\operatorname{tr}(A_n)}
\]
and performs the exact coordinate update
\[
x_{t+1}
=
x_t+
\frac{(b_n-A_nx_t)_{i_t}}{A_n(i_t,i_t)}e_{i_t}.
\]
We write \(I=(i_0,i_1,\ldots)\) for the coordinate-sampling sequence.

For a fixed relative squared-error threshold \(\epsilon\in(0,1)\), define
\[
T_{\mathrm{CG},n}(\epsilon,p)
=
\min\left\{
t\ge0:
\|x_t^{\mathrm{CG}}-x_{\star,n}\|_{A_n}^2
\le
\epsilon\|x_0-x_{\star,n}\|_{A_n}^2
\right\},
\]
and
\[
T_{\mathrm{RCD},n}(\epsilon,p;I)
=
\min\left\{
t\ge0:
\|x_t^{\mathrm{RCD}}(I)-x_{\star,n}\|_{A_n}^2
\le
\epsilon\|x_0-x_{\star,n}\|_{A_n}^2
\right\}.
\]

Since one RCD epoch consists of \(n\) coordinate updates, we compare CG steps
with RCD epochs through
\[
\rho_n(\epsilon,p)
=
\frac{T_{\mathrm{RCD},n}(\epsilon,p;I)/n}
     {T_{\mathrm{CG},n}(\epsilon,p)}.
\]
The goal is to determine the fixed-parameter asymptotic behavior of
\(\rho_n(\epsilon,p)\) as \(n\to\infty\).

Before stating the theorem, we introduce the CG-side quantity that will govern
the phase diagram.  It measures what fixed-degree residual polynomials can
achieve in the power-law limit. For
\(0<p<1\), the limiting tail measure has only finitely many finite moments.
Since a degree-\(k\) residual polynomial involves moments up to order \(2k\),
 its limiting quadratic form is finite precisely when $(2k+1)p<1$.
 
Define
\[
K(p)=\max\{k\ge0:(2k+1)p<1\},
\]
and the limiting moment measure
\[
\nu_p(dy)=\frac{1-p}{p}y^{-1/p}\,dy,
\qquad y\in[1,\infty).
\]
We define the limiting CG floor by
\[
F(p)
=
\min_{\deg r\le K(p),\ r(0)=1}
\int r(y)^2\,\nu_p(dy).
\]
This is a finite-dimensional moment problem.  Equivalently, if
\[
m_s(p)=\int y^s\,\nu_p(dy)=\frac{1-p}{1-(s+1)p},
\qquad 0\le s\le 2K(p),
\]
and
\[
G_{K(p)}(p)=\bigl(m_{a+b}(p)\bigr)_{0\le a,b\le K(p)},
\]
then
\[
F(p)=\frac{1}{\bigl(G_{K(p)}(p)^{-1}\bigr)_{00}}.
\]
Thus \(F(p)\) is the constrained minimum of the finite Hankel moment matrix
associated with \(\nu_p\).

Equivalently, if \(G_K(p)_{ab}=\int y^{a+b}\,d\nu_p(y)\) on the active range,
then
\[
F(p)=\frac{1}{(G_{K(p)}(p)^{-1})_{00}}.
\]
Intuitively, \(F(p)\) is the lowest relative error level that fixed-degree CG
can stably reach in the limiting active problem.  The position of
\(\epsilon\) relative to \(F(p)\) is the primary divider in the
fixed-parameter phase diagram.

\newpage

\begin{theorem}[Fixed-parameter CG/RCD phase diagram with rate upper bounds]
\label{thm:cg-power-law-phase}
Fix \(p>0\) and \(\epsilon\in(0,1)\).  Then the zero-ratio regimes are:
\[
\begin{array}{c|c|c}
\text{Regime}
&
\text{Conclusion}
&
\text{Proved upper bound}
\\
\hline
p>1
&
\rho_n(\epsilon,p)\xrightarrow{\mathbb P}0
&
\rho_n(\epsilon,p)=O_{\mathbb P}(n^{-1})
\\[1mm]
p=1
&
\rho_n(\epsilon,1)\xrightarrow{\mathbb P}0
&
\rho_n(\epsilon,1)
=
O_{\mathbb P}\!\left(\dfrac{\log n}{a_n}\right)
\\[2mm]
1/3<p<1
&
F(p)=1,\quad \rho_n(\epsilon,p)\xrightarrow{\mathbb P}0
&
\rho_n(\epsilon,p)=O_{\mathbb P}(1/b_n)
\\[1mm]
p=1/3
&
F(p)=1,\quad \rho_n(\epsilon,1/3)\xrightarrow{\mathbb P}0
&
\rho_n(\epsilon,1/3)=O_{\mathbb P}(1/b_n)
\\[1mm]
0<p<1/3,\ \epsilon<F(p)
&
\rho_n(\epsilon,p)\xrightarrow{\mathbb P}0
&
\rho_n(\epsilon,p)=O_{\mathbb P}(1/b_n).
\end{array}
\]
Here, in the \(p=1\) row the bound holds for every deterministic
\(a_n\to\infty\) satisfying
\[
\log a_n=o(\log n),
\]
whereas in the \(O_{\mathbb P}(1/b_n)\) rows it holds for every deterministic
\(b_n\to\infty\) satisfying
\[
b_n\log(1+b_n)=o(\log n).
\]

The remaining regimes are as follows.
\begin{enumerate}[label=(\arabic*)]
\item If \(0<p<1/3\) and \(F(p)<\epsilon<1\), then there exists
\(\eta(p,\epsilon)>0\) such that
\[
\mathbb P\!\left(\rho_n(\epsilon,p)>\eta(p,\epsilon)\right)\to1.
\]

\item If \(0<p<1/3\) and \(\epsilon=F(p)\), then there exist
\(\eta(p)>0\) and \(c(p)>0\) such that
\[
\liminf_{n\to\infty}
\mathbb P\!\left(\rho_n(F(p),p)>\eta(p)\right)\ge c(p).
\]

\item Moreover, on the endpoint/lower-subband critical regimes
\[
p=\frac1{2K+3}
\quad\text{or}\quad
\frac1{2K+3}<p\le\frac1{2K+2},
\qquad K\ge1,
\]
the critical obstruction strengthens to
\[
\mathbb P\!\left(\rho_n(F(p),p)>\eta(p)\right)\to1
\]
for some \(\eta(p)>0\).
\end{enumerate}
\end{theorem}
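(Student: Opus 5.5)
The proof splits into a separate analysis of the two stopping times, followed by a comparison of their ratio.

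\emph{RCD side.} The plan is a concentration estimate for \(T_{\mathrm{RCD},n}\). Conditionally on \(A_n\), the expected squared error satisfies \(\mathbb E\|x_t-z_n\|_{A_n}^2=e_t^*(I-A_n^{1/2}D^{-1}A_n^{1/2}/\operatorname{tr}A_n)^t e_t\)-type recursions, with sampling matrix \(D\) the diagonal of \(A_n\). For Haar \(U_n\), the diagonal entries concentrate at \(\operatorname{tr}(A_n)/n\). Hence the expected error after \(t=sn\) steps is approximately
\[
\frac{\sum_j \lambda_j e^{-s\lambda_j n/\operatorname{tr}A_n}}{\sum_j\lambda_j},
\]
evaluated against the isotropic source \(z_n\). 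Setting \(\lambda_j=j^{-p}\), the number of epochs \(s\) needed to reach \(\epsilon\) scales like \(\operatorname{tr}(A_n)/(n\lambda_{\rm eff})\). For \(p>1\) this is \(O(n^{p-1})\) epochs up to constants, at most polynomial in \(n\). For \(p<1\) it is \(O(1)\) times a power determined by \(\epsilon\). I would establish only upper bounds on \(T_{\mathrm{RCD}}/n\) in the zero-ratio rows, using Markov's inequality on the expected error, since the \(O_{\mathbb P}\) rates need only upper bounds. In regimes (1)--(3) I would prove a lower bound on the RCD epoch count. The argument is that for \(t\le \eta n\), at most \(\eta n\) coordinates are touched, so the error reduction is bounded by the energy captured in a random \(\eta n\)-dimensional coordinate subspace. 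By delocalization of Haar eigenvectors this gives a relative reduction \(O(\eta)\). Hence \(T_{\mathrm{RCD}}/n\ge \eta\) with probability tending to \(1\) whenever \(\epsilon<1\) is fixed away from \(1\).

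\emph{CG side.} The key step is to show that \(T_{\mathrm{CG},n}\) grows unboundedly in the zero-ratio rows, and stays bounded in regimes (1)--(3). Writing the relative CG error at step \(k\) as \(\min_{r(0)=1,\deg r\le k}\sum_j w_j\lambda_j r(\lambda_j)^2/\sum_j w_j\lambda_j\), I would work with \(w_j=|(U_n^*z_n)_j|^2\), which are Dirichlet weights concentrating at \(1/n\). I would then rescale \(y=\lambda/\lambda_{\min}\)-type variables, or equivalently \(y=(j/n)^{-p}\) scaled so the bulk of the \(A\)-weighted mass sits at \(y\in[1,\infty)\). In the \(p<1\) case the empirical weighted spectral measure converges to \(\nu_p\), and fixed-degree moments converge exactly when \((2k+1)p<1\). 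For \(k\le K(p)\) the minimum converges to the Hankel quantity, giving the floor \(F(p)\) via the stated \((G^{-1})_{00}\) formula. For \(k>K(p)\) the higher moments diverge and are dominated by the top eigenvalues, which are finitely many outliers in the rescaled picture.

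\emph{Comparison of the two sides.} When \(\epsilon<F(p)\), or when \(p\ge1/3\) (where \(K=0\) and \(F=1\)), any fixed degree fails to reach \(\epsilon\). This follows because a degree-\(k\) polynomial can annihilate only \(k\) heavy outliers while the remaining tail still carries mass close to the floor. Quantitatively, deflating the top \(m\) eigenvalues costs roughly \(m\) steps and reduces the tail only logarithmically in the spectral range \(n^p\). I would turn this into \(T_{\mathrm{CG}}\ge b_n\) whenever \(b_n\log(1+b_n)=o(\log n)\), and \(a_n\) analogously for \(p=1\). For \(p>1\), the trivial bound \(T_{\mathrm{CG}}\) at least of order \(n\) times a constant does not hold. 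Instead, I would use the fact that \(T_{\mathrm{RCD}}/n\) is \(O(1)\) in the normalized sense while \(T_{\mathrm{CG}}\) must separate roughly \(n\) well-spread eigen-levels carrying comparable \(A\)-weight. This requires a Chebyshev/potential-theoretic lower bound of order \(n\) for that weight, which yields the stated \(O_{\mathbb P}(n^{-1})\) rate. For \(\epsilon>F(p)\), CG stops at degree at most \(K(p)\) with high probability, so \(\rho_n\ge\eta\). At \(\epsilon=F(p)\) the conclusion hinges on the fluctuation of the degree-\(K\) empirical minimum around \(F(p)\). It lies on the favorable side with probability bounded below, since the fluctuations are asymptotically non-degenerate, giving (2). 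In the listed subbands the bias from finite-\(n\) truncation of the moments is deterministic and of definite sign, dominating the fluctuations, which gives (3).

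\emph{Main obstacle.} I expect the CG lower bounds to be the hardest part. This covers the growth rate of \(T_{\mathrm{CG}}\) in the \(p<1\) rows with the sharp \(b_n\log b_n=o(\log n)\) budget, and the critical sign analysis in (3). For the growth rate I would first try an explicit extremal-polynomial argument. A degree-\(k\) residual polynomial with \(r(0)=1\) can be small only on at most \(k\) dyadic spectral scales, so I would lower-bound the error by the \(\nu_p\)-mass on the unresolved scales, truncated at \(y\lesssim n^{p}\).
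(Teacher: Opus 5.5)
The fixed-degree floor analysis, item (1), and the support-based RCD lower gate follow the paper's route. Two steps, however, fail as proposed.

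\textbf{The summable row $p>1$.} Your argument here rests on a false CG lower bound. For $p>1$ the $A_n$-weighted spectral mass $j^{-p}E_j/\sum_i i^{-p}E_i$ is summable, so the eigen-levels do \emph{not} carry comparable weight. Take the residual polynomial $q(\xi)=\prod_{i\le L}(1-\xi/\lambda_i)$. It vanishes at $\lambda_1,\dots,\lambda_L$ and satisfies $|q(\lambda_j)|\le 1$ for $j>L$, so $M_{L,n}(p)\le \sum_{L<j\le n}j^{-p}E_j/E_1$. This is small uniformly in $n$ once $L$ is large. Hence $T_{\mathrm{CG},n}=O_{\mathbb P}(1)$, and no bound $T_{\mathrm{CG},n}\gtrsim n$ can hold.

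On the RCD side, your $\lambda_{\min}/\operatorname{tr}A_n$-type estimate gives $O(n^{p-1})$ epochs, which is useless. Even $O_{\mathbb P}(1)$ epochs would only give $\rho_n=O_{\mathbb P}(1)$ once $T_{\mathrm{CG},n}=O_{\mathbb P}(1)$. The missing idea is that RCD needs only $O_{\mathbb P}(1)$ coordinate \emph{steps} in total. In the rescaled spectral coordinates $\beta_{t,n}=\sqrt n\,U_n^*(x_t-z_n)$, any fixed number of updates converges to a product of orthogonal projections onto $G_t^\perp$ in the weighted space $\mathcal H_p$. Here the $G_t$ are i.i.d.\ from a size-biased Gaussian law of full support, and the Amemiya--Ando theorem drives the product to zero (Lemma~\ref{lem:summable-rcd-contraction-app}). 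Then $\rho_n\le T_{\mathrm{RCD},n}/n=O_{\mathbb P}(n^{-1})$ using only $T_{\mathrm{CG},n}\ge1$. Your diagonal-concentration heuristic also fails here: $A_n(i,i)=n^{-1}\sum_j j^{-p}|g_{i,n,j}|^2$ is dominated by finitely many terms and does not concentrate.

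\textbf{Item (3) and the growing-degree bounds.} Your mechanism for item (3) is also wrong. At degree $K$, the deterministic finite-$n$ correction to $S_{2K+1,n}/n$ is $n^{(2K+1)p-1}\zeta_{\mathrm R}((2K+1)p)$. This has exactly the same order as the random fluctuation $n^{(2K+1)p-1}\sum_j j^{-(2K+1)p}(E_j-1)$, because $2(2K+1)p>1$ throughout the band when $K\ge1$. The two combine into a limit variable of the type $Z_{\alpha,p}$, which takes both signs with positive probability. That is enough for (2), but degree-$K$ information alone cannot give probability tending to one.

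The strengthening comes from going one degree higher. The first inactive Schur gain $J_{K+1,n}$ in $M_{K+1,n}=M_{K,n}-J_{K+1,n}$ is of order $n^{1-(2K+3)p}$ on $1/(2K+3)<p<1/(2K+2)$ and of order $1/\log n$ at $p=1/(2K+3)$. In all the listed cases it strictly dominates the $O_{\mathbb P}(n^{(2K+1)p-1})$ active-layer error, so CG stops by step $K+1$ with high probability.

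Finally, the growing-degree non-stopping you flag as the main obstacle is not delivered by the ``small on at most $k$ dyadic scales'' idea. A lower bound by the mass of unresolved scales cannot reproduce the sharp constant $F(p)$, which is a quadratic minimum (e.g.\ $F(p)=1$ for $1/3\le p<1$). The paper instead writes the gain as $M_{K,n}-M_{d,n}=\sup_s\langle r_{K,n},s\rangle_n^2/\|s\|_n^2$. It lower-bounds $\|s\|_n^2$ at the top $d$ spectral nodes through a Vandermonde inverse bound $e^{Cd\log(1+d)}$, and uses orthogonality of $r_{K,n}$ to $y,\dots,y^K$ for the numerator. This is exactly where the budget $b_n\log(1+b_n)=o(\log n)$ comes from. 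At $p=1$ the much larger budget $\log a_n=o(\log n)$ is not reachable ``analogously''. It needs the shifted-Hilbert-matrix identity showing that degree $d$ removes only $O(\log d)$ of the total $\log n$ weight.
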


This theorem is pointwise in the fixed pair \((p,\epsilon)\).  The rate bounds
are conservative stochastic upper bounds, included only to quantify the
zero-ratio regimes.  We do not claim that these bounds are sharp.

\subsubsection{Proof sketch}

First, we reduce exact CG to a finite-dimensional residual-polynomial
minimum.  Write
\[
A_n=U_n\Lambda_nU_n^*,
\qquad
\Lambda_n=\operatorname{diag}(1^{-p},2^{-p},\ldots,n^{-p}),
\qquad
\alpha=U_n^*z_n.
\]
For \(d\ge0\), define
\[
M_{d,n}(p)
=
\min_{\deg q\le d,\ q(0)=1}
\frac{\sum_{j=1}^n \lambda_j |q(\lambda_j)|^2|\alpha_j|^2}
     {\sum_{j=1}^n \lambda_j |\alpha_j|^2}.
\]
The standard CG optimality property gives
\[
T_{\mathrm{CG},n}(\epsilon,p)
=
\min\{d\ge0:M_{d,n}(p)\le\epsilon\}.
\]
Since \(z_n\) is uniform on the complex unit sphere, the spectral weights have
the representation
\[
(|\alpha_1|^2,\ldots,|\alpha_n|^2)
\stackrel d=
\frac{(E_1,\ldots,E_n)}{\sum_{i=1}^nE_i},
\]
where \(E_1,\ldots,E_n\) are independent mean-one exponential random
variables.  Thus CG stopping becomes a random weighted moment problem.

Second, for \(0<p<1\), we identify the fixed-degree CG limit.  Rescale the
spectrum by
\[
Y_{j,n}=\left(\frac nj\right)^p,
\qquad
S_{a,n}=\sum_{j=1}^nY_{j,n}^aE_j,
\]
and define the random probability measure
\[
\mu_n
=
\frac1{S_{1,n}}
\sum_{j=1}^nY_{j,n}E_j\,\delta_{Y_{j,n}}.
\]
Then
\[
M_{d,n}(p)
=
\min_{\deg r\le d,\ r(0)=1}
\int r(y)^2\,\mu_n(dy).
\]
For fixed degrees, the relevant moments of \(\mu_n\) converge to those of
\[
\nu_p(dy)=\frac{1-p}{p}y^{-1/p}\,dy,
\qquad y\in[1,\infty).
\]
A degree-\(k\) residual polynomial requires moments up to order \(2k\), and
these moments are finite exactly when
\[
(2k+1)p<1.
\]
This gives
\[
K(p)=\max\{k\ge0:(2k+1)p<1\},
\]
and the limiting CG floor
\[
F(p)=
\min_{\deg r\le K(p),\,r(0)=1}
\int r(y)^2\,\nu_p(dy).
\]
The fixed-degree limit is
\[
M_{d,n}(p)\xrightarrow{\mathbb P}F_{\min(d,K(p))}(p).
\]

Third, this floor gives the main \(0<p<1\) phase mechanism.  If
\(\epsilon<F(p)\), then every fixed CG degree remains above the threshold
with high probability, so
\[
T_{\mathrm{CG},n}(\epsilon,p)\to\infty
\quad\text{in probability}.
\]
At the same time, source RCD satisfies the epoch upper bound
\[
T_{\mathrm{RCD},n}(\epsilon,p;I)/n=O_{\mathbb P}(1).
\]
Therefore
\[
\rho_n(\epsilon,p)\to0.
\]
This includes the whole row \(1/3\le p<1\), because then \(K(p)=0\) and
\(F(p)=1\).

If \(0<p<1/3\) and \(F(p)<\epsilon<1\), the fixed-degree limit gives bounded
CG stopping:
\[
T_{\mathrm{CG},n}(\epsilon,p)=O_{\mathbb P}(1).
\]
On the other hand, a small-support argument shows that RCD still needs a
positive fraction of an epoch.  Hence the ratio is bounded away from zero with
high probability.

Fourth, the remaining qualitative cases are handled separately.  When
\(p>1\), the spectrum is summable:
\[
\sum_{j=1}^{\infty}j^{-p}<\infty.
\]
The RCD process has a Hilbert-space random-projection limit, and a fixed
number of RCD updates eventually reduces the source error below the threshold.
Thus
\[
T_{\mathrm{RCD},n}(\epsilon,p;I)=O_{\mathbb P}(1),
\]
and since \(T_{\mathrm{CG},n}(\epsilon,p)\ge1\),
\[
\rho_n(\epsilon,p)=O_{\mathbb P}(n^{-1}).
\]
At the logarithmic boundary \(p=1\), RCD satisfies
\[
T_{\mathrm{RCD},n}(\epsilon,1;I)/n=O_{\mathbb P}(\log n),
\]
while the CG polynomial minimum remains above any fixed \(\epsilon<1\) for
super-logarithmically growing degrees.  This gives
\[
\rho_n(\epsilon,1)\to0.
\]

The critical boundary \(\epsilon=F(p)\) is decided by the first inactive
Schur-complement correction.  In every nonvacuous active band, this correction
gives a positive-probability obstruction to \(\rho_n\to0\).  On the
endpoint/lower-subband critical regimes, the same correction dominates with
probability tending to one, giving the stronger high-probability obstruction.

Finally, the rate bounds use growing-degree CG non-stopping estimates.  For
\(p=1\), if
\[
a_n\to\infty,
\qquad
\log a_n=o(\log n),
\]
then
\[
\mathbb P\!\left(T_{\mathrm{CG},n}(\epsilon,1)\le a_n\right)\to0.
\]
For the \(0<p<1\) zero-ratio regimes, if
\[
b_n\to\infty,
\qquad
b_n\log(1+b_n)=o(\log n),
\]
then
\[
\mathbb P\!\left(T_{\mathrm{CG},n}(\epsilon,p)\le b_n\right)\to0.
\]
The stochastic upper bounds follow from the event inclusion
\[
\left\{
\rho_n(\epsilon,p)>
\frac{B\,\mathrm{budget}_n}{d_n}
\right\}
\subset
\left\{
\frac{T_{\mathrm{RCD},n}(\epsilon,p;I)}{n}>
B\,\mathrm{budget}_n
\right\}
\cup
\left\{
T_{\mathrm{CG},n}(\epsilon,p)\le d_n
\right\}.
\]
Here \(\mathrm{budget}_n=\log n\) and \(d_n=a_n\) for \(p=1\), while
\(\mathrm{budget}_n=1\) and \(d_n=b_n\) for \(0<p<1\).  This proves the rate
bounds stated in the theorem.

Full details are given in Appendix~\ref{app:cg-proof}.

\subsubsection{Trajectory analysis}

\begin{figure}[htbp]
    \centering
    \includegraphics[width=0.92\linewidth]{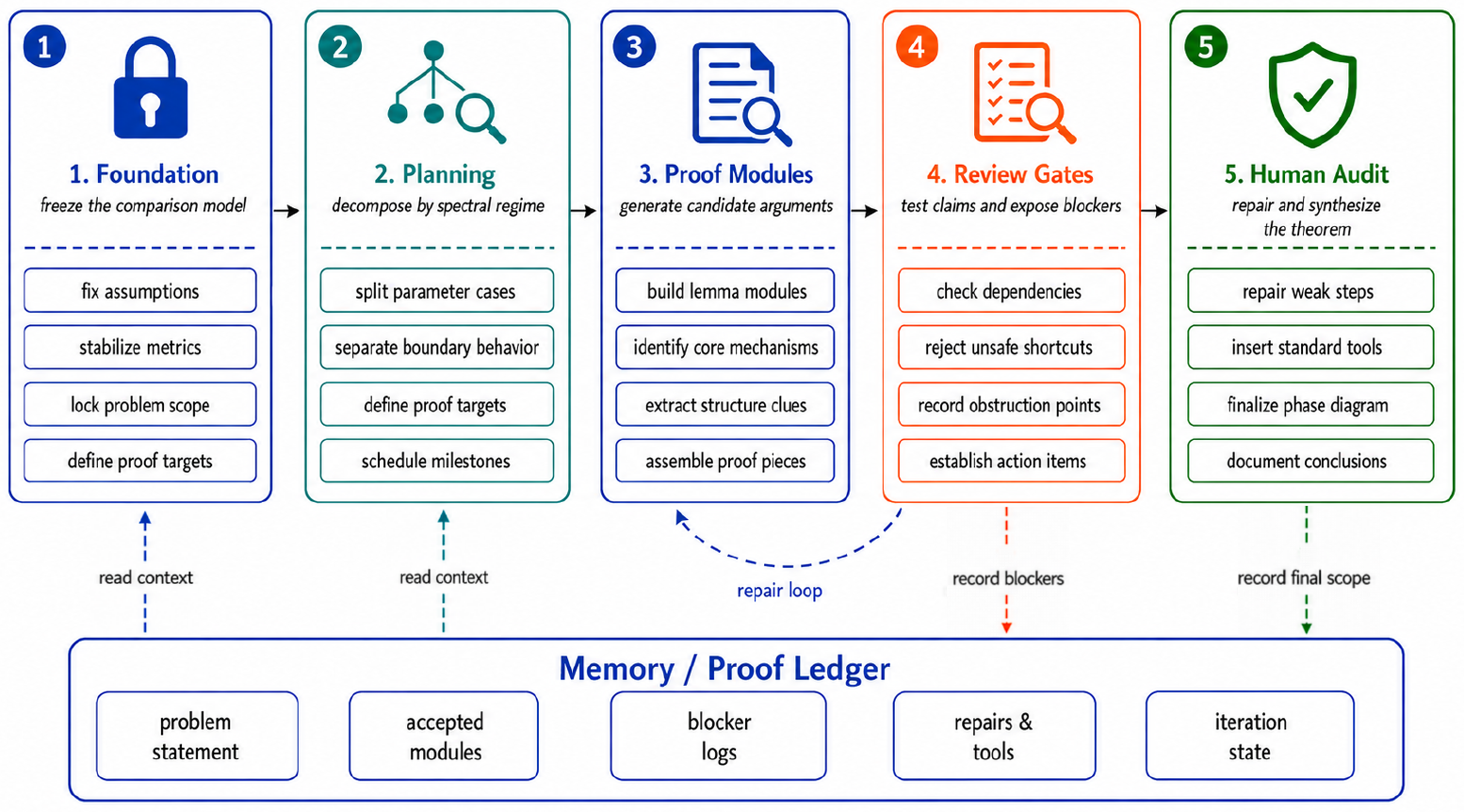}
    \caption{\AIVN{} trajectory for Problem I.}
    \label{fig:cg-rcd-trajectory}
\end{figure}

Figure~\ref{fig:cg-rcd-trajectory} summarizes the agent-collaboration trajectory
that led from the original open problem to the final
fixed-parameter phase diagram.  The figure records the main mathematical roles
played by different agents in the \AIVN{} framework: source auditing and
convention freezing, candidate proof generation, independent review, blocker
identification, proof repair, and final human mathematical audit.

\paragraph{1. Foundation and model freeze.}
The foundation stage fixed the exact comparison model from Problem~2.4 in the
Simons workshop collection: Haar eigenvectors, power-law eigenvalues
\(\lambda_j=j^{-p}\), source right-hand side \(b_n=A_nz_n\), exact
unpreconditioned CG, and source RCD counted in epochs.  This stage also fixed
the comparison ratio \(\rho_n(\varepsilon,p)\) and prevented later agents from
silently changing the solver model, the source distribution, or the unit of
comparison.

\paragraph{2. Orchestrated decomposition.}
The plan agent decomposed the original question into spectral regimes:
the summable row \(p>1\), the logarithmic row \(p=1\), and the nonsummable
range \(0<p<1\).  In the nonsummable range, it further separated the
below-floor, above-floor, and critical-boundary cases.  This orchestration
turned the original open-ended comparison into smaller proof targets with
explicit dependencies and review gates.

\paragraph{3. Candidate proof generation.}
Proof agents generated separate candidate modules for the CG polynomial
representation, the RCD epoch upper bound, the RCD lower gate, the active
moment problem, the \(p=1\) logarithmic row, the \(p>1\) summable row, and the
critical Schur-complement boundary.  These modules produced several positive
mathematical outputs, including the identification of the active degree
\(K(p)\), the floor \(F(p)\), the finite-dimensional residual-polynomial
reduction, and the first-inactive Schur mechanism at the critical threshold.

\paragraph{4. Review gates and blockers.}
Review agents blocked
several overstrong shortcuts: fixed-degree CG non-stopping was not allowed to
imply growing-degree rate bounds; selected-row or proxy estimates were not
automatically upgraded to global statements; and the critical boundary could
not be justified by a compressed ``standard asymptotics'' step.  These blockers
forced the proof to split active-block estimates, Schur-gain estimates,
inverse-coordinate stability, endpoint cases, and growing-degree controls into
separate verifiable units.

\paragraph{5. Repair loop and final audit.}
The final appendix was produced only after these agent-generated modules were
manually audited and repaired.  In particular, informal or invalid arguments
were replaced by standard mathematical tools: matrix Chernoff bounds for random
Gram consistency, a projection-product theorem for the \(p>1\) Hilbert-space
contraction, explicit Schur-complement asymptotics at the critical boundary,
and a separate treatment of the endpoint \(K=0\), i.e. \(p=1/3\).  Thus the
pipeline contributed both constructive proof modules and negative information
about which proof routes were not justified.  

\medskip

The final proof in
Appendix~\ref{app:cg-proof} should therefore be viewed as a human-audited
synthesis of agent-generated and agent-reviewed proof modules. Two lessons from this trajectory are worth making explicit.

\paragraph{Direct GPT-Pro comparison.}
As a comparison point, we also submitted the same CG-versus-RCD question
directly to GPT-Pro in an advanced reasoning mode.\footnote{\url{https://chatgpt.com/s/t_6a19b814f7508191822a0b542bb88347}}
The direct response produced an attractive heuristic picture: it identified
reasonable spectral-filter intuition, proposed polynomial CG scalings, and
suggested that epoch-scaled RCD should often dominate.  However, it did not
close the proof.  In particular, it treated coordinate RCD too much like a
diagonal spectral filter, blurred the distinction between expected, typical,
and high-probability stopping times, and asserted sharp CG rates without the
growing-degree lower bounds needed to justify them.  This comparison clarifies
one role of the \AIVN{} framework: it does not merely ask for a more fluent
answer, but decomposes a plausible heuristic into auditable proof obligations.

\paragraph{Human steering and rate-claim repair.}

A second lesson concerns the role of human steering. The initial agentic proof trajectory focused on the qualitative phase diagram. From a human mathematical perspective, however, such a qualitative convergence statement naturally calls for a more refined analysis of the speed of convergence. The model did not initially make this quantitative question explicit; only after human intervention did the proof search turn toward rate bounds. 

Once this target was made explicit, early
candidate proofs suggested a stronger negative-power rate for
\(\rho_n(\varepsilon,p)\).  The subsequent review stage, again guided by the
human request to check what was actually proved, identified that this rate
claim was not supported by the available estimates.  The fixed-degree CG
argument showed only that $\mathbb P(T_{\mathrm{CG},n}\le d)\to0$ for every fixed \(d\), whereas a negative-power bound for \(\rho_n\) would
require a polynomial lower bound on \(T_{\mathrm{CG},n}\).  Subsequent
discussion and repair replaced this overstrong statement by a conservative
growing-degree non-stopping estimate.  The final theorem therefore states
bounds such as
\[
  \rho_n(\varepsilon,p)=O_{\mathbb P}(1/b_n),
  \qquad
  b_n\to\infty,\quad b_n\log(1+b_n)=o(\log n),
\]
rather than an unsupported fixed negative exponent.  This illustrates that the human intervention still matters: human mathematical judgment helped formulate
the right rate question, while the agentic review and repair loop converted an
overstrong candidate answer into a statement supported by the proof.

%% ======================================================================
\subsection{Problem II: QRCP orthonormal selection}

\paragraph{Background.}

QR factorization with column pivoting (QRCP), introduced in the Householder least-squares setting by Businger and Golub~\cite{BusingerGolub1965}, later became a standard deterministic tool for numerical rank detection, rank-revealing factorizations, and column-subset selection. Businger and Golub's greedy pivot rule selects, at each step, the column with largest residual norm after projection. Subsequent theoretical work clarified the distinction between the existence
of good rank-revealing column permutations and the behavior of particular
pivoting rules.  Hong and Pan~\cite{HongPan1992} gave column permutation existence
results for factorizations with provable singular-value bound, and Osinsky~\cite{Osinsky2025SingleSVD} gave a fairly efficient
$O(nk^2)$ algorithm achieving this bound. The Simons workshop collection~\cite{AmselEtAl2026OpenQuestions} \textit{Problem 4.3} asks whether the classical QRCP enforces a conditioning guarantee in the orthonormal-row setting. Theorem~\ref{thm:qrcp-source-scale-obstruction} answers this
question in the negative.

\medskip
The problem concerns the rank-revealing power of exact QRCP in the orthonormal regime. Let $Q\in\mathbb{R}^{n\times k}$ have orthonormal columns, and apply QRCP to $A=Q^T$. Equivalently, at each step one selects the remaining row label whose column in $A$ has largest squared residual after orthogonal projection away from the span already chosen. If $I_{\mathrm{piv}}=(i_1,\ldots,i_k)$ is the selected ordered set, write
\[
M(Q)=Q[I_{\mathrm{piv}},:],\qquad
\gamma(Q,I_{\mathrm{piv}})=\|M(Q)^{-1}\|_2
\]

when $M(Q)$ is nonsingular.  The open question asks whether the residual-pivot
rule forces $\gamma(Q,I_{\mathrm{piv}})$ to stay under the scale $\sqrt{k(n-k+1)}$.

\medskip
We prove a stronger obstruction: the residual-pivot rule can fail by an
arbitrarily large polynomial factor even when the row energies of $Q$ are
uniformly controlled.

\begin{theorem}[Bounded-coherence obstruction for QRCP]\label{thm:qrcp-source-scale-obstruction}
For every $H_*>1$, $B>0$, $e\ge0$, and $K\ge1$, there exist integers $k\ge K$
and $n>k$, and a matrix $Q\in\mathbb{R}^{n\times k}$ with $Q^TQ=I_k$, such
that QRCP applied to $Q^T$ selects an ordered pivot
set $I_{\mathrm{piv}}$ for which $M(Q)$ is nonsingular,
\[
\mu(Q)=\frac{n}{k}\max_{1\le i\le n}\|Q(i,:)\|_2^2<H_*,
\qquad
\frac{\gamma(Q,I_{\mathrm{piv}})}{\sqrt{k(n-k+1)}}>B k^e .
\]
\end{theorem}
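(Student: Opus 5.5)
The plan is an explicit construction: we plant a Kahan-type triangular block inside $Q$, then pad it with many low-energy rows. The Kahan block forces QRCP to pick the planted rows and makes their inverse huge. The padding restores orthonormality and keeps the coherence bounded.

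\emph{Setup.} Fix $H=\min(H_*,2)/2\cdot$ (any constant in $(1,H_*)$ will do) and let $k$ be large. Take $s=H^{-1/(4k)}$ and $c=\sqrt{1-s^2}$, so $c\asymp\sqrt{\log H/k}$. Let
\[
R=\operatorname{diag}(1,s,\ldots,s^{k-1})\,(I-cN),
\]
where $N$ is the strictly upper-triangular all-ones matrix. This is the Kahan matrix. Its columns all have unit norm, and the QRCP residual of column $j$ after the first $j-1$ pivots equals $s^{j-1}$. To make QRCP select the columns of $R$ in natural order without relying on tie-breaking, replace $s^{j-1}$ by $s^{j-1}(1-\tau)^{j-1}$ for a tiny $\tau>0$. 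The Kahan computation then shows that at every step the remaining Kahan columns have strictly smaller residuals than the current one.

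\emph{Growth of the inverse.} The inverse of the unit upper-triangular factor has entries $c(1+c)^{j-i-1}$. Hence
\[
\|R^{-1}\|_2\ \ge\ c(1+c)^{k-2}\ \ge\ \exp\bigl(\tfrac12\sqrt{k\log H}\bigr)
\]
for large $k$. This is superpolynomial in $k$, while the diagonal factor contributes only $s^{-k}=H^{1/4}$.

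\emph{The matrix $Q$.} Set $Q=\begin{pmatrix}\alpha R^T\\ V\end{pmatrix}$, where $\alpha^2=Hk/n$, so the rows of $\alpha R^T$ (the columns of $\alpha R$) have squared norm at most $\alpha^2$. Orthonormality requires
\[
V^TV=P:=I_k-\alpha^2R^TR .
\]
Since $\|R\|_2\le 1+ck$, the matrix $P$ is positive definite once $n\gg Hk^3$, and $\operatorname{tr}P\le k$.

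To keep rows of $V$ small, write $P=\sum_\ell \sigma_\ell u_\ell u_\ell^T$ and split each term into $m_\ell$ identical rows $\sqrt{\sigma_\ell/m_\ell}\,u_\ell^T$. Every row of $V$ then has squared norm at most a prescribed $\beta^2$, using $n-k\le k/\beta^2+k$ rows. We choose
\[
\beta^2=\tfrac12\alpha^2 s^{2k}(1-\tau)^{2k},
\]
strictly below the smallest Kahan pivot squared. The consistency condition is $n-k\approx k/\beta^2=2n/(Hs^{2k}(1-\tau)^{2k})$. This needs $Hs^{2k}>2$ up to the $\tau$ factor, which is why $s$ is tied to $H$. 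If needed, take $H$ close to $H_*$ and replace the factor $\tfrac12$ by $1-\delta$, or add zero-energy slack by solving for $n$ after fixing $\beta$.

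\emph{QRCP selects the planted rows.} The residual of any $V$-row after any number of steps is at most its norm $\beta$. The $j$-th Kahan column has residual $\alpha s^{j-1}(1-\tau)^{j-1}>\beta$. So by induction QRCP picks the $k$ Kahan columns, in order, with $M(Q)=\alpha R^T$ nonsingular.

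\emph{Coherence.} Every row has squared norm at most $\alpha^2=Hk/n$, so $\mu(Q)\le H<H_*$.

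\emph{The ratio.} We have $\gamma=\|R^{-1}\|_2/\alpha$, which gives
\[
\frac{\gamma}{\sqrt{k(n-k+1)}}\ \ge\ \frac{\|R^{-1}\|_2}{\sqrt{H}\,k}\ \ge\ \frac{e^{\frac12\sqrt{k\log H}}}{\sqrt H\,k}.
\]
This exceeds $Bk^e$ once $k\ge K$ is large enough.

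\emph{Main obstacle.} The delicate step is the bookkeeping that reconciles three demands: $\beta$ must lie below the smallest Kahan pivot $\alpha s^{k-1}$; the number of padding rows, about $k/\beta^2$, must match $n$; and $\alpha^2$ must stay below $H_*k/n$. These force $s^{2k}$ to be bounded below by a constant times $1/H_*$. Consequently $c$ is only of order $k^{-1/2}$, and the proof hinges on checking that $(1+c)^k$ still beats every polynomial in $k$. A secondary technical point is verifying the strict Kahan pivot order under the $\tau$-perturbation, with enough margin that the residuals of the $V$-rows never tie with the Kahan pivots.
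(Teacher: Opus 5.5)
Your argument is correct in substance and takes a different, leaner route than the paper.

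\textbf{How the paper does it.} The paper takes $k=m+1$. It uses an $m\times m$ Kahan-type chain $R_\beta$ with fixed coupling $\beta=\tfrac12$, plus one extra pivot column $(z,\eta_p)$ whose final squared residual $r=d_m^2/16$ lies below the chain's smallest pivot. It reads off $\gamma^2\ge Z/r$ from $R_+(-R_\beta^{-1}z,1)^T=(0,\eta_p)^T$. Because $r<d_m^2$, the padding cannot just be atoms below the smallest pivot spread over all of $\mathbb{R}^k$. Instead the paper:
\begin{itemize}
\item cancels the mixed block with the two columns $X$;
\item fills $I_m$ with frame atoms below $d_m^2$ supported in $\mathbb{R}^m\oplus 0$, which have zero residual once the chain is selected;
\item fills the last diagonal entry with atoms below $r/2$;
\item proves a separate count $r(n-k+1)\le 3k$.
\end{itemize}

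\textbf{How you do it.} You make the whole $k\times k$ selected block a perturbed Kahan matrix and get $\|R^{-1}\|_2\ge c(1+c)^{k-2}$ from the closed-form inverse of $I-cN$. You pad with a single spectral splitting into atoms below the smallest pivot. The denominator bound then comes for free from coherence, since $\alpha\sqrt{kn}=k\sqrt{\mu(Q)}$.

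\textbf{Common core and trade-offs.} The mechanism is the same. Bounded coherence forces the ratio of largest to smallest squared pivot to stay bounded: the paper's $e^{2\eta}$, your $\tilde s^{-2(k-1)}$ with $\tilde s=s(1-\tau)$. So the coupling is $\Theta(k^{-1/2})$ and the growth is $e^{\Theta(\sqrt k)}$. The paper's fixed $\beta<1$ gives strict pivot inequalities with no extra tuning parameter. Your version avoids the cancellation block, the separate last-coordinate filler, and the denominator lemma.

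\textbf{Bookkeeping to repair.}
\begin{itemize}
\item Since $Q^TQ=\alpha^2RR^T+V^TV$, you need $P=I_k-\alpha^2RR^T$, not $I_k-\alpha^2R^TR$.
\item The choice $H=\min(H_*,2)/2$ gives $H\le1$; just fix any $H\in(1,H_*)$.
\item More importantly, the factor $\tfrac12$ in $\beta^2$ is not viable. The padding needs at least $\operatorname{tr}P/\beta^2$ rows. With $s^{2k}=H^{-1/2}$ this forces $\mu(Q)\ge 2(1-\alpha^2)\sqrt H$, which fails for large $k$ whenever $H_*<2$. So your $(1-\delta)$ fallback is necessary, not optional.
\item Avoid defining $\alpha$ through $n$. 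Take $\alpha^2<1/k$ free; then $P\succeq 0$ because $\|R\|_2^2\le\|R\|_F^2\le k$. Set $\beta^2=(1-\delta)\alpha^2\tilde s^{2(k-1)}$ and $n=k+\sum_\ell\lceil\sigma_\ell/\beta^2\rceil$. Then check
\[
\mu(Q)=\frac{n\alpha^2}{k}\le 2\alpha^2+\frac{1}{(1-\delta)\tilde s^{2(k-1)}}\le 2\alpha^2+\frac{\sqrt H}{(1-\delta)(1-\tau)^{2k}}<H
\]
for small $\delta$, $\tau=k^{-2}$, and $k$ large.
\end{itemize}
Your ratio estimate only uses $n\alpha^2\le Hk$, so the rest goes through as written. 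The strict Kahan pivot order also holds as claimed. Since $c^2+\tilde s^2<1$, the residual of column $q>\ell+1$ after $\ell$ pivots is $\alpha^2\tilde s^{2\ell}\bigl(\lambda(1-x)+x\bigr)$, with $\lambda=c^2/(1-\tilde s^2)<1$ and $x=\tilde s^{2(q-\ell-1)}<1$. This is strictly below the current pivot $\alpha^2\tilde s^{2\ell}$.
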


Here \(\mu(Q)\) is the standard coherence of the subspace
\(\operatorname{range}(Q)\), i.e. the maximum row leverage score normalized by
the average leverage \(k/n\).  The condition \(\mu(Q)<H_*\) therefore places the
example in a bounded-coherence regime.  This rules out the explanation that the
QRCP failure is caused by a few high-leverage rows.
Such coherence conditions are ubiquitous in matrix completion and randomized
linear algebra~\cite{candes2012exact}.

\subsubsection{Proof sketch}

We first prescribe the block that QRCP will select, make this block badly conditioned, and only afterwards add extra columns to restore orthonormality.  Choose $m$ large, set $k=m+1$, and build
\[
A\in\mathbb{R}^{k\times n},
\qquad Q=A^T.
\]
The columns of $A$ are arranged in four blocks:
\[
A=\begin{pmatrix}R_+&X&F&G\end{pmatrix}.
\]
The first block is the selected block
\[
R_+=\begin{pmatrix}R_\beta&z\\0&\eta_p\end{pmatrix}.
\]
Here $R_\beta$ is an explicit upper-triangular chain.  Let
\[
\epsilon=e^{-\eta/k},
\qquad
s=(1-\epsilon^2)^{1/2},
\qquad
d_t=\frac1k\epsilon^{t-1}\quad(1\le t\le m),
\]
and choose signs $\sigma_1,\ldots,\sigma_m\in\{\pm1\}$.  Then
\[
(R_\beta)_{t,q}=
\begin{cases}
 d_t, & q=t,\\
 -\frac12\sigma_t\sigma_qs\,d_t, & t<q,\\
 0, & t>q.
\end{cases}
\]
The final selected column is $\begin{pmatrix}z\\ \eta_p\end{pmatrix}$, where
\[
z_t=\frac14\sigma_t s d_t\quad(t<m),
\qquad
z_m=\frac14\sigma_m\left(1-\frac1{16}\right)^{1/2}d_m,
\qquad
\eta_p=\frac14 d_m.
\]

The signs are chosen so that the entries of $R_\beta^{-1}z$ reinforce rather than cancel.  This triangular back-substitution is the source of the large inverse norm of the selected block.

The other three blocks only complete the identity $AA^T=I_k$, while keeping every added column too small to be selected by QRCP.  The final selected column creates a mixed term between the first $m$ coordinates and the last coordinate.  We cancel this term by adding two cross columns

\[
X=
\begin{pmatrix}
\theta_1z&\theta_2z\\
-b_1\eta_p&-b_2\eta_p
\end{pmatrix},
\qquad
\theta_1b_1+\theta_2b_2=1.
\]

The mixed block from $X X^T$ cancels exactly the mixed block from $R_+R_+^T$.

After this cancellation, the remaining missing matrix in the first $m$ coordinates is
\[
M_{\mathrm{rem}}
=I_m-R_\beta R_\beta^T-
\left(1+\theta_1^2+\theta_2^2\right)zz^T.
\]
For large $m$ this matrix is positive definite.  We split it into many small rank-one pieces,
\[
M_{\mathrm{rem}}=\sum_{h=1}^{N_{\mathrm{frame}}}W_hu_hu_h^T,
\qquad
0<W_h<d_m^2,
\qquad
\|u_h\|_2=1,
\]
and put these pieces into the pure frame block
\[
F=
\begin{pmatrix}
\sqrt{W_1}u_1&\sqrt{W_2}u_2&\cdots&\sqrt{W_{N_{\mathrm{frame}}}}u_{N_{\mathrm{frame}}}\\[1mm]
0&0&\cdots&0
\end{pmatrix}.
\]
Thus $F$ fills exactly the missing first-coordinate identity and contributes nothing to the last coordinate.  The bound $W_h<d_m^2$ is also the pivot-control condition: before the final step, every frame column has residual below the next intended selected residual, and after the first $m$ selected columns it has zero residual.

Only the last diagonal entry remains. We split the remaining scalar into positive pieces $e_1,\ldots,e_{N_{\mathrm{res}}}$ and set
\[
G=
\begin{pmatrix}
0&0&\cdots&0\\[1mm]
\sqrt{e_1}&\sqrt{e_2}&\cdots&\sqrt{e_{N_{\mathrm{res}}}}
\end{pmatrix}.
\]
These residual-only columns complete the lower-right entry of $AA^T$.

Putting the blocks together, $X$ cancels the mixed term, $F$ restores the first-coordinate identity, and $G$ restores the last diagonal entry.  Hence
\[
AA^T=I_k,
\qquad Q^TQ=I_k.
\]
At the same time, the small-column bounds force QRCP to select the prescribed columns corresponding to $R_+$, namely $I_{\mathrm{piv}}=(1,2,\ldots,m,m+1)$.

It remains to quantify the bad conditioning of the selected block.  Let $Z=\|R_\beta^{-1}z\|_2^2$.

Since
\[
R_+\begin{pmatrix}-R_\beta^{-1}z\\1\end{pmatrix}
=
\begin{pmatrix}0\\\eta_p\end{pmatrix},
\]
the selected block has a very small singular direction, and therefore
\[
\gamma(Q,I_{\mathrm{piv}})^2\ge \frac{Z}{\eta_p^2}.
\]
The chosen sign pattern in the triangular chain makes $Z$ grow exponentially which is faster than any fixed polynomial in $k$. The added columns increase the total row count, but only by the amount needed to complete the identity; this does not erase the growth coming from $Z$. Consequently, for sufficiently large $k$,
\[
\frac{\gamma(Q,I_{\mathrm{piv}})}{\sqrt{k(n-k+1)}}>Bk^e.
\]

Full details are given in Appendix~\ref{app:qrcp-proof}, where the coherence
bound \(\mu(Q)<H_*\) is also established.  The proof has also been verified in
Lean, with the aid of the autoformalization agent
\textit{Archon}\footnote{\url{https://github.com/frenzymath/Archon}}.

\medskip
Both results were discovered and verified through the \AIVN{} framework described in Section~\ref{sec:framework}.

\subsubsection{Trajectory analysis}

\begin{figure}[htbp]
    \centering
    \includegraphics[width=0.92\linewidth]{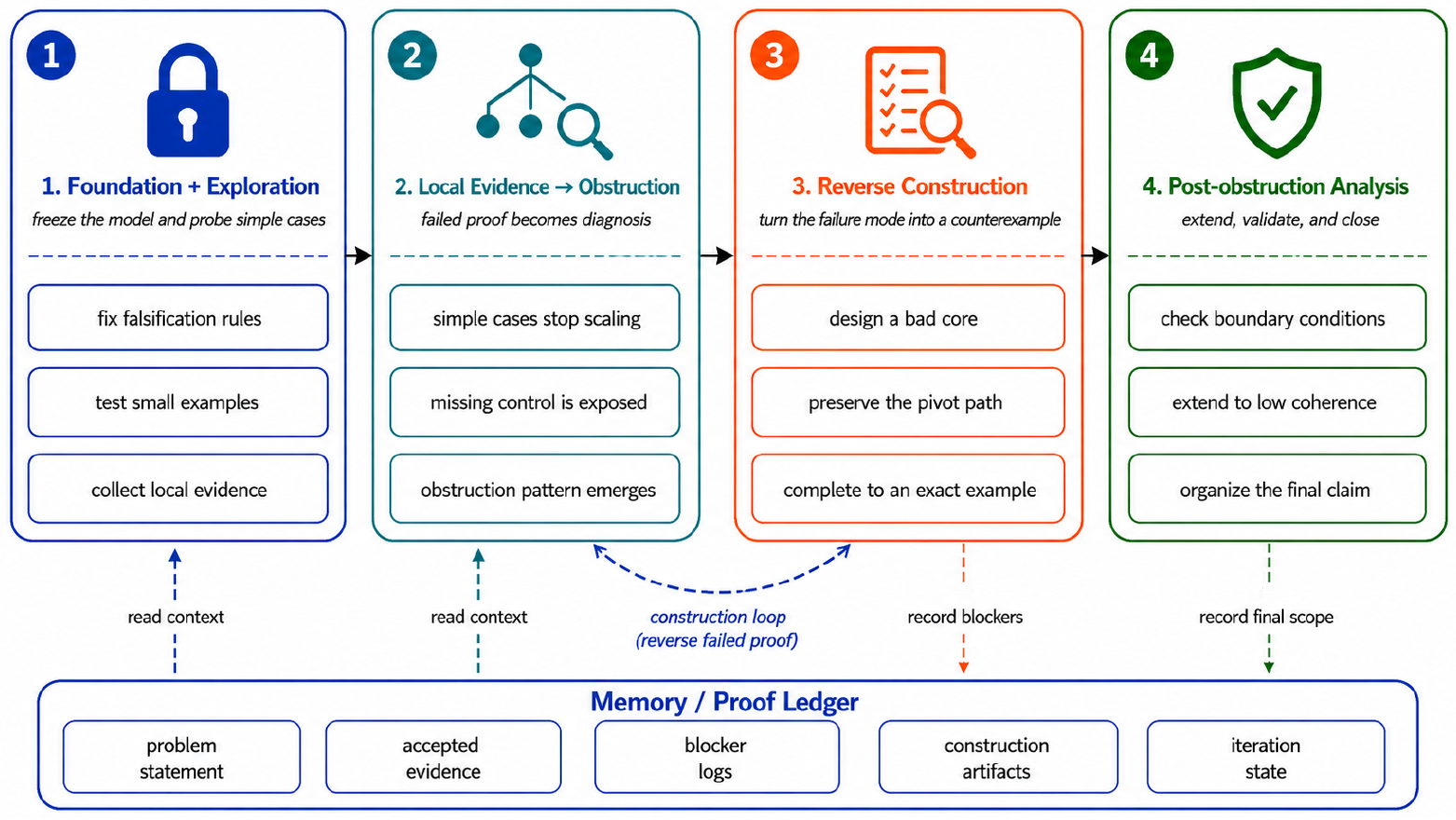}
    \caption{\AIVN{} trajectory for Problem II.}
    \label{fig:qrcp-trajectory}
\end{figure}

In this problem, Iteris converts failed proof attempts into targeted structural obstruction insights to construct a exact counterexample and automatically extends to the low coherence setting.

\paragraph{1. Model freeze and proof-oriented exploration.}
\AIVN{} first made the statement stable enough to prove or disprove:
foundation agents fixed the QRCP convention, tie rule, quality measure, and
falsification standard.  With this target frozen, proof agents attacked simple
cases ($k \le 2$), experiment agents ran bounded searches and replays, and review agents
kept the evidence scoped.

\paragraph{2. From partial positive evidence to a localized obstruction.}
The next phase was driven by the failure of these simple proof patterns to
scale ($k\ge 3$).  Small cases gave positive evidence, but the first genuinely nontrivial
setting ($k=3$) exposed a missing control: local residual norm and orthogonality do not by themselves control the cross interaction among the QRCP-selected rows. The failed proof attempt therefore
became diagnostic. It identified the structure that a counterexample
would need to exploit.

\paragraph{3. Reversing the failed proof into an exact counterexample.}
After this vulnerable structure was identified, the system shifted to
reverse construction. It designed a selected core satisfying the local
pivot conditions but violating the desired conditioning bound. 
Foundation work then turned the remaining issue into a completion
problem: add unused rows to satisfy global orthogonality while preserving the
intended QRCP pivot path. The counterexample thus
came from reversing a failed proof structure, not from blind numerical search.

\paragraph{4. Post-obstruction Analysis.}
After identifying the obstruction, the system did not limit its analysis to this single case. 
The foundation and proof agents evaluated whether the mechanism could satisfy stricter regularity requirements. This process automatically extends the theorem to the low coherence construction of the obstruction. 

\medskip

The framework contribution is the controlled handoff between modes.  File
memory kept the statement, evidence, and route status stable.  Foundation
froze definitions and turned proof gaps into construction targets; experiment
supplied diagnostics; proof exposed the obstruction; review controlled claim
promotion and route retirement.  The QRCP case is therefore a trajectory from
proof attempt, to obstruction insight and exact counterexample.

\paragraph{Direct GPT-Pro comparison.}
The direct response from GPT-Pro\footnote{\url{https://chatgpt.com/s/t_6a1bf4501af8819191189ec9676c058e}} produced a meaningful high-level counterexample sketch. It correctly identified the relevant QRCP failure mechanism: a triangular block embedded through an orthogonal completion, can force the selected submatrix to be badly conditioned. The limitation was instead paper-quality closure. Several essential steps were treated as plausible rather than discharged as explicit, auditable obligations. The baseline also did not push the construction through boundary checks such as coherence and leverage constraints. Thus the comparison should not be read as GPT-Pro missing the main idea; rather, it clarifies one role of \AIVN{}: turning a plausible counterexample mechanism into a closed, robust, and auditable construction.

\paragraph{Human audit and semantic alignment.}
The human role was primarily audit and organization. \AIVN{} supplied the
counterexample mechanism, intermediate construction artifacts, and boundary
tests.  Human work curated these outputs into a paper-level narrative and
checked semantic consistency.  After the formalization agent \textit{Archon} completed
Lean-level verification, the main human task was to align formal theorem statements and paper claims so that they referred to the same object and scope.
\section{Discussion}\label{sec:discussion}

Agentic research loops require reliable feedback, and computational mathematics is well suited because it offers both experimental and proof-based feedback. In both case studies, \AIVN{} used numerical evidence to narrow candidate proof routes and mathematical proof to confirm or reject them.  For instance, in the CG problem, numerical sweeps identified the Hankel floor as the key structural feature; in the QRCP construction, experiments isolated back-substitution pressure as the source-scale obstruction.  Neither type of evidence alone was sufficient---numerical results guided search, and proofs settled claims.

The results produced by \AIVN{} still required expert
auditing to distinguish correct arguments from unsupported steps, and they
required human reorganization to turn long and circuitous derivations
into readable mathematical proofs. The two case studies illustrate different kinds of human intervention. In the CG case (Problem I), \AIVN{} identified the main phase structure and
generated much of the proof skeleton. However, one branch of the rate analysis
in the \(p<1\) regime used a stronger assumption and led to an incorrect bound in that branch. Human inspection
detected the mismatch, and the argument was repaired through subsequent
human--AI interaction. In the
QRCP case (Problem II), the construction and obstruction identified by
\AIVN{} were correct, but the proof
was indirect and difficult to read. The final presentation therefore involved
substantial human restructuring of the argument. The resulting construction
has a sorry-free Lean~4 formalization of the core statement, but this does not
eliminate the need to check that the formal statement matches the informal
mathematical theorem. 

Thus, \AIVN{} shifts rather than removes human effort. In these case studies,
the system was most valuable in broadening the search space, finding candidate
structures, and producing draft arguments for human inspection. The human role
remained essential for verifying assumptions, detecting gaps, repairing
incorrect branches, and producing the final exposition. We therefore view
\AIVN{} as a tool for human-verified agentic research, not as a replacement for
mathematical judgment.

\section*{Acknowledgements}

The authors would like to warmly thank Zhangsong Li for carefully verifying the proof of CG-versus-sketching problem in Appendix~\ref{app:cg-proof}.

This work is supported in part by the Fundamental and Interdisciplinary Disciplines Breakthrough Plan of the Ministry of Education of China (JYB2025XDXM113), the National Key R\&D Program of China grant 2024YFA1014000, and the New Cornerstone Investigator Program.

\clearpage

\bibliographystyle{plainnat}
\bibliography{refs}

@misc{AbouzaidEtAl2026FirstProof,
      title={First Proof}, 
      author={Mohammed Abouzaid and Andrew J. Blumberg and Martin Hairer and Joe Kileel and Tamara G. Kolda and Paul D. Nelson and Daniel Spielman and Nikhil Srivastava and Rachel Ward and Shmuel Weinberger and Lauren Williams},
      year={2026},
      eprint={2602.05192},
      archivePrefix={arXiv},
      primaryClass={cs.AI},
      url={https://arxiv.org/abs/2602.05192}, 
}

@article{AmemiyaAndo1965,
  title={Convergence of random products of contractions in Hilbert space},
  author={Amemiya, Ichiro and Ando, Tadao},
  journal={Acta Scientiarum Mathematicarum},
  volume={26},
  number={3--4},
  pages={239--244},
  year={1965}
}

@misc{AmselEtAl2026OpenQuestions,
      title={Linear Systems and Eigenvalue Problems: Open Questions from a Simons Workshop}, 
      author={Noah Amsel and Yves Baumann and Paul Beckman and Peter Bürgisser and Chris Camaño and Tyler Chen and Edmond Chow and Anil Damle and Michal Derezinski and Mark Embree and Ethan N. Epperly and Robert Falgout and Mark Fornace and Anne Greenbaum and Chen Greif and Diana Halikias and Zhen Huang and Elias Jarlebring and Yiannis Koutis and Daniel Kressner and Rasmus Kyng and Jörg Liesen and Jackie Lok and Raphael A. Meyer and Yuji Nakatsukasa and Kate Pearce and Richard Peng and David Persson and Eliza Rebrova and Ryan Schneider and Rikhav Shah and Edgar Solomonik and Nikhil Srivastava and Alex Townsend and Robert J. Webber and Jess Williams},
      year={2026},
      eprint={2602.05394},
      archivePrefix={arXiv},
      primaryClass={math.NA},
      url={https://arxiv.org/abs/2602.05394}, 
}

@article{Boiko2024Coscientist, title={Autonomous chemical research with large language models}, volume={624}, ISSN={1476-4687}, url={http://dx.doi.org/10.1038/s41586-023-06792-0}, DOI={10.1038/s41586-023-06792-0}, number={7992}, journal={Nature}, publisher={Springer Science and Business Media LLC}, author={Boiko, Daniil A. and MacKnight, Robert and Kline, Ben and Gomes, Gabe}, year={2023}, month=Dec, pages={570–578} }

@article{BusingerGolub1965, title={Linear least squares solutions by householder transformations}, volume={7}, ISSN={0945-3245}, url={http://dx.doi.org/10.1007/bf01436084}, DOI={10.1007/bf01436084}, number={3}, journal={Numerische Mathematik}, publisher={Springer Science and Business Media LLC}, author={Businger, Peter and Golub, Gene H.}, year={1965}, month=June, pages={269–276} }

@article{candes2012exact, title={Exact Matrix Completion via Convex Optimization}, volume={9}, ISSN={1615-3383}, url={http://dx.doi.org/10.1007/s10208-009-9045-5}, DOI={10.1007/s10208-009-9045-5}, number={6}, journal={Foundations of Computational Mathematics}, publisher={Springer Science and Business Media LLC}, author={Candès, Emmanuel J. and Recht, Benjamin}, year={2009}, month=Apr, pages={717–772} }

@article{comanici2025gemini,
	title = {Gemini 2.5: Pushing the Frontier with Advanced Reasoning,
	         Multimodality, Long Context, and Next Generation Agentic
	         Capabilities},
	author = {Comanici, Gheorghe and Bieber, Eric and Schaekermann, Mike and
	          Pasupat, Ice and Sachdeva, Noveen and Dhillon, Inderjit and
	          Blistein, Marcel and Ram, Ori and Zhang, Dan and Rosen, Evan
	          and others},
	journal = {arXiv preprint arXiv:2507.06261},
	year = {2025},
}

@article{Davies2021Guiding, title={Advancing mathematics by guiding human intuition with AI}, volume={600}, ISSN={1476-4687}, url={http://dx.doi.org/10.1038/s41586-021-04086-x}, DOI={10.1038/s41586-021-04086-x}, number={7887}, journal={Nature}, publisher={Springer Science and Business Media LLC}, author={Davies, Alex and Veličković, Petar and Buesing, Lars and Blackwell, Sam and Zheng, Daniel and Tomašev, Nenad and Tanburn, Richard and Battaglia, Peter and Blundell, Charles and Juhász, András and Lackenby, Marc and Williamson, Geordie and Hassabis, Demis and Kohli, Pushmeet}, year={2021}, month=Dec, pages={70–74} }

@article{DerezinskiRebrova2024, title={Sharp Analysis of Sketch-and-Project Methods via a Connection to Randomized Singular Value Decomposition}, volume={6}, ISSN={2577-0187}, url={http://dx.doi.org/10.1137/23m1545537}, DOI={10.1137/23m1545537}, number={1}, journal={SIAM Journal on Mathematics of Data Science}, publisher={Society for Industrial & Applied Mathematics (SIAM)}, author={Dereziński, Michał and Rebrova, Elizaveta}, year={2024}, month=Feb, pages={127–153} }

@misc{FengEtAl2026AutonomousMath,
      title={Towards Autonomous Mathematics Research}, 
      author={Tony Feng and Trieu H. Trinh and Garrett Bingham and Dawsen Hwang and Yuri Chervonyi and Junehyuk Jung and Joonkyung Lee and Carlo Pagano and Sang-hyun Kim and Federico Pasqualotto and Sergei Gukov and Jonathan N. Lee and Junsu Kim and Kaiying Hou and Golnaz Ghiasi and Yi Tay and YaGuang Li and Chenkai Kuang and Yuan Liu and Hanzhao Lin and Evan Zheran Liu and Nigamaa Nayakanti and Xiaomeng Yang and Heng-Tze Cheng and Demis Hassabis and Koray Kavukcuoglu and Quoc V. Le and Thang Luong},
      year={2026},
      eprint={2602.10177},
      archivePrefix={arXiv},
      primaryClass={cs.LG},
      url={https://arxiv.org/abs/2602.10177}, 
}

@article{GowerRichtarik2015, title={Randomized Iterative Methods for Linear Systems}, volume={36}, ISSN={1095-7162}, url={http://dx.doi.org/10.1137/15m1025487}, DOI={10.1137/15m1025487}, number={4}, journal={SIAM Journal on Matrix Analysis and Applications}, publisher={Society for Industrial & Applied Mathematics (SIAM)}, author={Gower, Robert M. and Richtárik, Peter}, year={2015}, month=Jan, pages={1660–1690} }

@article{guo2025deepseek,
	title = {DeepSeek-R1 Incentivizes Reasoning in LLMs Through
	         Reinforcement Learning},
	author = {Guo, Daya and Yang, Dejian and Zhang, Haowei and Song, Junxiao
	          and Wang, Peiyi and Zhu, Qihao and Xu, Runxin and Zhang, Ruoyu
	          and Ma, Shirong and Bi, Xiao and others},
	journal = {Nature},
	volume = {645},
	number = {8081},
	pages = {633--638},
	year = {2025},
	publisher = {Nature Publishing Group UK London},
}

@article{HestenesStiefel1952, title={Methods of conjugate gradients for solving linear systems}, volume={49}, ISSN={0091-0635}, url={http://dx.doi.org/10.6028/jres.049.044}, DOI={10.6028/jres.049.044}, number={6}, journal={Journal of Research of the National Bureau of Standards}, publisher={National Institute of Standards and Technology (NIST)}, author={Hestenes, M.R. and Stiefel, E.}, year={1952}, month=Dec, pages={409} }

@article{HongPan1992, title={Rank-Revealing QR Factorizations and the Singular Value Decomposition}, volume={58}, ISSN={0025-5718}, url={http://dx.doi.org/10.2307/2153029}, DOI={10.2307/2153029}, number={197}, journal={Mathematics of Computation}, publisher={JSTOR}, author={Hong, Y. P. and Pan, C.-T.}, year={1992}, month=Jan, pages={213} }

@misc{JuEtAl2026ConjectureResolution,
      title={Automated Conjecture Resolution with Formal Verification}, 
      author={Haocheng Ju and Guoxiong Gao and Jiedong Jiang and Bin Wu and Zeming Sun and Leheng Chen and Yutong Wang and Yuefeng Wang and Zichen Wang and Wanyi He and Peihao Wu and Liang Xiao and Ruochuan Liu and Bryan Dai and Bin Dong},
      year={2026},
      eprint={2604.03789},
      archivePrefix={arXiv},
      primaryClass={cs.LG},
      url={https://arxiv.org/abs/2604.03789}, 
}

@article{LeventhalLewis2010, title={Randomized Methods for Linear Constraints: Convergence Rates and Conditioning}, volume={35}, ISSN={1526-5471}, url={http://dx.doi.org/10.1287/moor.1100.0456}, DOI={10.1287/moor.1100.0456}, number={3}, journal={Mathematics of Operations Research}, publisher={Institute for Operations Research and the Management Sciences (INFORMS)}, author={Leventhal, D. and Lewis, A. S.}, year={2010}, month=Aug, pages={641–654} }

@misc{LokRebrova2025,
      title={Subspace-constrained randomized coordinate descent for linear systems with good low-rank matrix approximations}, 
      author={Jackie Lok and Elizaveta Rebrova},
      year={2026},
      eprint={2506.09394},
      archivePrefix={arXiv},
      primaryClass={math.NA},
      url={https://arxiv.org/abs/2506.09394}, 
}

@article{LuEtAl2026AIScientist, title={Towards end-to-end automation of AI research}, volume={651}, ISSN={1476-4687}, url={http://dx.doi.org/10.1038/s41586-026-10265-5}, DOI={10.1038/s41586-026-10265-5}, number={8107}, journal={Nature}, publisher={Springer Science and Business Media LLC}, author={Lu, Chris and Lu, Cong and Lange, Robert Tjarko and Yamada, Yutaro and Hu, Shengran and Foerster, Jakob and Ha, David and Clune, Jeff}, year={2026}, month=Mar, pages={914–919} }

@article{Nesterov2012, title={Efficiency of Coordinate Descent Methods on Huge-Scale Optimization Problems}, volume={22}, ISSN={1095-7189}, url={http://dx.doi.org/10.1137/100802001}, DOI={10.1137/100802001}, number={2}, journal={SIAM Journal on Optimization}, publisher={Society for Industrial & Applied Mathematics (SIAM)}, author={Nesterov, Yu.}, year={2012}, month=Jan, pages={341–362} }

@misc{NovikovEtAl2025AlphaEvolve,
      title={AlphaEvolve: A coding agent for scientific and algorithmic discovery}, 
      author={Alexander Novikov and Ngân Vũ and Marvin Eisenberger and Emilien Dupont and Po-Sen Huang and Adam Zsolt Wagner and Sergey Shirobokov and Borislav Kozlovskii and Francisco J. R. Ruiz and Abbas Mehrabian and M. Pawan Kumar and Abigail See and Swarat Chaudhuri and George Holland and Alex Davies and Sebastian Nowozin and Pushmeet Kohli and Matej Balog},
      year={2025},
      eprint={2506.13131},
      archivePrefix={arXiv},
      primaryClass={cs.AI},
      url={https://arxiv.org/abs/2506.13131}, 
}

@article{Osinsky2025SingleSVD, title={Close to optimal column approximation using a single SVD}, volume={725}, ISSN={0024-3795}, url={http://dx.doi.org/10.1016/j.laa.2025.07.016}, DOI={10.1016/j.laa.2025.07.016}, journal={Linear Algebra and its Applications}, publisher={Elsevier BV}, author={Osinsky, A.I.}, year={2025}, month=Nov, pages={359–377} }

@article{RomeraParedes2024FunSearch, title={Mathematical discoveries from program search with large language models}, volume={625}, ISSN={1476-4687}, url={http://dx.doi.org/10.1038/s41586-023-06924-6}, DOI={10.1038/s41586-023-06924-6}, number={7995}, journal={Nature}, publisher={Springer Science and Business Media LLC}, author={Romera-Paredes, Bernardino and Barekatain, Mohammadamin and Novikov, Alexander and Balog, Matej and Kumar, M. Pawan and Dupont, Emilien and Ruiz, Francisco J. R. and Ellenberg, Jordan S. and Wang, Pengming and Fawzi, Omar and Kohli, Pushmeet and Fawzi, Alhussein}, year={2023}, month=Dec, pages={468–475} }

@book{Saad2003, title={Iterative Methods for Sparse Linear Systems}, ISBN={9780898718003}, url={http://dx.doi.org/10.1137/1.9780898718003}, DOI={10.1137/1.9780898718003}, publisher={Society for Industrial and Applied Mathematics}, author={Saad, Yousef}, year={2003}, month=Jan }

@article{yang2025qwen3,
  title={Qwen3 technical report},
  author={Yang, An and Li, Anfeng and Yang, Baosong and Zhang, Beichen and Hui, Binyuan and Zheng, Bo and Yu, Bowen and Gao, Chang and Huang, Chengen and Lv, Chenxu and others},
  journal={arXiv preprint arXiv:2505.09388},
  year={2025}
}

@misc{YaoEtAl2023ReAct,
      title={ReAct: Synergizing Reasoning and Acting in Language Models}, 
      author={Shunyu Yao and Jeffrey Zhao and Dian Yu and Nan Du and Izhak Shafran and Karthik Narasimhan and Yuan Cao},
      year={2023},
      eprint={2210.03629},
      archivePrefix={arXiv},
      primaryClass={cs.CL},
      url={https://arxiv.org/abs/2210.03629}, 
}

@misc{ZhengEtAl2026AICoMathematician,
      title={AI co-mathematician: Accelerating mathematicians with agentic AI}, 
      author={Daniel Zheng and Ingrid von Glehn and Yori Zwols and Iuliya Beloshapka and Lars Buesing and Daniel M. Roy and Martin Wattenberg and Bogdan Georgiev and Tatiana Schmidt and Andrew Cowie and Fernanda Viegas and Dimitri Kanevsky and Vineet Kahlon and Hartmut Maennel and Sophia Alj and George Holland and Alex Davies and Pushmeet Kohli},
      year={2026},
      eprint={2605.06651},
      archivePrefix={arXiv},
      primaryClass={cs.AI},
      url={https://arxiv.org/abs/2605.06651}, 
}

\clearpage

\beginappendix
\section{Proof details for CG versus sketching}
\label{app:cg-proof}

This appendix proves Theorem~\ref{thm:cg-power-law-phase}.  The proof is
fixed-parameter throughout: \(p\) and \(\varepsilon\) are fixed before
\(n\to\infty\).  The rate bounds proved below are conservative upper bounds;
no matching lower bounds or sharpness claims are made.

\subsection{Finite-dimensional reductions}

Let
\[
  A_n=U_n\operatorname{diag}(1^{-p},\ldots,n^{-p})U_n^*,
  \qquad
  b_n=A_nz_n,
  \qquad
  x_{*,n}=z_n,
\]
and let \(\alpha=U_n^*z_n\).  For \(d\ge0\), set
\[
  \mathcal P_d^1=\{q\in\mathbb R[\xi]:\deg q\le d,\ q(0)=1\}
\]
and
\[
  M_{d,n}(p)=
  \min_{q\in\mathcal P_d^1}
  \frac{\sum_{j=1}^n j^{-p}|q(j^{-p})|^2|\alpha_j|^2}
       {\sum_{j=1}^n j^{-p}|\alpha_j|^2}.
\]

\begin{proposition}[CG polynomial representation]
\label{prop:cg-poly-app}
Exact CG from \(x_0=0\) satisfies
\[
  T_{\mathrm{CG},n}(\varepsilon,p)
  =
  \min\{d\ge0:M_{d,n}(p)\le\varepsilon\}.
\]
In particular \(M_{0,n}(p)=1\), so \(T_{\mathrm{CG},n}(\varepsilon,p)\ge1\)
for \(\varepsilon\in(0,1)\).
\end{proposition}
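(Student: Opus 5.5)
The plan is to use the standard Krylov-optimality characterization of exact CG and then diagonalize in the eigenbasis of \(A_n\). Since \(x_0=0\), the \(d\)-th CG iterate \(x_d^{\mathrm{CG}}\) is the unique minimizer of \(\|x-x_{*,n}\|_{A_n}\) over the Krylov space \(\mathcal K_d(A_n,b_n)=\operatorname{span}\{b_n,A_nb_n,\ldots,A_n^{d-1}b_n\}\). This holds as long as the iteration has not terminated; after termination the error is zero and the same formula holds trivially. Writing \(x=s(A_n)b_n\) with \(\deg s\le d-1\) and using \(b_n=A_nx_{*,n}\), the error is \(x-x_{*,n}=-q(A_n)x_{*,n}\) with \(q(\xi)=1-\xi s(\xi)\in\mathcal P_d^1\). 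Conversely, every \(q\in\mathcal P_d^1\) arises in this way. Hence
\[
  \|x_d^{\mathrm{CG}}-x_{*,n}\|_{A_n}^2=\min_{q\in\mathcal P_d^1}\|q(A_n)z_n\|_{A_n}^2 .
\]

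Next I would substitute \(A_n=U_n\Lambda_nU_n^*\) and \(\alpha=U_n^*z_n\). This gives \(\|q(A_n)z_n\|_{A_n}^2=\sum_j j^{-p}|q(j^{-p})|^2|\alpha_j|^2\) and \(\|x_0-x_{*,n}\|_{A_n}^2=\|z_n\|_{A_n}^2=\sum_j j^{-p}|\alpha_j|^2\). The latter is almost surely positive, since \(z_n\ne0\) and \(A_n\succ0\). Therefore the relative squared error at step \(d\) equals exactly \(M_{d,n}(p)\).

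Two technical points need care. First, \(\mathcal P_d^1\) is defined with real coefficients, while the Krylov minimization is over complex polynomials. Because the eigenvalues \(j^{-p}\) are real, \(|q(\lambda)|^2\ge(\operatorname{Re}q(\lambda))^2\), and the real part \(\operatorname{Re}q\) (taken coefficientwise) is again a real polynomial of degree at most \(d\) with value \(1\) at \(0\). So the complex minimum equals the real minimum. Second, the minimum is attained, since it is a finite-dimensional convex quadratic minimization over an affine set.

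With this identification, \(T_{\mathrm{CG},n}\) is by definition the first \(d\) at which \(M_{d,n}(p)\le\varepsilon\). Finally, \(\mathcal P_0^1=\{1\}\) gives \(M_{0,n}(p)=1>\varepsilon\), so \(T_{\mathrm{CG},n}\ge1\). No step here is a real obstacle; the only point requiring care is the complex-versus-real coefficient reduction together with the degenerate early-termination and almost-sure-positivity issues.
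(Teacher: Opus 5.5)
Your proof is correct and follows the paper's route: Krylov optimality of exact CG, writing the error as \(q(A_n)z_n\) with \(q(0)=1\) and \(\deg q\le d\), and then diagonalizing in the eigenbasis of \(A_n\). Your extra points are all valid details the paper leaves implicit: the reduction from complex to real coefficients via \(\operatorname{Re} q\), attainment of the minimum, early termination, and positivity of the denominator. On the last point, the denominator is in fact positive surely, not just almost surely, since \(z_n\) is a unit vector.
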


\begin{proof}
The \(d\)-th CG iterate minimizes the \(A_n\)-error over
\[
  \operatorname{span}\{b_n,A_nb_n,\ldots,A_n^{d-1}b_n\}.
\]
Since \(b_n=A_nz_n\), every vector in this space is \(A_ns(A_n)z_n\) with
\(\deg s\le d-1\).  Hence the residual is
\[
  z_n-A_ns(A_n)z_n=q(A_n)z_n,
  \qquad q(\xi)=1-\xi s(\xi),
\]
where \(q(0)=1\) and \(\deg q\le d\).  Conversely, every such \(q\) has this
form because \(1-q(\xi)\) is divisible by \(\xi\).  Diagonalizing \(A_n\)
gives exactly the displayed minimum.
\end{proof}

The Haar vector satisfies
\[
  (|\alpha_1|^2,\ldots,|\alpha_n|^2)
  \stackrel d=
  \frac{(E_1,\ldots,E_n)}{\sum_iE_i},
  \tag{A.1}
\]
where the \(E_i\) are independent exponential variables of mean one.  The
common denominator cancels in \(M_{d,n}\).

For \(0<p<1\), define
\[
  Y_{j,n}=\left(\frac nj\right)^p,
  \qquad
  S_{a,n}=\sum_{j=1}^nY_{j,n}^aE_j,
  \qquad
  \mu_n=\frac1{S_{1,n}}\sum_{j=1}^nY_{j,n}E_j\,\delta_{Y_{j,n}}.
\]
After the change of variables \(r(y)=q(n^{-p}y)\),
\[
  M_{d,n}(p)
  =
  \min_{\deg r\le d,\ r(0)=1}\int r(y)^2\,\mu_n(dy).
  \tag{A.2}
\]

\begin{proposition}[RCD budget bounds]
\label{prop:rcd-budget-app}
Let \(A\succ0\) be deterministic, let \(x_\star=A^{-1}b\), and run RCD from
an arbitrary initial point \(x_0\).  Write
\[
u_t=x_t-x_\star,
\qquad
V_t=\|u_t\|_A^2,
\]
and define
\[
T_{\mathrm{RCD}}(\varepsilon)
=
\min\{t\ge0:V_t\le\varepsilon V_0\}.
\]
At each step, RCD samples \(i_t=i\) with probability
\[
\mathbb P(i_t=i\mid A)=\frac{A(i,i)}{\operatorname{tr}(A)}
\]
and performs the exact coordinate update.  Then, for every \(k\ge0\),
\[
  \mathbb P_I\!\left(T_{\mathrm{RCD}}(\varepsilon)>k\mid A,u_0\right)
  \le
  \varepsilon^{-1}
  \left(1-\frac{\lambda_{\min}(A)}{\operatorname{tr}(A)}\right)^k,
  \tag{A.3}
\]
where \(\mathbb P_I\) denotes probability over the RCD coordinate-sampling
sequence only.  Consequently, in the power-law model,
\[
  T_{\mathrm{RCD},n}(\varepsilon,p;I)/n=O_{\mathbb P}(1),
  \qquad 0<p<1,
  \tag{A.4}
\]
and
\[
  T_{\mathrm{RCD},n}(\varepsilon,1;I)/n=O_{\mathbb P}(\log n).
  \tag{A.5}
\]
\end{proposition}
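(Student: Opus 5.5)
The plan is to establish the one-step expected contraction of RCD in the energy norm, iterate it, and apply Markov's inequality. Then we specialize to the power-law spectrum, using that \(\lambda_{\min}\) and \(\operatorname{tr}\) are invariant under the Haar conjugation.

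\emph{Step 1 (one-step contraction).} Write \(r_t=-Au_t\). The exact coordinate update with index \(i\) is the \(A\)-orthogonal projection of \(u_t\) onto \(\{v:(Av)_i=0\}\), so
\[
V_{t+1}=V_t-\frac{|(Au_t)_i|^2}{A(i,i)}.
\]
Averaging over \(i\) with probability \(A(i,i)/\operatorname{tr}(A)\) gives
\[
\mathbb E[V_{t+1}\mid u_t]=V_t-\frac{\|Au_t\|_2^2}{\operatorname{tr}(A)}.
\]
Since \(\|Au\|_2^2=(A^{1/2}u)^*A(A^{1/2}u)\ge\lambda_{\min}(A)\|u\|_A^2\), we obtain
\[
\mathbb E[V_{t+1}\mid u_t]\le\bigl(1-\lambda_{\min}(A)/\operatorname{tr}(A)\bigr)V_t.
\]
Iterating, with \(A\) and \(u_0\) held fixed, yields
\[
\mathbb E_I[V_k]\le(1-\lambda_{\min}/\operatorname{tr})^kV_0.
\]

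\emph{Step 2 (tail bound).} Because \(V_t\) is nonincreasing in \(t\) (each update is a projection), the event \(\{T_{\mathrm{RCD}}(\varepsilon)>k\}\) equals \(\{V_k>\varepsilon V_0\}\). Markov's inequality applied to \(V_k/V_0\) then gives (A.3). If \(V_0=0\), then \(T=0\) and there is nothing to prove.

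\emph{Step 3 (power-law specialization).} We have \(\lambda_{\min}(A_n)=n^{-p}\) and \(\operatorname{tr}(A_n)=H_{n,p}=\sum_{j\le n}j^{-p}\), both deterministic. Take \(k=\lceil Cn\rceil\). Using \(1-x\le e^{-x}\), the right side of (A.3) is at most
\[
\varepsilon^{-1}\exp\!\bigl(-Cn^{1-p}/H_{n,p}\bigr).
\]
For \(0<p<1\), \(H_{n,p}\le n^{1-p}/(1-p)\), so this is at most \(\varepsilon^{-1}e^{-C(1-p)}\), which can be made below any \(\delta\) by choosing \(C\) large. This holds uniformly in \(A_n\) and \(u_0=-z_n\), so integrating over \(U_n\) and \(z_n\) gives (A.4). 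For \(p=1\), \(H_{n,1}\le 1+\log n\). Taking \(k=\lceil Cn\log n\rceil\) gives the bound \(\varepsilon^{-1}\exp(-C\log n/(1+\log n))\le\varepsilon^{-1}e^{-C/2}\) for \(n\ge3\), which yields (A.5).

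\emph{Main obstacle.} The only delicate point is Step 1. One must verify that the exact coordinate step is precisely the \(A\)-orthogonal projection and that the energy decrement is \(|(Au_t)_i|^2/A(i,i)\). The decrement is then to be lower-bounded via \(\|Au\|_2^2\ge\lambda_{\min}\|u\|_A^2\), and not via a diagonal-filter heuristic. Everything else is Markov's inequality and harmonic-sum estimates.
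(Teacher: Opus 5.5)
Your proposal is correct and follows the paper's proof step for step. It uses the exact energy decrement $|(Au_t)_{i}|^2/A(i,i)$, averages it under diagonal-proportional sampling, applies $\|Au\|_2^2\ge\lambda_{\min}(A)\|u\|_A^2$, iterates, and finishes with Markov and the specialization $\lambda_{\min}(A_n)=n^{-p}$, $\operatorname{tr}(A_n)=H_{n,p}$. Your explicit estimates $H_{n,p}\le n^{1-p}/(1-p)$, $H_{n,1}\le 1+\log n$ and $1-x\le e^{-x}$, together with the monotonicity argument for $\{T>k\}=\{V_k>\varepsilon V_0\}$, simply make non-asymptotic what the paper states through asymptotic equivalences.
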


\begin{proof}
Since \(u_t=x_t-x_\star\), the exact coordinate update gives
\[
  u_{t+1}
  =
  u_t-\frac{(Au_t)_{i_t}}{A(i_t,i_t)}\,\mathbf e_{i_t},
\]
where \(\mathbf e_i\) is the \(i\)-th coordinate vector.  Therefore
\[
  V_{t+1}
  =
  V_t-\frac{|(Au_t)_{i_t}|^2}{A(i_t,i_t)}.
\]
Taking expectation over \(i_t\), conditionally on \(A\) and \(u_t\), gives
\[
  \mathbb E_I[V_{t+1}\mid A,u_t]
  =
  V_t-\sum_i
  \frac{A(i,i)}{\operatorname{tr}(A)}
  \frac{|(Au_t)_i|^2}{A(i,i)}
  =
  V_t-\frac{\|Au_t\|_2^2}{\operatorname{tr}(A)}.
\]
Since
\[
  \|Au_t\|_2^2
  =
  u_t^*A^2u_t
  \ge
  \lambda_{\min}(A)u_t^*Au_t
  =
  \lambda_{\min}(A)V_t,
\]
we get
\[
  \mathbb E_I[V_{t+1}\mid A,u_t]
  \le
  \left(
  1-\frac{\lambda_{\min}(A)}{\operatorname{tr}(A)}
  \right)V_t.
\]
Iterating,
\[
  \mathbb E_I[V_k\mid A,u_0]
  \le
  \left(
  1-\frac{\lambda_{\min}(A)}{\operatorname{tr}(A)}
  \right)^k V_0.
\]
By Markov's inequality,
\[
  \mathbb P_I(T_{\mathrm{RCD}}(\varepsilon)>k\mid A,u_0)
  =
  \mathbb P_I(V_k>\varepsilon V_0\mid A,u_0)
  \le
  \varepsilon^{-1}
  \left(
  1-\frac{\lambda_{\min}(A)}{\operatorname{tr}(A)}
  \right)^k.
\]

For \(A=A_n\), one has
\[
\lambda_{\min}(A_n)=n^{-p},
\qquad
\operatorname{tr}(A_n)=H_{n,p}:=\sum_{j=1}^n j^{-p}.
\]
If \(0<p<1\), then
\[
  H_{n,p}\sim\frac{n^{1-p}}{1-p},
  \qquad
  \frac{n^{-p}}{H_{n,p}}
  =
  \frac{1-p}{n}(1+o(1)).
\]
Taking \(k=\lfloor Bn\rfloor\) gives an exponentially decaying upper tail in
\(B\), hence (A.4).  If \(p=1\), then \(H_{n,1}\sim\log n\).  Taking
\(k=\lfloor Bn\log n\rfloor\) gives (A.5).
\end{proof}

\begin{proposition}[RCD lower gate]
\label{prop:rcd-lower-app}
Fix \(0<p<1/3\) and \(\varepsilon\in(0,1)\).  If
\(\gamma\in(0,1)\) satisfies
\[
1-\gamma^{1-p}>\varepsilon,
\]
then there exists \(a>0\) such that
\[
  \mathbb P\!\left(T_{\mathrm{RCD},n}(\varepsilon,p;I)/n>a\right)\to1.
\]
\end{proposition}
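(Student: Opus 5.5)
The plan is a \emph{small-support} argument: after \(t\) coordinate updates from \(x_0=0\), the RCD iterate lies in the span of at most \(t\) coordinate vectors. That span is too small to remove more than a \(1-\varepsilon\) fraction of the \(A_n\)-energy of \(z_n\). The essential structural fact is that the sampled coordinates depend only on \(A_n\) (through its diagonal) and on \(I\), never on \(z_n\). So, conditionally on \((A_n,I)\), the touched set is a fixed set independent of the uniform source vector.

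\textbf{Step 1 (reduction to a projection bound).} Fix \(a>0\) and set \(t=\lfloor an\rfloor\). Let \(S\) be the set of coordinates touched among \(i_0,\dots,i_{t-1}\), so \(|S|\le an\). Then \(x_t\in E_S:=\operatorname{span}\{\mathbf e_i:i\in S\}\). Let \(\Pi\) be the \(A_n\)-orthogonal projector onto \(E_S\). Then
\[
\|x_t-z_n\|_{A_n}^2\ \ge\ \|z_n-\Pi z_n\|_{A_n}^2=\|z_n\|_{A_n}^2-\|\Pi z_n\|_{A_n}^2 .
\]
Since \(V_s\) is nonincreasing in \(s\), the event \(\{T_{\mathrm{RCD},n}\le an\}\) is contained in \(\{V_t\le\varepsilon V_0\}\). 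It therefore suffices to show that \(\|\Pi z_n\|_{A_n}^2<(1-\varepsilon)\|z_n\|_{A_n}^2\) with probability tending to one. Here \(\Pi\) is measurable with respect to \((U_n,I)\), and \(z_n\) is independent of both.

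\textbf{Step 2 (spectral split).} Write \(z_n=z_{\rm hi}+z_{\rm lo}\), where \(z_{\rm hi}\) is the component on eigenvectors with \(j\le\gamma n\) and \(z_{\rm lo}\) the component on \(j>\gamma n\). Because \(\Pi\) is an \(A_n\)-contraction,
\[
\|\Pi z_n\|_{A_n}\le\|z_{\rm hi}\|_{A_n}+\|\Pi z_{\rm lo}\|_{A_n}.
\]
In the representation (A.1), we have \(\|z\|_{A_n}^2\propto\sum_j j^{-p}E_j\) and \(\|z_{\rm hi}\|_{A_n}^2\propto\sum_{j\le\gamma n}j^{-p}E_j\). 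The maximal weight \(n^0\) is negligible against the total \(\sim n^{1-p}/(1-p)\), so a weak law of large numbers (a variance bound suffices) gives
\[
\|z_{\rm hi}\|_{A_n}^2/\|z_n\|_{A_n}^2\xrightarrow{\mathbb P}\gamma^{1-p}.
\]

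\textbf{Step 3 (the low part, the main step).} Conditionally on \((U_n,I)\), \(z_n\) is uniform on the sphere, so
\[
\mathbb E\|\Pi z_{\rm lo}\|_{A_n}^2=\tfrac1n\operatorname{tr}\bigl(P_{\rm lo}\Pi^*A_n\Pi P_{\rm lo}\bigr),
\]
where \(P_{\rm lo}\) is the spectral projector for \(j>\gamma n\). The operator \(A_n^{1/2}\Pi P_{\rm lo}\) has rank at most \(|S|\le an\). Its norm squared is at most \(\|A_n^{1/2}P_{\rm lo}\|^2\le(\gamma n)^{-p}\), by the \(A_n\)-contraction property. Hence
\[
\mathbb E\|\Pi z_{\rm lo}\|_{A_n}^2\le a\gamma^{-p}n^{-p}.
\]
Compare this with \(\|z_n\|_{A_n}^2\approx H_{n,p}/n\sim n^{-p}/(1-p)\), which concentrates by Step 2. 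Markov's inequality then shows that the ratio \(\|\Pi z_{\rm lo}\|_{A_n}^2/\|z_n\|_{A_n}^2\) is at most \(\delta\) with probability at least \(1-O(a\gamma^{-p}(1-p)/\delta)\).

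To get probability tending to one rather than merely close to one, I would upgrade Markov to a concentration bound. The map \(z\mapsto\|\Pi P_{\rm lo}z\|_{A_n}\) is Lipschitz on the sphere with constant \((\gamma n)^{-p/2}\), so Lévy's concentration on the sphere (dimension \(n\)) gives fluctuations of order \((\gamma n)^{-p/2}n^{-1/2}\). This is negligible against \(n^{-p/2}\). The conditional statement is uniform over \(\Pi\), so it integrates to an unconditional one.

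\textbf{Step 4 (choice of constants).} Since \(1-\gamma^{1-p}>\varepsilon\), choose \(\delta>0\) with \((\gamma^{(1-p)/2}+\sqrt\delta)^2<1-\varepsilon\), and then choose \(a\) small enough that \(a\gamma^{-p}(1-p)<\delta/2\). Combining Steps 2 and 3, with probability tending to one,
\[
\|\Pi z_n\|_{A_n}^2<(1-\varepsilon)\|z_n\|_{A_n}^2,
\]
and Step 1 then yields \(T_{\mathrm{RCD},n}/n>a\).

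The main obstacle is Step 3: bounding the energy that an arbitrary (adaptively sampled, but \(z\)-independent) coordinate subspace of dimension \(an\) can capture from the small-eigenvalue part. The rank-times-norm trace bound together with sphere concentration is what I would try first. The hypothesis \(p<1/3\) does not appear to be needed beyond \(p<1\); it matters only for where the proposition is applied.
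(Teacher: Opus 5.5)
Your argument is correct, and it shares the paper's outer skeleton. The touched set is determined by \((U_n,I)\) alone, since the sampling law depends only on \(\operatorname{diag}A_n\). So conditionally it is a fixed set of size at most \(an\), independent of the uniform vector \(z_n\). Then \(V_t\) is bounded below by the \(A_n\)-distance from \(z_n\) to \(E_S\); your \(\|z_n\|_{A_n}^2-\|\Pi z_n\|_{A_n}^2\) is exactly the paper's \(z_n^*B_{A_n,S}z_n\).

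The routes diverge at the key estimate. The paper takes \(a=\gamma/2\), so \(|S|\le\gamma n\), and bounds the captured energy in one stroke with the Ky Fan maximum principle, \(\operatorname{tr}(P_{A_n,S}A_n)\le H_{\lfloor\gamma n\rfloor,p}\). The hypothesis \(1-\gamma^{1-p}>\varepsilon\) then gives \(\operatorname{tr}(B_{A_n,S}-\varepsilon A_n)\ge\theta H_{n,p}\). Chebyshev, applied with the exact spherical mean and variance of the single quadratic form \(z_n^*(B_{A_n,S}-\varepsilon A_n)z_n\), finishes the proof.

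You instead use \(\gamma\) as a spectral cutoff. You concede all energy above it (fraction \(\to\gamma^{1-p}\) by a weak LLN) and bound the captured low-frequency energy by rank times operator norm, \(an\,(\gamma n)^{-p}\). You then upgrade Markov to a high-probability bound via L\'evy concentration of the Lipschitz map \(z\mapsto\|\Pi P_{\mathrm{lo}}z\|_{A_n}\).

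Your split is in effect a lossy, hand-made substitute for Ky Fan. It forces \(a\) of order \(\delta\gamma^{p}/(1-p)\) rather than the explicit \(a=\gamma/2\) that the paper reads off from the hypothesis. It also needs two probabilistic inputs where the paper needs one second-moment computation. What it buys is robustness in \(p\): L\'evy concentration works for every \(0<p<1\). The paper's crude bound \(\operatorname{Var}\le 1/n\) gives a Chebyshev error \(O(n^{2p-1})\) and so uses \(p<1/2\), though the sharper bound \(\operatorname{tr}(C_{A_n,S}^2)\le\operatorname{tr}(A_n)\) would remove that restriction too. Your remark that \(p<1/3\) is not needed is consistent with the paper's own comment that it is ``more than enough.''
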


\begin{proof}
Let \(a=\gamma/2\) and \(k_n=\lfloor an\rfloor\).  Write
\(I=(i_0,i_1,\ldots)\) for the coordinate-sampling sequence, and define the
selected-coordinate set
\[
S_{k_n}(I)=\{i_0,\ldots,i_{k_n-1}\}.
\]
Since \(x_0=0\) and each RCD step changes only one coordinate, the support of
\(x_{k_n}\) is contained in \(S_{k_n}(I)\).  In particular,
\[
|S_{k_n}(I)|\le k_n\le \gamma n.
\]

For a deterministic coordinate set \(S\subset\{1,\ldots,n\}\), define
\[
E_S=\operatorname{span}\{\mathbf e_i:i\in S\},
\]
where \(\mathbf e_i\) is the \(i\)-th coordinate vector.  Let \(P_{A_n,S}\) be
the Euclidean orthogonal projection onto the subspace \(A_n^{1/2}E_S\), and set
\[
  B_{A_n,S}
  =
  A_n^{1/2}(I-P_{A_n,S})A_n^{1/2}.
\]
For every \(x\in E_S\),
\[
  \|x-z_n\|_{A_n}^2
  =
  \|A_n^{1/2}x-A_n^{1/2}z_n\|_2^2
  \ge
  z_n^*B_{A_n,S}z_n.
\]
Indeed, the right-hand side is the squared Euclidean distance from
\(A_n^{1/2}z_n\) to the subspace \(A_n^{1/2}E_S\).

We next lower-bound \(\operatorname{tr}(B_{A_n,S})\).  Since \(P_{A_n,S}\) is
an orthogonal projection of rank at most \(|S|\), the Ky Fan maximum principle
gives
\[
  \operatorname{tr}(P_{A_n,S}A_n)
  \le
  \sum_{j=1}^{|S|}j^{-p}.
\]
Equivalently, this follows from the variational fact that among all
rank-\(m\) orthogonal projections \(P\), the maximum of
\(\operatorname{tr}(PA_n)\) is the sum of the largest \(m\) eigenvalues of
\(A_n\).  Therefore, if \(|S|\le\gamma n\),
\[
  \operatorname{tr}(B_{A_n,S})
  =
  \operatorname{tr}(A_n)-\operatorname{tr}(P_{A_n,S}A_n)
  \ge
  H_{n,p}-H_{\lfloor\gamma n\rfloor,p},
\]
where \(H_{m,p}=\sum_{j=1}^m j^{-p}\).  Since
\[
  \frac{H_{\lfloor\gamma n\rfloor,p}}{H_{n,p}}
  \to
  \gamma^{1-p},
\]
the assumption \(1-\gamma^{1-p}>\varepsilon\) implies that there exists
\(\theta>0\) such that, for all sufficiently large \(n\) and all
\(|S|\le\gamma n\),
\[
  \operatorname{tr}(B_{A_n,S})
  \ge
  (\varepsilon+\theta)H_{n,p}.
\]

Now define
\[
  C_{A_n,S}=B_{A_n,S}-\varepsilon A_n.
\]
Then
\[
  \operatorname{tr}(C_{A_n,S})\ge \theta H_{n,p}.
\]
Also \(0\le B_{A_n,S}\le A_n\), so
\[
  -\varepsilon A_n\le C_{A_n,S}\le (1-\varepsilon)A_n.
\]
Since \(\|A_n\|=1\), this gives
\[
  \|C_{A_n,S}\|\le1.
\]

For a uniform complex unit vector \(z_n\), the standard spherical quadratic
form identities give, conditionally on \(A_n\) and \(S\),
\[
  \mathbb E\!\left[z_n^*C_{A_n,S}z_n\mid A_n,S\right]
  =
  \frac{\operatorname{tr}(C_{A_n,S})}{n}
  \ge
  c\,n^{-p}
\]
for some \(c>0\), and
\[
  \operatorname{Var}\!\left(z_n^*C_{A_n,S}z_n\mid A_n,S\right)
  =
  \frac{
  \operatorname{tr}(C_{A_n,S}^2)
  -\operatorname{tr}(C_{A_n,S})^2/n
  }{n(n+1)}
  \le
  \frac1n.
\]
Chebyshev's inequality therefore yields, uniformly over all
\(|S|\le\gamma n\),
\[
  \mathbb P\!\left(
  z_n^*C_{A_n,S}z_n\le0
  \mid A_n,S
  \right)
  \le
  C n^{2p-1}
  \to0.
\]
Here \(p<1/3\) is more than enough.

Finally, conditional on \(A_n\) and the coordinate-sampling sequence \(I\), the
set \(S_{k_n}(I)\) is fixed and is independent of \(z_n\).  Since
\(|S_{k_n}(I)|\le\gamma n\), the preceding bound applies with
\(S=S_{k_n}(I)\).  Hence, with probability tending to one,
\[
  z_n^*B_{A_n,S_{k_n}(I)}z_n
  >
  \varepsilon z_n^*A_nz_n.
\]
Because \(x_{k_n}\in E_{S_{k_n}(I)}\), we have
\[
  V_{k_n}
  =
  \|x_{k_n}-z_n\|_{A_n}^2
  \ge
  z_n^*B_{A_n,S_{k_n}(I)}z_n
  >
  \varepsilon z_n^*A_nz_n
  =
  \varepsilon V_0.
\]
Thus \(T_{\mathrm{RCD},n}(\varepsilon,p;I)>k_n\) with probability tending to
one.  Since \(k_n/n\to a\), the claim follows.
\end{proof}

\subsection{The active floor}

For \(0<p<1\), set
\[
  K(p)=\max\{k\ge0:(2k+1)p<1\},
\]
and
\[
  \nu_p(dy)=\frac{1-p}{p}y^{-1/p}\,dy,
  \qquad y\in[1,\infty).
\]
For \(D\le K(p)\), define
\[
  F_D(p)=
  \min_{\deg r\le D,\ r(0)=1}
  \int r(y)^2\,\nu_p(dy),
  \qquad
  F(p)=F_{K(p)}(p).
\]
Equivalently, if
\[
  m_s(p)=\int y^s\,\nu_p(dy)=\frac{1-p}{1-(s+1)p},
  \qquad 0\le s\le 2D,
\]
and
\[
  G_D(p)=\bigl(m_{a+b}(p)\bigr)_{0\le a,b\le D},
\]
then \(G_D(p)\) is positive definite.  Indeed, for any nonzero coefficient
vector \(c=(c_0,\ldots,c_D)\),
\[
  c^*G_D(p)c
  =
  \int \left|\sum_{a=0}^D c_a y^a\right|^2\,\nu_p(dy)>0,
\]
because \(\nu_p\) has infinite support on \([1,\infty)\).  Hence \(G_D(p)\)
is invertible, and the constrained minimum is
\[
  F_D(p)=\frac1{(G_D(p)^{-1})_{00}}.
  \tag{A.6}
\]

\begin{lemma}[Moment convergence]
\label{lem:moment-app}
If \((m+1)p<1\), then
\[
  \int y^m\,\mu_n(dy)
  =
  \frac{S_{m+1,n}}{S_{1,n}}
  \xrightarrow{\mathbb P}
  \frac{1-p}{1-(m+1)p}.
  \tag{A.7}
\]
Moreover, on every fixed interval \([1,R]\), \(\mu_n\) converges in moments
to \(\nu_p\).
\end{lemma}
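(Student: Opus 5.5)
The plan is to prove (A.7) by showing that $S_{a,n}$, suitably normalized, satisfies a weak law of large numbers whenever $ap<1$. Write
\[
S_{a,n}=\sum_{j=1}^n (n/j)^{ap}E_j .
\]
The mean is
\[
\mathbb E S_{a,n}=n^{ap}\sum_{j\le n} j^{-ap}\sim \frac{n}{1-ap},
\]
since $ap<1$ makes the sum a Riemann sum: $\frac1n\sum_j (j/n)^{-ap}\to\int_0^1 x^{-ap}\,dx=1/(1-ap)$. Applying this with $a=m+1$ and $a=1$ gives
\[
\frac{\mathbb E S_{m+1,n}}{\mathbb E S_{1,n}}\to\frac{1-p}{1-(m+1)p}.
\]
It then suffices to show $S_{a,n}/\mathbb E S_{a,n}\to1$ in probability for $a=1$ and $a=m+1$; the ratio follows by the continuous mapping theorem.

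Concentration via variance is the obvious first attempt. The variance is $\operatorname{Var}S_{a,n}=\sum_j (n/j)^{2ap}$. If $2ap<1$ this is $O(n)$, so the ratio to $(\mathbb E S_{a,n})^2\sim n^2$ tends to $0$ and Chebyshev applies. If $2ap>1$ the sum is dominated by small $j$ and equals $\Theta(n^{2ap})$. Chebyshev still works whenever $2ap<2$, because the variance ratio is then $n^{2ap-2}\to0$; the boundary case $2ap=1$ contributes only an extra $n\log n$. Since $ap<1$ always gives $2ap<2$, Chebyshev covers every case $(m+1)p<1$, and no truncation is needed. I will still check that the $j=1$ term, of size $n^{ap}E_1$, is $o(n)$; it is, because $ap<1$. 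This term is the heavy-tail piece, and the place I expect the most care is confirming that the small-$j$ contributions to the variance, namely $\sum_{j\le n} j^{-2ap}\cdot n^{2ap}=O(n^{2ap})$ when $2ap>1$, are indeed $o(n^2)$. That reduces to $ap<1$ again, so this step should go through.

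For the second claim, fix $R$. The restricted moments are
\[
\int_{[1,R]} y^m\,\mu_n(dy)=\frac{1}{S_{1,n}}\sum_{j:\,Y_{j,n}\le R} Y_{j,n}^{m+1}E_j .
\]
The condition $Y_{j,n}\le R$ means $j\ge nR^{-1/p}$, so the sum runs over a range of $j$ of size proportional to $n$ on which the weights $Y_{j,n}$ are bounded by $R$. The restricted sum therefore has mean
\[
n\int_{R^{-1/p}}^1 x^{-(m+1)p}\,dx\,(1+o(1))
\]
and variance $O(n)$ for every $m\ge0$, with no moment restriction, so Chebyshev gives its convergence after dividing by $n$. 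Combining this with $S_{1,n}/n\to1/(1-p)$ from the first part, and substituting $y=x^{-p}$ (so $x\in[R^{-1/p},1]$ corresponds to $y\in[1,R]$), the limit is $\int_1^R y^m\,\nu_p(dy)$, since $\frac{1-p}{p}y^{-1/p}\,dy$ is the pushforward density. This holds for every $m$, so $\mu_n$ converges to $\nu_p$ in moments on $[1,R]$; restricting to $[1,R]$ is exactly what removes the need for $(m+1)p<1$.
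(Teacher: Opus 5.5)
Your proposal is correct and matches the paper's proof: both compute $\mathbb E S_{a,n}\sim n/(1-ap)$, show $\operatorname{Var}(S_{a,n})=o(n^2)$ and apply Chebyshev, take the ratio with $a=m+1$ and $a=1$, and obtain the local statement from the same law of large numbers on the index range $j\ge nR^{-1/p}$ followed by the change of variables $y=x^{-p}$. The only difference is that you write out the variance bound in the three cases $2ap<1$, $2ap=1$, $2ap>1$, which the paper simply asserts.
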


\begin{proof}
For \(a p<1\),
\[
  S_{a,n}=n^{ap}\sum_{j=1}^n j^{-ap}E_j,
  \qquad
  \mathbb E S_{a,n}\sim \frac n{1-ap},
  \qquad
  \operatorname{Var}(S_{a,n})=o(n^2).
\]
Hence \(S_{a,n}/n\to(1-ap)^{-1}\) in probability.  Taking \(a=m+1\) and
\(a=1\) proves (A.7).  The local statement follows from the same law of large
numbers on the index range \(nR^{-1/p}\le j\le n\), followed by the change of
variables \(y=t^{-p}\).
\end{proof}

\begin{proposition}[Fixed-degree CG limit]
\label{prop:fixed-degree-app}
For fixed \(0<p<1\) and fixed \(d\),
\[
  M_{d,n}(p)\xrightarrow{\mathbb P}F_{\min(d,K(p))}(p).
  \tag{A.8}
\]
\end{proposition}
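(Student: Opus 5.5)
We need to prove \(M_{d,n}(p)\to F_{\min(d,K)}(p)\) in probability, where \(K=K(p)\). We split the claim into an upper bound and a lower bound, working throughout from the representation (A.2). The upper bound uses the active moments directly. The lower bound uses a truncation of \(\mu_n\) to a compact interval, where Lemma~\ref{lem:moment-app} gives convergence in moments. Write \(D=\min(d,K)\).

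\emph{Upper bound.} Let \(r_\ast\) be the minimizer in (A.6) for \(G_D(p)\). It has degree at most \(D\le K\) and satisfies \(r_\ast(0)=1\). Since \(2D\le 2K\), every moment \(\int y^s\,\mu_n(dy)\) with \(s\le 2D\) satisfies \((s+1)p<1\). Lemma~\ref{lem:moment-app} therefore gives
\[
\int r_\ast^2\,d\mu_n\xrightarrow{\mathbb P}\int r_\ast^2\,d\nu_p=F_D(p).
\]
Because \(\deg r_\ast\le D\le d\), this yields \(M_{d,n}(p)\le F_D(p)+o_{\mathbb P}(1)\). If \(d\le K\) this bound is all that is needed on the upper side. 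If \(d>K\) we still have \(F_D=F_K\), so the same bound applies.

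\emph{Lower bound.} Fix \(R>1\) and let \(\mu_n^R\) denote the restriction of \(\mu_n\) to \([1,R]\); all atoms of \(\mu_n\) lie in \([1,\infty)\) because \(Y_{j,n}\ge1\). Define
\[
F^R_d=\min_{\deg r\le d,\,r(0)=1}\int_{[1,R]} r^2\,d\nu_p,
\]
where now \(d\) may exceed \(K\), since all moments are finite on the compact interval. Then \(M_{d,n}\ge \min_r\int r^2\,d\mu_n^R\). The right-hand side equals \(1/(G^{R,n}_d)^{-1}_{00}\), where \(G^{R,n}_d\) is the Hankel matrix of \(\mu_n^R\) of size \(d+1\). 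By the local statement of Lemma~\ref{lem:moment-app}, \(G^{R,n}_d\to G^R_d\) in probability. The limit matrix is positive definite because \(\nu_p|_{[1,R]}\) has infinite support. Since \(G\mapsto 1/(G^{-1})_{00}\) is continuous on positive definite matrices, we obtain \(M_{d,n}\ge F^R_d-o_{\mathbb P}(1)\). It then remains to show that \(F^R_d\to F_D(p)\) as \(R\to\infty\).

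\emph{Main obstacle: the limit \(R\to\infty\).} This step is where the inactive degrees die. Monotonicity gives \(F^R_d\le F_d^{\infty}\), so we need a lower limit. Write \(F^R_d=\int_1^R r_R^2\,d\nu_p\) for the minimizer \(r_R\), and suppose \(\limsup_R F_d^R\le F_D(p)\le1\), so the values stay bounded. Boundedness of \(\int_1^{R_0} r_R^2\,d\nu_p\) on a fixed interval bounds the coefficients of \(r_R\), because the Hankel matrix on \([1,R_0]\) is positive definite. Along a subsequence \(r_R\to r\) coefficientwise, with \(r(0)=1\), and Fatou gives \(\int r^2\,d\nu_p\le\liminf F^R_d\). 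Finiteness of \(\int r^2\,d\nu_p\) forces \(\deg r\le K\), since \(y^{2e}y^{-1/p}\) is integrable only when \((2e+1)p<1\). Hence \(\int r^2\,d\nu_p\ge F_D(p)\), because \(\deg r\le\min(d,K)=D\). Letting \(R\to\infty\) and combining with the upper bound proves (A.8).

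The point that needs care is the coefficient bound uniform in \(R\). This is exactly the positive definiteness of the fixed-interval Hankel matrix, applied to the bounded quantity \(\int_1^{R_0} r_R^2\,d\nu_p\le F^R_d\le 1\); the bound \(F^R_d\le1\) holds by taking \(r\equiv1\). If the coefficients instead degenerated, the integral on \([1,R_0]\) would become large, contradicting this bound.
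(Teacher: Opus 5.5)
Your proof is correct and follows essentially the same route as the paper. The upper bound comes from plugging the $F_D$-minimizer into (A.2). The lower bound truncates to $[1,R]$, uses local moment convergence, controls coefficients through the positive-definite local Hankel matrix, and rules out degrees above $K(p)$ because $y^{2\ell-1/p}$ is not integrable when $(2\ell+1)p\ge1$. The only difference is organizational: you run the compactness argument on the deterministic truncated minimizers $r_R$ rather than on the random minimizers $r_n$, so the in-probability step becomes a clean continuous-mapping statement for $G\mapsto 1/(G^{-1})_{00}$.
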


\begin{proof}
Let \(D=\min(d,K(p))\).  The upper bound follows by inserting the minimizer
for \(F_D(p)\) into (A.2) and using Lemma~\ref{lem:moment-app}.  For the lower
bound, take a minimizer \(r_n\) in (A.2).  Since \(r\equiv1\) is feasible,
\(\int r_n^2\,d\mu_n\le1\).  Moment convergence on \([1,2]\) gives a positive
definite local Gram limit, hence the coefficients of \(r_n\) are tight.  Any
coefficient limit \(r\) cannot have degree larger than \(K(p)\): if
\(\deg r=\ell>K(p)\), then \((2\ell+1)p\ge1\), so
\(\int_1^R r^2\,d\nu_p\to\infty\), contradicting the bound
\(\int r_n^2\,d\mu_n\le1\) after taking \(R\) fixed and large.  Therefore
every limit point has degree at most \(D\), and local convergence followed by
\(R\to\infty\) gives the lower bound \(F_D(p)\).
\end{proof}

\subsection{Qualitative phase diagram}

\begin{proposition}[Zero ratio below the floor]
\label{prop:below-floor-app}
If \(0<p<1\) and \(\varepsilon<F(p)\), then
\[
  \rho_n(\varepsilon,p)\xrightarrow{\mathbb P}0.
\]
In particular, if \(1/3\le p<1\), then \(K(p)=0\), \(F(p)=1\), and the same
conclusion holds for every \(\varepsilon\in(0,1)\).
\end{proposition}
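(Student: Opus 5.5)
The proof combines the fixed-degree CG limit (Proposition~\ref{prop:fixed-degree-app}) with the RCD epoch bound (A.4) of Proposition~\ref{prop:rcd-budget-app}. Fix \(0<p<1\) and \(\varepsilon<F(p)\). First I show that \(T_{\mathrm{CG},n}(\varepsilon,p)\to\infty\) in probability, i.e.\ that \(\mathbb P(T_{\mathrm{CG},n}\le d)\to0\) for every fixed \(d\).

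By Proposition~\ref{prop:cg-poly-app}, \(T_{\mathrm{CG},n}\le d\) holds iff \(M_{d,n}(p)\le\varepsilon\), since \(M_{d,n}\) is nonincreasing in \(d\). By (A.8), \(M_{d,n}(p)\to F_{\min(d,K(p))}(p)\) in probability. Because the minimization sets are nested, \(F_D(p)\) is nonincreasing in \(D\), so \(F_{\min(d,K(p))}(p)\ge F_{K(p)}(p)=F(p)\). Set \(\delta=F(p)-\varepsilon>0\). Then
\[
\mathbb P(M_{d,n}(p)\le\varepsilon)\le\mathbb P\bigl(|M_{d,n}(p)-F_{\min(d,K)}(p)|\ge\delta\bigr)\to0 .
\]

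Next I combine this with (A.4). Given \(\eta>0\) and \(\kappa>0\), choose \(B\) with \(\mathbb P(T_{\mathrm{RCD},n}/n>B)<\kappa\) for all large \(n\). Then choose a fixed \(d\ge B/\eta\). The event inclusion
\[
\{\rho_n>\eta\}\subset\{T_{\mathrm{RCD},n}/n>B\}\cup\{T_{\mathrm{CG},n}\le d\}
\]
holds, because on the complement \(\rho_n\le B/(d+1)<\eta\). So \(\limsup_n\mathbb P(\rho_n>\eta)\le\kappa\), and since \(\kappa\) is arbitrary, \(\rho_n\to0\) in probability. The randomness of \(I\), \(U_n\) and \(z_n\) is all handled jointly, since (A.4) is already an unconditional \(O_{\mathbb P}\) statement.

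For the special case, if \(1/3\le p<1\) then \(3p\ge1\), so \(k=1\) fails the condition \((2k+1)p<1\) while \(k=0\) satisfies it; hence \(K(p)=0\). The only admissible polynomial is then \(r\equiv1\), and \(F(p)=\int d\nu_p=m_0(p)=1\). Every \(\varepsilon\in(0,1)\) therefore satisfies \(\varepsilon<F(p)\).

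The only potentially delicate step is the monotonicity and limit bookkeeping for fixed \(d\) versus \(K(p)\). It is immediate once Proposition~\ref{prop:fixed-degree-app} is assumed, so I expect no real obstacle.
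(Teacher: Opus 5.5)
Your proposal is correct and follows essentially the same route as the paper: the fixed-degree limit (A.8) gives \(\mathbb P(T_{\mathrm{CG},n}\le d)\to0\) for each fixed \(d\), and this is combined with the epoch bound (A.4) through the event inclusion \(\{\rho_n>\eta\}\subset\{T_{\mathrm{RCD},n}/n>B\}\cup\{T_{\mathrm{CG},n}\le d\}\), exactly as in the paper's proof. You also spell out the monotonicity \(F_{\min(d,K(p))}(p)\ge F(p)\) and the computation \(F(p)=m_0(p)=1\) when \(K(p)=0\), both of which the paper leaves implicit.
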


\begin{proof}
For every fixed \(d\), Proposition~\ref{prop:fixed-degree-app} gives
\[
  \mathbb P(T_{\mathrm{CG},n}(\varepsilon,p)\le d)
  =
  \mathbb P(M_{d,n}(p)\le\varepsilon)
  \to0.
\]
Thus \(T_{\mathrm{CG},n}\to\infty\) in probability.  By (A.4),
\(T_{\mathrm{RCD},n}/n=O_{\mathbb P}(1)\).  Hence
\[
  \mathbb P(\rho_n>\eta)
  \le
  \mathbb P(T_{\mathrm{RCD},n}/n>C)
  +
  \mathbb P(T_{\mathrm{CG},n}\le C/\eta),
\]
and the conclusion follows by first taking \(n\to\infty\), then \(C\to\infty\).
\end{proof}

\begin{proposition}[Strictly above the floor]
\label{prop:above-floor-app}
If \(0<p<1/3\) and \(F(p)<\varepsilon<1\), then there exists
\(\eta(p,\varepsilon)>0\) such that
\[
  \mathbb P(\rho_n(\varepsilon,p)>\eta(p,\varepsilon))\to1.
\]
\end{proposition}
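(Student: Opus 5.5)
The argument bounds the numerator of \(\rho_n\) from below and the denominator from above, each with probability tending to one. Proposition~\ref{prop:rcd-lower-app} gives the lower bound, Proposition~\ref{prop:fixed-degree-app} the upper bound.

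\emph{CG side.} Since \(0<p<1/3\), we have \(K(p)\ge1\), and \(F(p)=F_{K(p)}(p)\) is defined. Take \(d=K(p)\), a fixed integer. By (A.8), \(M_{K(p),n}(p)\to F(p)\) in probability. Because \(F(p)<\varepsilon\), it follows that \(\mathbb P(M_{K(p),n}(p)\le\varepsilon)\to1\). Proposition~\ref{prop:cg-poly-app} then gives
\[
\mathbb P\bigl(T_{\mathrm{CG},n}(\varepsilon,p)\le K(p)\bigr)\to1.
\]
So CG stops after a bounded, deterministic number of steps, with high probability.

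\emph{RCD side.} Since \(\varepsilon<1\) and \(\gamma^{1-p}\to0\) as \(\gamma\downarrow0\), we can fix \(\gamma\in(0,1)\) small enough that \(1-\gamma^{1-p}>\varepsilon\). Proposition~\ref{prop:rcd-lower-app} then supplies \(a>0\), namely \(a=\gamma/2\), with
\[
\mathbb P\bigl(T_{\mathrm{RCD},n}(\varepsilon,p;I)/n>a\bigr)\to1.
\]

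\emph{Combining.} Set \(\eta(p,\varepsilon)=a/(2K(p))\). On the intersection of the two events we have
\[
\rho_n(\varepsilon,p)\ge \frac{a}{K(p)}>\eta(p,\varepsilon).
\]
Each event has probability tending to one, so the intersection does too, by a union bound. Here \(T_{\mathrm{CG},n}\ge1\) by Proposition~\ref{prop:cg-poly-app}, so the ratio is well defined.

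There is essentially no obstacle here; the substantive work was already done in the two cited propositions. The one point to check is that the lower gate applies to every \(\varepsilon<1\), which is why \(\gamma\) is chosen small depending on \(\varepsilon\). The hypothesis \(p<1/3\) enters in two places. It is needed by Proposition~\ref{prop:rcd-lower-app}. It also guarantees \(K(p)\ge1\), so that \(F(p)<1\) and the regime \(F(p)<\varepsilon<1\) is nonempty.
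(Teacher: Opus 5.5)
Your proposal is correct and follows essentially the same route as the paper. Both take \(d=K(p)\) in the fixed-degree limit to get \(T_{\mathrm{CG},n}(\varepsilon,p)\le K(p)\) with high probability, invoke the RCD lower gate to get \(T_{\mathrm{RCD},n}/n>a\), and bound \(\rho_n\) below by \(a/K(p)\) on the intersection; your explicit choice of \(\gamma\) with \(1-\gamma^{1-p}>\varepsilon\) just spells out the step the paper leaves as ``for some \(a>0\).''
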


\begin{proof}
Take \(d=K(p)\).  By Proposition~\ref{prop:fixed-degree-app},
\[
  M_{d,n}(p)\to F(p)<\varepsilon
\]
in probability, so \(T_{\mathrm{CG},n}(\varepsilon,p)\le d\) with probability
tending to one.  Proposition~\ref{prop:rcd-lower-app} gives
\[
  T_{\mathrm{RCD},n}(\varepsilon,p;I)/n>a
\]
with probability tending to one for some \(a>0\).  On the intersection,
\(\rho_n>a/d\).
\end{proof}

\begin{lemma}[Random hyperplane contraction]
\label{lem:random-hyperplane-contraction-app}
Let \(\mathcal H\) be a separable Hilbert space, and let
\(G_0,G_1,\ldots\) be iid nonzero random vectors whose law has full support in
\(\mathcal H\).  For \(g\ne0\), let \(P_{g^\perp}\) be the orthogonal
projection onto \(g^\perp\).  Then, for every fixed \(h\in\mathcal H\),
\[
  P_{G_{t-1}^\perp}\cdots P_{G_0^\perp}h\to0
  \qquad\text{strongly almost surely}.
\]
\end{lemma}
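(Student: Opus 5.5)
The plan is to reduce the statement to the Amemiya--Ando theorem on random products of projections (cited in the bibliography as \cite{AmemiyaAndo1965}), and then identify the limit by a full-support argument. Write \(T_t=P_{G_{t-1}^\perp}\cdots P_{G_0^\perp}\) and \(h_t=T_th\).

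\textbf{Step 1: monotonicity and a square-summable telescoping bound.} Each \(P_{G_s^\perp}\) is an orthogonal projection, so
\[
\|h_{t+1}\|^2=\|h_t\|^2-\frac{|\langle G_t,h_t\rangle|^2}{\|G_t\|^2}.
\]
Hence \(\|h_t\|\) is nonincreasing, converges almost surely to some \(L\ge0\), and
\[
\sum_t \frac{|\langle G_t,h_t\rangle|^2}{\|G_t\|^2}\le\|h\|^2 .
\]
It remains to show \(L=0\).

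\textbf{Step 2: weak convergence to zero.} I would invoke Amemiya--Ando in its random form. The random product of projections drawn from the family \(\{P_{g^\perp}\}\) converges weakly; the strong version, which holds for finitely many projections, is not available for a continuum family. The limit lies in the intersection of the ranges of the projections that appear infinitely often.

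I expect this to be the main obstacle, since the continuum family is exactly what blocks the strong version. To get around it directly, I would show \(\langle h_t,v\rangle\to0\) for each fixed \(v\) in a countable dense set. Fix \(v\) and \(\delta>0\). By full support, the event \(\{\|G/\|G\|-v/\|v\|\|<\delta\}\) (taking \(v\ne0\)) has probability \(q>0\), and by iid-ness it occurs infinitely often almost surely. At such times \(t\), Step 1 gives \(|\langle G_t,h_t\rangle|/\|G_t\|\to0\) along the sequence. This yields
\[
|\langle h_t,v\rangle|\le\|v\|\bigl(o(1)+\delta\|h\|\bigr)
\]
at infinitely many times. Passing from ``infinitely many times'' to ``all large times'' requires controlling the drift \(h_{t+1}-h_t\). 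Its norm equals \(|\langle G_t,h_t\rangle|/\|G_t\|\), which is square-summable but not obviously summable. This is the delicate point. I would try to handle it by conditioning: given \(h_t\), the next increment has conditional second moment comparable to \(\mathbb E\bigl[|\langle \hat G,h_t\rangle|^2\bigr]\), where \(\hat G=G/\|G\|\).

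\textbf{Step 3: from weak to strong convergence.} Rather than completing Step 2 in full, I would argue directly that \(L=0\). Let \(\phi(x)=\mathbb E|\langle\hat G,x\rangle|^2\), where \(\hat G=G/\|G\|\). The conditional expectation of the decrement in Step 1 is \(\phi(h_t)\), so summing and using Step 1 gives \(\sum_t\mathbb E\,\phi(h_t)<\infty\), hence \(\phi(h_t)\to0\) almost surely along the sequence. Full support makes \(\phi(x)>0\) for every \(x\ne0\).

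On the event \(L>0\), the \(h_t\) stay in the annulus \(L\le\|x\|\le\|h\|\). The difficulty is infinite dimensions, where \(\phi\) need not be bounded below on the sphere. I would combine this with Step 2: weak convergence \(h_t\rightharpoonup0\) together with \(\|h_t\|\to L\) forces the mass of \(h_t\) onto high-frequency directions. I would then use the orthonormal-basis truncation \(\phi(x)\ge\mathbb E|\langle\hat G,P_Nx\rangle|^2-(\text{tail})\), adapting the basis to \(h_t\) via full support near the normalized \(h_t/\|h_t\|\), to show \(\phi(h_t)\) cannot vanish. Concretely, with probability bounded below at each step, \(\hat G_t\) lies within a small angle of \(h_t/\|h_t\|\) only if \(h_t\) ranges over a relatively compact set. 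Establishing this compactness is where I expect to need the actual structure of \(G\) in the application, namely coordinate Gaussian-type vectors from the \(p>1\) row. In the statement I would deduce it from the weak limit being identified as \(0\). This final compactness/identification step is the main obstacle.
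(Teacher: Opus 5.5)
There is a genuine gap: the proposal never shows $L=0$. Step~2 does not work as stated. The Amemiya--Ando weak-convergence theorem you invoke concerns products drawn from a \emph{finite} family of projections, so it gives nothing for the continuum family $\{P_{g^\perp}\}$. Your direct attempt then stalls exactly where you say: the increments $\|h_{t+1}-h_t\|$ are only square-summable, so smallness of $\langle h_t,v\rangle$ at infinitely many times does not propagate to all large times.

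Step~3 correctly yields $\sum_t\mathbb E\,\phi(h_t)\le\|h\|^2$. However, $\phi(x)=\langle\Sigma x,x\rangle$ with $\Sigma=\mathbb E[\hat G\hat G^*]$ of trace $\mathbb E\|\hat G\|^2=1$. So $\phi$ is not bounded below on the unit sphere, and $\phi(h_t)\to0$ is a priori compatible with $\|h_t\|\to L>0$ if the mass of $h_t$ escapes into directions where $\Sigma$ is small. The compactness you propose to exclude this is not available under the lemma's hypotheses; you concede it would use the Gaussian structure of the application, which changes the statement. Deriving it from a weak limit $0$ is circular, because Step~2 was never established.

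The paper does not argue through $\phi$ at all. It takes strong almost-sure convergence of products of iid random projections to the projection onto $\bigcap_tG_t^\perp$ as a black box (attributed to Amemiya--Ando). Its only remaining work is to identify that intersection: full support, separability and Borel--Cantelli make $\{G_t\}$ almost surely dense, hence $\bigcap_tG_t^\perp=\{0\}$.

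The idea that closes your own route, with no compactness and no weak-convergence theorem, is to use the conditional \emph{mean} and not only the conditional decrement. With $\mathcal F_t=\sigma(G_0,\ldots,G_{t-1})$, independence gives $\mathbb E[h_{t+1}\mid\mathcal F_t]=(I-\Sigma)h_t$, so
\[
  h_t=(I-\Sigma)^th+\sum_{s<t}(I-\Sigma)^{t-1-s}\xi_s,
  \qquad
  \xi_s=\Sigma h_s-\langle h_s,\hat G_s\rangle\hat G_s .
\]
Here $\mathbb E[\xi_s\mid\mathcal F_s]=0$ and $\mathbb E[\|\xi_s\|^2\mid\mathcal F_s]=\phi(h_s)-\|\Sigma h_s\|^2\le\phi(h_s)$, so by your Step~1, $\sum_s\mathbb E\|\xi_s\|^2\le\|h\|^2$.

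Your observation that $\phi>0$ away from the origin says $\Sigma$ is injective. Together with $0\le\Sigma\le I$, this gives $(I-\Sigma)^u\to0$ strongly.

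Orthogonality of martingale increments gives $\mathbb E\|\sum_{s<t}(I-\Sigma)^{t-1-s}\xi_s\|^2=\sum_{s<t}\mathbb E\|(I-\Sigma)^{t-1-s}\xi_s\|^2$. This tends to $0$ by dominated convergence: each term is at most the summable $\mathbb E\|\xi_s\|^2$ and tends to $0$ for fixed $s$. Hence $\mathbb E\|h_t\|^2\to0$, and since $\|h_t\|\downarrow L$ pointwise, $\mathbb E L^2=0$.

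The reason this works is that the averaged drift $-\Sigma h_t$ only shrinks and never rotates. The rotating part is incoherent noise whose total energy is paid for by the norm loss. This rules out the escape to high frequencies you were worried about, and it proves directly the strong convergence that the paper imports.
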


\begin{proof}
We use the projection-product theorem for random products of orthogonal
projections in Hilbert space: for iid random closed subspaces \(M_t\), the
product \(P_{M_{t-1}}\cdots P_{M_0}\) converges strongly almost surely to the
orthogonal projection onto \(\cap_{t\ge0}M_t\) \cite{AmemiyaAndo1965}.
Applying this theorem with \(M_t=G_t^\perp\) gives
\[
  P_{G_{t-1}^\perp}\cdots P_{G_0^\perp}h
  \to
  P_{\cap_{t\ge0}G_t^\perp}h
  \qquad\text{strongly almost surely}.
\]

It remains only to identify the intersection.  Since the law of \(G_0\) has
full support and \(\mathcal H\) is separable, every ball in a countable base
has positive probability.  By independence and Borel--Cantelli, each such ball
is hit infinitely often by the sequence \(\{G_t:t\ge0\}\).  Hence
\(\{G_t:t\ge0\}\) is almost surely dense in \(\mathcal H\), and therefore its
closed linear span is all of \(\mathcal H\).  Thus
\[
  \bigcap_{t\ge0}G_t^\perp=\{0\}
  \qquad\text{almost surely}.
\]
The projection-product limit is consequently \(P_{\{0\}}h=0\), which proves
the claim.
\end{proof}

\begin{lemma}[Summable-spectrum RCD contraction]
\label{lem:summable-rcd-contraction-app}
Fix \(p>1\).  For every \(\varepsilon,\delta\in(0,1)\), there exists a
finite integer \(L=L(p,\varepsilon,\delta)\) such that
\[
  \liminf_{n\to\infty}
  \mathbb P\!\left(
  T_{\mathrm{RCD},n}(\varepsilon,p;I)\le L
  \right)
  \ge 1-\delta .
\]
\end{lemma}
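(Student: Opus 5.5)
The plan is to rewrite RCD in the \(A_n\)-geometry as a product of random orthogonal projections, and to show that for fixed \(L\) its first \(L\) steps converge in law to the Hilbert-space model of Lemma~\ref{lem:random-hyperplane-contraction-app}.

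\textbf{Step 1: projection form of RCD.} Put \(w_t=A_n^{1/2}u_t\), so that \(V_t=\|w_t\|_2^2\). With \(g^{(i)}=A_n^{1/2}\mathbf e_i\), the exact coordinate update reads
\[
w_{t+1}=\bigl(I-g^{(i_t)}g^{(i_t)*}/\|g^{(i_t)}\|^2\bigr)w_t=P_{g^{(i_t)\perp}}w_t,
\]
since \((Au_t)_i=g^{(i)*}w_t\) and \(A(i,i)=\|g^{(i)}\|^2\). The sampling law is \(\mathbb P(i_t=i)=\|g^{(i)}\|^2/\operatorname{tr}A_n\), which size-biases the columns. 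The initial vector is \(w_0=-A_n^{1/2}z_n\).

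\textbf{Step 2: pass to the eigenbasis.} Rotate by \(U_n^*\) and work in \(\ell^2(\mathbb N)\), padding with zeros. Then \(\tilde g^{(i)}=\Lambda_n^{1/2}U_n^*\mathbf e_i\) and \(\tilde w_0=-\Lambda_n^{1/2}U_n^*z_n\). The ratio \(V_L/V_0\) is invariant under rescaling \(w_0\) and each \(g\) separately, so multiply everything by \(\sqrt n\). The vectors \(\sqrt n\,U_n^*\mathbf e_i\) and \(\sqrt n\,U_n^*z_n\) then have, on any fixed window of \(J\) coordinates, asymptotically iid standard complex Gaussian entries.

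\textbf{Step 3: the limit model.} Let \(\Lambda^{1/2}=\operatorname{diag}(j^{-p/2})\) on \(\ell^2\) and let \(\gamma\) be an iid standard complex Gaussian sequence. Because \(p>1\), \(\mathbb E\|\Lambda^{1/2}\gamma\|^2=\sum_j j^{-p}<\infty\), so \(\Lambda^{1/2}\gamma\in\ell^2\) almost surely. Take \(h=\Lambda^{1/2}\gamma'\), and take \(G_0,G_1,\ldots\) iid with the size-biased law \(\propto\|g\|^2\,\mathrm{Law}(\Lambda^{1/2}\gamma)\), independent of \(h\). This law has full support, since \(\Lambda^{1/2}\) has dense range and Gaussian measures charge every ball. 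Lemma~\ref{lem:random-hyperplane-contraction-app} then gives
\[
R_L:=\|P_{G_{L-1}^\perp}\cdots P_{G_0^\perp}h\|^2/\|h\|^2\to0 \quad\text{almost surely}.
\]
Choose \(L\) with \(\mathbb P(R_L\ge\varepsilon)<\delta/2\). Since \(R_L\) is almost surely nonincreasing in \(L\), one can also take \(\varepsilon\) to be a continuity point of the law of \(R_L\), which keeps the portmanteau step in Step 6 valid.

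\textbf{Step 4: convergence of the finite-\(L\) input.} The key step is to show that the random input \((\sqrt n\,\tilde w_0,\sqrt n\,\tilde g^{(i_0)},\ldots,\sqrt n\,\tilde g^{(i_{L-1})})\) converges in distribution in \((\ell^2)^{L+1}\) to \((h,G_0,\ldots,G_{L-1})\), up to the irrelevant sign of \(w_0\). I would argue in three parts.
\begin{enumerate}
\item \emph{Truncation.} Truncate to the first \(J\) eigencoordinates. The tail is uniformly small in \(L^2\): for any fixed unit vector \(v\), \(\mathbb E\,n\sum_{j>J}j^{-p}|(U_n^*v)_j|^2=\sum_{j>J}j^{-p}\). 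For sampled columns, size-biasing costs at most a fourth-moment factor, which is uniformly bounded.
\item \emph{Empirical law of the columns.} Conditionally on \(U_n\), the indices \(i_t\) are drawn with replacement from the empirical law of the \(n\) columns, weighted by \(\|\tilde g^{(i)}\|^2\). This empirical law of the truncated vectors \(\sqrt n\,\Lambda_J^{1/2}U_n^*\mathbf e_i\) converges weakly in probability to the Gaussian law, for example via exchangeability and second-moment/decorrelation computations for a few Haar columns. Uniform integrability of \(\|g\|^2\) then transfers this convergence to the size-biased draw.
\item \emph{Independence.} Repeated indices and the coupling with \(z_n\) are negligible, because each individual weight is \(O_{\mathbb P}(\log n/n)\) relative to the total, and \(z_n\) is independent of \((U_n,I)\).
\end{enumerate}

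\textbf{Step 5: continuity.} The map \((h,G_0,\ldots)\mapsto R_L\) is continuous at every point where \(h\ne0\) and all \(G_t\ne0\), and that set carries full measure for the limit law. The continuous mapping theorem therefore gives \(V_L/V_0\Rightarrow R_L\).

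\textbf{Step 6: conclusion.} By the portmanteau theorem at the chosen continuity point,
\[
\liminf_n\mathbb P\bigl(T_{\mathrm{RCD},n}\le L\bigr)\ge\liminf_n\mathbb P(V_L/V_0<\varepsilon)\ge\mathbb P(R_L<\varepsilon)\ge1-\delta.
\]

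\textbf{Main obstacle.} I expect the hardest part to be the infinite-dimensional weak convergence of Step 4 for size-biased draws from the columns of a Haar matrix. My first attempt would control everything through truncation at level \(J\) combined with the uniform tail bound \(\sum_{j>J}j^{-p}\), so that only finite-dimensional Gaussian limits of a few Haar columns are needed.
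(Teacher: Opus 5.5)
Your proposal is correct and follows essentially the same route as the paper: you show that the rescaled RCD error process converges in law, for each fixed number of steps, to the Hilbert-space random-projection model with size-biased Gaussian directions, and then you use Lemma~\ref{lem:random-hyperplane-contraction-app} to choose $L$. Your $A_n^{1/2}$-rotated $\ell^2$ picture is the paper's weighted space $\mathcal H_p$ transported by the isometry $v\mapsto\Lambda^{1/2}v$, and your truncation, continuity and portmanteau steps fill in details the paper only asserts. One small correction: you cannot choose the given $\varepsilon$ to be a continuity point, but you do not need to, because the portmanteau inequality for the open set $\{R_L<\varepsilon\}$ already gives the bound.
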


\begin{proof}
Let
\[
  \mathcal H_p
  =
  \left\{
  v=(v_j)_{j\ge1}:
  \|v\|_{\mathcal H_p}^2
  :=
  \sum_{j\ge1}j^{-p}|v_j|^2<\infty
  \right\}.
\]
Since \(p>1\),
\[
  H_p:=\sum_{j\ge1}j^{-p}<\infty,
\]
so a standard complex Gaussian sequence belongs to \(\mathcal H_p\) almost
surely.

For \(1\le i\le n\), define the rescaled Haar row
\[
  g_{i,n}=\sqrt n\,U_n^*\mathbf e_i
\]
and its weighted norm
\[
  D_{i,n}
  =
  \sum_{j=1}^n j^{-p}|g_{i,n,j}|^2.
\]
Since
\[
  A_n(i,i)
  =
  \mathbf e_i^*U_n\Lambda_nU_n^*\mathbf e_i
  =
  \frac1nD_{i,n},
\]
the RCD sampling probability is
\[
  \mathbb P(i_t=i\mid A_n)
  =
  \frac{A_n(i,i)}{\operatorname{tr}(A_n)}
  =
  \frac{D_{i,n}}{nH_{n,p}},
  \qquad
  H_{n,p}=\sum_{j=1}^n j^{-p}.
\]

The weighted empirical row law
\[
  \sum_{i=1}^n
  \frac{D_{i,n}}{nH_{n,p}}\delta_{g_{i,n}}
\]
converges weakly, in probability, to the size-biased Gaussian law
\[
  Q(dg)
  =
  \frac{\|g\|_{\mathcal H_p}^2}{H_p}\Gamma(dg),
\]
where \(\Gamma\) is the law of a standard complex Gaussian sequence in
\(\mathcal H_p\).  The finite-coordinate marginals follow from the Gaussian
representation of Haar rows, while the tail tightness follows from
\(\sum_{j\ge1}j^{-p}<\infty\).

Now write the spectral error as
\[
  \beta_{t,n}=\sqrt n\,U_n^*(x_t-z_n).
\]
For every fixed number of RCD steps, the process \((\beta_{t,n})\) converges
to the Hilbert-space process
\[
  B_{t+1}
  =
  B_t
  -
  \frac{\langle G_t,B_t\rangle_{\mathcal H_p}}
       {\|G_t\|_{\mathcal H_p}^2}
  G_t,
  \qquad
  G_t\stackrel{\mathrm{iid}}{\sim}Q,
\]
with \(B_0\sim\Gamma\).  Equivalently,
\[
  B_{t+1}=P_{G_t^\perp}B_t.
\]

The law \(Q\) has full support in \(\mathcal H_p\).  Therefore, by
Lemma~\ref{lem:random-hyperplane-contraction-app},
\[
  \|B_t\|_{\mathcal H_p}\to0
  \qquad\text{almost surely}.
\]
Equivalently,
\[
  \frac{\|B_t\|_{\mathcal H_p}^2}{\|B_0\|_{\mathcal H_p}^2}\to0
  \qquad\text{almost surely}.
\]

Thus, for the given \(\varepsilon,\delta\), one can choose a fixed integer
\(L\) such that
\[
  \mathbb P\!\left(
  \|B_L\|_{\mathcal H_p}^2
  \le
  \varepsilon\|B_0\|_{\mathcal H_p}^2
  \right)
  \ge 1-\delta .
\]
The fixed-step convergence of \(\beta_{t,n}\) to \(B_t\) then implies
\[
  \liminf_{n\to\infty}
  \mathbb P\!\left(
  T_{\mathrm{RCD},n}(\varepsilon,p;I)\le L
  \right)
  \ge 1-\delta .
\]
\end{proof}

\begin{proposition}[The summable row]
\label{prop:pgt1-app}
If \(p>1\), then
\[
  T_{\mathrm{RCD},n}(\varepsilon,p;I)=O_{\mathbb P}(1),
  \qquad
  \rho_n(\varepsilon,p)=O_{\mathbb P}(n^{-1}).
\]
\end{proposition}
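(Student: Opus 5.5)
The plan is to derive the proposition directly from Lemma~\ref{lem:summable-rcd-contraction-app} together with the trivial CG lower bound from Proposition~\ref{prop:cg-poly-app}. Fix \(\varepsilon\in(0,1)\). For the first claim we must show that for every \(\delta\in(0,1)\) there is a finite constant \(L\) with
\[
  \limsup_{n\to\infty}\mathbb P\!\left(T_{\mathrm{RCD},n}(\varepsilon,p;I)>L\right)\le\delta .
\]
Lemma~\ref{lem:summable-rcd-contraction-app} supplies exactly this, with \(L=L(p,\varepsilon,\delta)\), since \(\liminf_n\mathbb P(T_{\mathrm{RCD},n}\le L)\ge1-\delta\). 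Tightness also requires control of the finitely many small \(n\). This is immediate, because for each fixed \(n\) the matrix \(A_n\succ0\) is finite-dimensional, and \eqref{} style bound (A.3) from Proposition~\ref{prop:rcd-budget-app} shows that \(T_{\mathrm{RCD},n}\) is almost surely finite with geometric tails. Enlarging \(L\) therefore handles those indices. Together these give \(T_{\mathrm{RCD},n}(\varepsilon,p;I)=O_{\mathbb P}(1)\).

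For the ratio, Proposition~\ref{prop:cg-poly-app} gives \(M_{0,n}(p)=1>\varepsilon\), hence \(T_{\mathrm{CG},n}(\varepsilon,p)\ge1\) deterministically. Consequently
\[
  \rho_n(\varepsilon,p)=\frac{T_{\mathrm{RCD},n}(\varepsilon,p;I)}{n\,T_{\mathrm{CG},n}(\varepsilon,p)}\le\frac{T_{\mathrm{RCD},n}(\varepsilon,p;I)}{n},
\]
so \(\mathbb P(n\rho_n>L)\le\mathbb P(T_{\mathrm{RCD},n}>L)\), and \(\rho_n=O_{\mathbb P}(n^{-1})\) follows.

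All the real content sits inside Lemma~\ref{lem:summable-rcd-contraction-app}, which we take as given. The only point needing care here is the passage from the \(\liminf\) statement to genuine stochastic boundedness uniformly in \(n\), and that is handled by the finite-\(n\) almost-sure finiteness argument above.
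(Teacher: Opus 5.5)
Your proposal is correct and follows the paper's proof essentially verbatim. Both obtain \(T_{\mathrm{RCD},n}(\varepsilon,p;I)=O_{\mathbb P}(1)\) directly from Lemma~\ref{lem:summable-rcd-contraction-app}, then bound \(\rho_n\le T_{\mathrm{RCD},n}/n\) using \(T_{\mathrm{CG},n}\ge1\), which follows from \(M_{0,n}(p)=1>\varepsilon\). Your extra step handling finitely many small \(n\) via the geometric tail bound (A.3) is a harmless refinement the paper leaves implicit; the bound is deterministic here because \(\lambda_{\min}(A_n)=n^{-p}\) and \(\operatorname{tr}(A_n)=H_{n,p}\).
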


\begin{proof}
By Lemma~\ref{lem:summable-rcd-contraction-app}, for every \(\delta>0\) there
exists a fixed \(L<\infty\) such that
\[
  \liminf_{n\to\infty}
  \mathbb P\!\left(
  T_{\mathrm{RCD},n}(\varepsilon,p;I)\le L
  \right)
  \ge 1-\delta .
\]
This is exactly
\[
  T_{\mathrm{RCD},n}(\varepsilon,p;I)=O_{\mathbb P}(1).
\]
Since \(T_{\mathrm{CG},n}(\varepsilon,p)\ge1\),
\[
  \rho_n(\varepsilon,p)
  =
  \frac{T_{\mathrm{RCD},n}(\varepsilon,p;I)/n}
       {T_{\mathrm{CG},n}(\varepsilon,p)}
  \le
  \frac{T_{\mathrm{RCD},n}(\varepsilon,p;I)}{n}
  =
  O_{\mathbb P}(n^{-1}).
\]
\end{proof}

\subsection{The logarithmic row \(p=1\)}

\begin{lemma}[Random Gram consistency under small leverage]
\label{lem:random-gram-consistency-app}
Let \(a_{m,n}\in\mathbb C^r\) be deterministic vectors satisfying
\[
  \sum_m a_{m,n}a_{m,n}^*=I_r,
  \qquad
  \max_m\|a_{m,n}\|_2^2\le \tau_n.
\]
Let \(E_m\) be independent mean-one exponential random variables.  If
\[
  \tau_n\log r\to0,
\]
then
\[
  \left\|
  \sum_m(E_m-1)a_{m,n}a_{m,n}^*
  \right\|_{\mathrm{op}}
  \xrightarrow{\mathbb P}0.
\]
\end{lemma}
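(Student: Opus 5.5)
The plan is to apply a matrix Bernstein/Chernoff inequality for sums of independent Hermitian random matrices. To handle the unbounded exponential weights, we first truncate them. Write \(Z_m=(E_m-1)a_{m,n}a_{m,n}^*\). These are independent, mean zero and Hermitian. Since \(\operatorname{Var}(E_m)=1\), the variance proxy is
\[
\Bigl\|\sum_m\mathbb E Z_m^2\Bigr\|=\Bigl\|\sum_m\|a_{m,n}\|_2^2\,a_{m,n}a_{m,n}^*\Bigr\|\le\tau_n\Bigl\|\sum_m a_{m,n}a_{m,n}^*\Bigr\|=\tau_n .
\]
The only obstruction to invoking bounded matrix Bernstein directly is that \(E_m-1\) is unbounded above. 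We therefore use a sub-exponential version, or truncate explicitly.

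For the truncation, fix a level \(L_n\) and split \(E_m-1=(E_m\mathbf 1_{E_m\le L_n}-c_n)+(E_m\mathbf 1_{E_m>L_n}-(1-c_n))\), where \(c_n=\mathbb E[E_m\mathbf 1_{E_m\le L_n}]\). The bounded part has summands with operator norm at most \(L_n\tau_n\) and variance proxy at most \(\tau_n\). Matrix Bernstein then gives, for fixed \(\delta>0\),
\[
\mathbb P\bigl(\|\cdot\|>\delta\bigr)\le 2r\exp\!\Bigl(-\frac{\delta^2/2}{\tau_n+L_n\tau_n\delta/3}\Bigr).
\]
The tail part is a positive semidefinite sum minus its mean. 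Its expected operator norm is at most the trace-free bound
\[
\mathbb E\Bigl\|\sum_m E_m\mathbf 1_{E_m>L_n}a_ma_m^*\Bigr\|\le\mathbb E[E\mathbf 1_{E>L_n}]\,\Bigl\|\sum a_ma_m^*\Bigr\|=(L_n+1)e^{-L_n},
\]
because \(\mathbb E\) of a PSD sum is dominated in norm by \(\|\mathbb E\,\cdot\|\) only after bounding, say, by the trace. To avoid this subtlety I would instead choose \(L_n\) so that with high probability no \(E_m\) exceeds \(L_n\) on the relevant indices. That route may need a bound on the number of indices, so the cleaner option is a direct sub-exponential matrix Bernstein.

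The preferred execution is therefore the matrix Bernstein inequality for sub-exponential summands (Tropp's moment version). It requires
\[
\mathbb E Z_m^q\preceq \tfrac{q!}{2}R^{q-2}\Sigma_m,
\qquad R=\tau_n,\quad \Sigma_m=\|a_m\|^2a_ma_m^*,
\]
and this follows from \(\mathbb E|E-1|^q\le q!\) together with \((a a^*)^q=\|a\|^{2q-2}aa^*\). The conclusion is
\[
\mathbb P\Bigl(\Bigl\|\sum Z_m\Bigr\|\ge\delta\Bigr)\le 2r\exp\!\Bigl(-\frac{\delta^2/2}{\tau_n+\tau_n\delta}\Bigr)=2\exp\!\Bigl(\log r-\frac{c_\delta}{\tau_n}\Bigr).
\]
Since \(\tau_n\log r\to0\), we have \(c_\delta/\tau_n-\log r\to\infty\), so this probability tends to zero for every fixed \(\delta\), which is the claim.

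The main obstacle is checking the moment domination \(\mathbb E Z_m^q\preceq\frac{q!}{2}R^{q-2}\Sigma_m\) with constants of this form. In particular, \(\mathbb E|E-1|^q\) must be bounded by \(q!\) times an absolute constant, and any such constant can be absorbed into \(R\). The case \(r=1\) is degenerate (\(\log r=0\)) and is handled by the same bound read as \(\tau_n\to0\), which is implied if we interpret the hypothesis with \(\log(r+1)\). This holds automatically when \(r\ge2\).
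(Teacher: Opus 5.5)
Your proposal is correct and follows essentially the same route as the paper. Both arguments use a matrix Laplace-transform tail bound of the form $2r\exp(-c_\delta/\tau_n)$ and close with $\tau_n\log r\to0$; that step forces $\tau_n\to0$ only when $r\ge2$, a restriction you rightly flag and the paper leaves implicit. The only difference is the inequality invoked: you apply the sub-exponential moment form of matrix Bernstein to the centered sum, whereas the paper cites a matrix Chernoff bound for $\sum_m E_m a_{m,n}a_{m,n}^*$, whose standard version assumes bounded summands. Your version therefore has hypotheses that you actually check for exponential weights.

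One small repair concerns the factor $2$ lost in $\mathbb E|E-1|^q\le q!$, which cannot be absorbed into $R$ at $q=2$. Nothing actually needs absorbing, since $\mathbb E\bigl(\pm(E-1)\bigr)^q\le q!\sum_{j=0}^q(-1)^j/j!\le q!/2$ for every $q\ge2$. So your moment condition holds exactly with $R=\tau_n$ and $\Sigma_m=\|a_{m,n}\|_2^2\,a_{m,n}a_{m,n}^*$.
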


\begin{proof}
Set
\[
  X_n=\sum_m E_m a_{m,n}a_{m,n}^*.
\]
Then
\[
  \mathbb E X_n=\sum_m a_{m,n}a_{m,n}^*=I_r.
\]
The exponential matrix Chernoff bound for sums of independent positive
semidefinite rank-one matrices gives the following estimate: for every fixed
\(0<t<1\), there is an absolute constant \(c>0\) such that
\[
  \mathbb P\!\left(\|X_n-I_r\|_{\mathrm{op}}>t\right)
  \le
  2r\exp\!\left(-\frac{c t^2}{\tau_n}\right).
\]
Since \(\tau_n\log r\to0\), the right-hand side tends to zero for every fixed
\(t>0\).  Therefore
\[
  \|X_n-I_r\|_{\mathrm{op}}\xrightarrow{\mathbb P}0.
\]
Because
\[
  X_n-I_r
  =
  \sum_m(E_m-1)a_{m,n}a_{m,n}^*,
\]
the desired conclusion follows.
\end{proof}

\begin{proposition}[CG non-stopping at \(p=1\)]
\label{prop:p1-cg-app}
Let \(a_n\to\infty\) be deterministic with
\[
  \log a_n=o(\log n),
\]
and set
\[
  d_n=\lfloor a_n\rfloor.
\]
Then
\[
  M_{d_n,n}(1)\xrightarrow{\mathbb P}1,
  \qquad
  \mathbb P(T_{\mathrm{CG},n}(\varepsilon,1)\le a_n)\to0.
\]
\end{proposition}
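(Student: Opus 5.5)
We aim to show that at $p=1$, no residual polynomial of degree $d_n=\lfloor a_n\rfloor$ with $q(0)=1$ can push the weighted quadratic form noticeably below $1$. Since $M_{d,n}\le 1$ always, it suffices to prove that for every fixed $\delta>0$,
\[
\mathbb P\bigl(M_{d_n,n}(1)\le 1-\delta\bigr)\to0 .
\]
The second claim then follows at once from Proposition~\ref{prop:cg-poly-app}, because $\{T_{\mathrm{CG},n}\le a_n\}=\{M_{d_n,n}\le\varepsilon\}$ and $\varepsilon<1$.

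\textbf{Step 1: reduction to a Gram/Schur-complement problem.} Using (A.1), write the objective as a ratio of quadratic forms in the coefficient vector $c\in\mathbb C^{d+1}$ of $q$:
\[
\frac{c^*\widehat G c}{\widehat G_{00}},
\qquad
\widehat G=\sum_j E_j\, v_j v_j^*,
\qquad
v_j=j^{-1/2}\bigl(1,\lambda_j,\ldots,\lambda_j^{d}\bigr)^T,
\quad \lambda_j=j^{-1}.
\]
The constraint is $c_0=1$. Hence
\[
M_{d,n}=\frac{1}{\widehat G_{00}\,(\widehat G^{-1})_{00}},
\]
which is the Schur complement of the $00$ entry divided by that entry. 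Let $G=\sum_j v_jv_j^*$ be the deterministic Gram matrix. Whitening by $G^{-1/2}$ sets $a_j=G^{-1/2}v_j$, so that $\sum_j a_ja_j^*=I_{d+1}$ and
\[
G^{-1/2}\widehat G\,G^{-1/2}=\sum_j E_j a_ja_j^* .
\]
If the latter is within operator norm $t$ of $I$, then $\widehat G$ and $G$ are comparable within factors $1\pm t$. The quantity $\widehat G_{00}(\widehat G^{-1})_{00}$ is then within a factor $(1+t)/(1-t)$ of the deterministic quantity $G_{00}(G^{-1})_{00}$. Lemma~\ref{lem:random-gram-consistency-app} applies once the leverages satisfy $\tau_n=\max_j\|a_j\|_2^2$ and $\tau_n\log(d_n+1)\to0$. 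This reduces the claim to two deterministic facts:
\begin{enumerate}
\item[(i)] small leverage of every spectral point;
\item[(ii)] the deterministic floor satisfies $1/(G_{00}(G^{-1})_{00})\to1$.
\end{enumerate}

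\textbf{Step 2: the deterministic floor.} The quantity in (ii) equals
\[
\min_{q(0)=1,\ \deg q\le d}\ \frac{\sum_j j^{-1}q(1/j)^2}{H_{n,1}} .
\]
Here $H_{n,1}\sim\log n$ spreads the mass uniformly in $\log j$. Changing variables to $u=\log j/\log n\in[0,1]$, the eigenvalues are $\lambda=n^{-u}$. A polynomial of degree $d$ with $q(0)=1$ cannot be small on a long stretch of $u$-values near $u=1$, where $\lambda$ is tiny. Concretely, I would prove that $|q(\lambda)-1|$ is small for $\lambda\le n^{-1+\eta}$. The tool is a Markov/Remez-type bound: since $\sum_j j^{-1}q(1/j)^2\le H_{n,1}$, the polynomial is controlled in $L^2$ on $[1/j_0,1]$, hence its coefficients are bounded by roughly $C^d\, j_0^{\,d}$, so $|q(\lambda)-1|\le C^d j_0^{\,d}\lambda$. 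Taking $j_0=n^{\eta}$ forces $q\approx1$ on the range $j\ge n^{2d\eta}\cdot$polylog. With $d=a_n$ and $\log a_n=o(\log n)$, I would choose $\eta=\eta_n$ with $d\eta_n\to0$ slowly. Then a fraction $1-o(1)$ of the logarithmic mass sits where $q^2\ge1-o(1)$, giving floor $\ge 1-o(1)$. This is the main obstacle, and the exact bookkeeping may instead need $d\log d=o(\log n)$-type room, which the hypothesis $\log a_n=o(\log n)$ does provide. I would first try an explicit Chebyshev-in-$\log$ comparison; the fallback is Hankel-matrix conditioning bounds.

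\textbf{Step 3: leverage bound.} Step 1 needs $\tau_n\log(d_n+1)\to0$, i.e.
\[
\max_j\, v_j^*G^{-1}v_j\;=\;\max_j\ \sup_{\deg s\le d}\frac{j^{-1}|s(1/j)|^2}{\sum_i i^{-1}|s(1/i)|^2}\;\longrightarrow\;0 .
\]
This is a Christoffel-function estimate for the discrete measure with weights $1/i$ at the points $1/i$. For the small-$\lambda$ points the single weight $1/j$ is tiny compared with total mass of neighbours. For the large-$\lambda$ points ($j$ small), where an individual weight $1/j$ is not small, polynomials can localize. There I would instead truncate: the first $n^{o(1)}$ indices carry only $o(1)$ of the total mass $H_{n,1}$ in the relevant normalization. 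I would remove them by a monotonicity argument, since dropping nonnegative terms only lowers the objective. The random weights of the removed block are handled crudely by Markov's inequality on their sum. Applying the Chernoff lemma to the remaining, low-leverage block then completes the proof.
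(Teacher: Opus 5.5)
\textbf{Gap in Step~2.} Your scaffolding is the paper's. You discard a head block of indices, which only lowers the numerator, and the head's random mass is $o_{\mathbb P}(\log n)$ by Markov. You whiten the tail Gram matrix and use a Christoffel leverage bound on the tail together with Lemma~\ref{lem:random-gram-consistency-app} to pass to exponential weights. That reduces everything to a deterministic floor.

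The gap is in Step~2, and it is quantitative but decisive. Your method bounds the coefficients of $q$ and then uses $|q(\lambda)-1|\le(\sum_{k\ge1}|c_k|)\lambda$. This forces $q\approx1$ only once $\lambda\le e^{-Cd\log d}$, or at best $e^{-Cd}$. The loss is intrinsic to coefficient bounds, not a bookkeeping artifact: near-optimal polynomials really do have exponentially large coefficients. In the continuous tail model below, the optimizer is, up to sign, a shifted Legendre polynomial in a rescaled variable, with coefficients $\binom{d}{k}\binom{d+k}{k}$.

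So you discard logarithmic mass of order at least $d$ (order $d\log d$ in your own bookkeeping), and the argument needs $d_n\log d_n=o(\log n)$. Your claim that $\log a_n=o(\log n)$ supplies this room is false. For example, $a_n=(\log n)^2$ or $a_n=e^{\sqrt{\log n}}$ satisfies the hypothesis while $a_n\log a_n\gg\log n$.

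This is exactly the range that matters. RCD costs $O_{\mathbb P}(\log n)$ epochs at $p=1$, so Proposition~\ref{prop:p1-rate-app} yields $\rho_n(\varepsilon,1)\to0$ only if one may take $a_n\gg\log n$. Under $a_n\log a_n=o(\log n)$ one has $a_n=o(\log n)$, and the bound $\log n/a_n$ diverges.

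\textbf{Missing idea.} The loss is only $O(\log d)$, and it can be computed exactly. Work on the tail window $I_n=\{J_n,\ldots,n\}$ with $J_n=\lceil d_n^4\rceil$, so that $\sum_{m\in I_n}1/m=(1-o(1))\log n$ and $d_n^2/J_n\to0$. Project the constant onto $\operatorname{span}\{\xi,\ldots,\xi^{d_n}\}$ for the inner product $\sum_{m\in I_n}f(1/m)g(1/m)/m$.

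Up to $1+o(1)$ factors this is the continuous problem for $dy/y$ on $[J_n,\infty)$. Its Gram matrix and cross vector are $R\mathsf HR$ and $Ru$, with $R=\operatorname{diag}(J_n^{-1},\ldots,J_n^{-d_n})$, $\mathsf H_{kl}=1/(k+l)$ and $u_k=1/k$. Make this comparison at the level of quadratic forms on polynomials of degree $\le d_n$; that is where $d_n^2/J_n\to0$ is used. The projection energy is then exactly
\[
u^\top\mathsf H^{-1}u=2\sum_{\ell=1}^{d_n}\frac1\ell=O(\log d_n).
\]
Hence the tail floor is at least $\log n-\log J_n-O(\log d_n)=(1-o(1))\log n$, using only $\log d_n=o(\log n)$.

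With this in place of your Remez/Hankel-conditioning step, the rest of your plan goes through as you describe: tail leverage $\le Cd_n^2/J_n$ (hence $\tau_n\log(d_n+1)\to0$), matrix Chernoff, and Markov on the head. The result is then the paper's proof.
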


\begin{proof}
Set
\[
  J_n=\lceil d_n^4\rceil,
  \qquad
  I_n=\{J_n,\ldots,n\}.
\]
Since \(\log d_n=o(\log n)\), we have
\[
  \sum_{m\in I_n}\frac1m
  =
  \log n-\log J_n+O(1)
  =
  (1-o(1))\log n,
  \qquad
  d_n^2/J_n\to0.
\]

Consider the deterministic tail inner product
\[
  \langle f,g\rangle_n
  =
  \sum_{m\in I_n}\frac{f(1/m)g(1/m)}m.
\]
We project the constant function \(-1\) onto the span of the monomials
\[
  \xi,\xi^2,\ldots,\xi^{d_n}.
\]
The relevant Gram matrix and linear term are
\[
  G_{kl}=\sum_{m\in I_n}m^{-k-l-1},
  \qquad
  b_k=\sum_{m\in I_n}m^{-k-1},
  \qquad
  1\le k,l\le d_n.
\]
Since \(d_n^2/J_n\to0\), these are asymptotic to a diagonally rescaled shifted
Hilbert system:
\[
  G=R(\mathsf H+o(1))R,
  \qquad
  b=R(u+o(1)),
\]
where
\[
  R=\operatorname{diag}(J_n^{-1},\ldots,J_n^{-d_n}),
  \qquad
  \mathsf H_{kl}=\frac1{k+l},
  \qquad
  u_k=\frac1k.
\]
The shifted Hilbert-matrix identity gives
\[
  u^\top\mathsf H^{-1}u
  =
  2\sum_{\ell=1}^{d_n}\frac1\ell
  =
  O(\log d_n).
\]
Therefore the projection energy is \(O(\log d_n)\), and hence
\[
  \inf_{\deg q\le d_n,\ q(0)=1}
  \sum_{m\in I_n}\frac{|q(1/m)|^2}{m}
  \ge
  \sum_{m\in I_n}\frac1m-O(\log d_n)
  =
  (1-o(1))\log n.
  \tag{A.9}
\]

It remains to pass from deterministic weights to exponential weights.  Define
\[
  \widetilde v_m=(1,m^{-1},\ldots,m^{-d_n})^\top,
  \qquad
  \widetilde G
  =
  \sum_{m\in I_n}m^{-1}\widetilde v_m\widetilde v_m^\top.
\]
The Christoffel leverage bound in this tail window gives
\[
  \ell_m
  :=
  m^{-1}\widetilde v_m^\top\widetilde G^{-1}\widetilde v_m
  \le
  C\frac{d_n^2}{J_n}
  =
  o(1).
\]
Set
\[
  a_{m,n}
  =
  \widetilde G^{-1/2}m^{-1/2}\widetilde v_m.
\]
Then
\[
  \sum_{m\in I_n}a_{m,n}a_{m,n}^\top=I_{d_n+1},
  \qquad
  \max_{m\in I_n}\|a_{m,n}\|_2^2=o(1).
\]
Since \(d_n^2/J_n=d_n^{-2}(1+o(1))\), the leverage bound satisfies the
hypothesis of Lemma~\ref{lem:random-gram-consistency-app}.  Hence
\[
  \left\|
  \sum_{m\in I_n}(E_m-1)
  \widetilde G^{-1/2}m^{-1}
  \widetilde v_m\widetilde v_m^\top
  \widetilde G^{-1/2}
  \right\|_{\mathrm{op}}
  \xrightarrow{\mathbb P}0.
\]
Thus the random weighted quadratic form is relatively consistent with the
deterministic one on the polynomial space, and (A.9) implies
\[
  \inf_{\deg q\le d_n,\ q(0)=1}
  \sum_{m\in I_n}\frac{E_m|q(1/m)|^2}{m}
  \ge
  (1-o_{\mathbb P}(1))\log n.
\]
Also,
\[
  \sum_{m=1}^n\frac{E_m}{m}
  =
  (1+o_{\mathbb P}(1))\log n.
\]
Therefore
\[
  M_{d_n,n}(1)\ge1-o_{\mathbb P}(1).
\]
The reverse inequality follows from the feasible polynomial \(q\equiv1\), so
\[
  M_{d_n,n}(1)\xrightarrow{\mathbb P}1.
\]
Since \(\varepsilon<1\),
\[
  \mathbb P(T_{\mathrm{CG},n}(\varepsilon,1)\le d_n)
  =
  \mathbb P(M_{d_n,n}(1)\le\varepsilon)
  \to0.
\]
Finally, \(T_{\mathrm{CG},n}\) is integer-valued and
\(d_n=\lfloor a_n\rfloor\), so
\[
  \{T_{\mathrm{CG},n}(\varepsilon,1)\le a_n\}
  =
  \{T_{\mathrm{CG},n}(\varepsilon,1)\le d_n\}.
\]
This proves the proposition.
\end{proof}

\begin{proposition}[Rate at \(p=1\)]
\label{prop:p1-rate-app}
For every deterministic \(a_n\to\infty\) satisfying
\[
  \log a_n=o(\log n),
\]
one has
\[
  \rho_n(\varepsilon,1)
  =
  O_{\mathbb P}\!\left(\frac{\log n}{a_n}\right).
\]
\end{proposition}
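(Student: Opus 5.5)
The bound follows by combining the RCD epoch budget (A.5) from Proposition~\ref{prop:rcd-budget-app} with the CG non-stopping estimate of Proposition~\ref{prop:p1-cg-app}, through the event inclusion recorded in the proof sketch. Fix a deterministic sequence \(a_n\to\infty\) with \(\log a_n=o(\log n)\), and fix \(\delta>0\). We must produce a constant \(C\) such that, for all large \(n\),
\[
\mathbb P\!\left(\rho_n(\varepsilon,1)>C\,\frac{\log n}{a_n}\right)\le\delta .
\]

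The first step is the deterministic event inclusion. On the event \(\{T_{\mathrm{RCD},n}/n\le B\log n\}\cap\{T_{\mathrm{CG},n}>a_n\}\) we have
\[
\rho_n(\varepsilon,1)=\frac{T_{\mathrm{RCD},n}/n}{T_{\mathrm{CG},n}}<\frac{B\log n}{a_n}.
\]
Hence
\[
\left\{\rho_n(\varepsilon,1)>\frac{B\log n}{a_n}\right\}\subset\left\{\frac{T_{\mathrm{RCD},n}(\varepsilon,1;I)}{n}>B\log n\right\}\cup\left\{T_{\mathrm{CG},n}(\varepsilon,1)\le a_n\right\}.
\]

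The second step bounds the two probabilities separately.

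For the RCD term, use (A.3) with \(k=\lfloor Bn\log n\rfloor\), together with \(\lambda_{\min}(A_n)=n^{-1}\) and \(\operatorname{tr}(A_n)=H_{n,1}\le 1+\log n\). The conditional probability over \(I\) is then at most
\[
\varepsilon^{-1}\exp\!\left(-\frac{\lfloor Bn\log n\rfloor}{n(1+\log n)}\right)\le \varepsilon^{-1}e^{-B/2}
\]
for \(n\) large. This bound is deterministic in \(A_n\) and \(z_n\), so it also holds unconditionally. Choose \(B\) with \(\varepsilon^{-1}e^{-B/2}\le\delta/2\).

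For the CG term, Proposition~\ref{prop:p1-cg-app} gives directly \(\mathbb P(T_{\mathrm{CG},n}(\varepsilon,1)\le a_n)\to0\). So this term is below \(\delta/2\) for large \(n\).

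Combining the two bounds with \(C=B\) gives the claim.

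No step here is a genuine obstacle, since all the analytic work sits in Propositions~\ref{prop:rcd-budget-app} and~\ref{prop:p1-cg-app}. The only points to check are that the RCD tail bound is uniform in the randomness of \(A_n\) and \(z_n\), so that \(O_{\mathbb P}(\log n)\) holds unconditionally, and that the CG statement is applied with the same sequence \(a_n\) that appears in the rate.

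The qualitative conclusion \(\rho_n(\varepsilon,1)\to0\) in probability then follows by choosing admissible \(a_n\) that grow faster than \(\log n\), for instance \(a_n=(\log n)^2\), for which \(\log a_n=2\log\log n=o(\log n)\). The rate \((\log n)/a_n\) then tends to zero.
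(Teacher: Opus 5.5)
Your proof is correct and follows the paper's own argument. The paper also combines the event inclusion with the RCD budget (A.5) and Proposition~\ref{prop:p1-cg-app}, taking \(B\) large and then \(n\to\infty\); the only difference is that you spell out the tail bound from (A.3), including its uniformity over \(A_n\) and \(z_n\), which the paper leaves implicit.
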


\begin{proof}
Use (A.5), Proposition~\ref{prop:p1-cg-app}, and the event inclusion
\[
  \left\{\rho_n>\frac{B\log n}{a_n}\right\}
  \subset
  \left\{T_{\mathrm{RCD},n}/n>B\log n\right\}
  \cup
  \left\{T_{\mathrm{CG},n}\le a_n\right\}.
\]
First take \(B\) large, then \(n\to\infty\).
\end{proof}

\subsection{Critical boundary}

At \(\varepsilon=F(p)\), the active floor alone is not decisive.  The proof
uses the first inactive Schur-complement correction to the CG polynomial
minimum.

\begin{lemma}[Critical first-inactive Schur asymptotics]
\label{lem:critical-schur-app}
The following first-inactive Schur asymptotics hold.

\begin{enumerate}[label=(\roman*)]
\item If \(K=1\) and \(1/5<p<1/3\), then
\[
  F(p)=\frac{p^2}{(1-2p)^2},
  \qquad
  M_{1,n}(p)=1-\frac{S_{2,n}^2}{S_{1,n}S_{3,n}},
\]
and
\[
  M_{2,n}(p)=M_{1,n}(p)-J_{2,n},
\]
where
\[
  J_{2,n}
  =
  \frac{(S_{3,n}^2-S_{2,n}S_{4,n})^2}
       {S_{1,n}S_{3,n}(S_{5,n}S_{3,n}-S_{4,n}^2)}.
\]
Moreover,
\[
  M_{1,n}(p)-F(p)=O_{\mathbb P}(n^{3p-1}).
\]
If \(1/5<p<1/4\), then
\[
  n^{5p-1}J_{2,n}\Rightarrow L_{2,p},
  \qquad
  L_{2,p}>0\quad\text{a.s.}
\]
If \(p=1/4\), then
\[
  n^{1/4}(\log n)^{-2}J_{2,n}\Rightarrow L_{2,1/4},
  \qquad
  L_{2,1/4}>0\quad\text{a.s.}
\]
If \(1/4<p<1/3\), then
\[
  n^{1-3p}(M_{1,n}(p)-F(p))
  \Rightarrow
  c_pZ_p,
\]
and
\[
  n^{1-3p}J_{2,n}
  \Rightarrow
  c_p\frac{\mathcal A_4^2}{\mathcal A_5},
\]
where \(c_p>0\),
\[
  \mathcal A_4=\sum_{j=1}^{\infty}j^{-4p}E_j,
  \qquad
  \mathcal A_5=\sum_{j=1}^{\infty}j^{-5p}E_j,
\]
and
\[
  Z_p
  =
  \zeta_{\mathrm R}(3p)
  +
  \sum_{j=1}^{\infty}j^{-3p}(E_j-1).
\]

\item Fix \(K\ge2\) and
\[
  \frac1{2K+3}<p<\frac1{2K+1}.
\]
Let \(r_{K,n}\) be the degree-\(K\) minimizer for \(\mu_n\), and let
\(u_{K+1,n}\) be the component of \(y^{K+1}\) orthogonal to
\[
  \operatorname{span}\{y,\ldots,y^K\}
\]
in \(L^2(\mu_n)\).  Then
\[
  M_{K+1,n}(p)=M_{K,n}(p)-J_{K+1,n},
  \qquad
  J_{K+1,n}=\frac{\ell_{K+1,n}^2}{s_{K+1,n}},
\]
where
\[
  \ell_{K+1,n}
  =
  \int r_{K,n}(y)u_{K+1,n}(y)\,\mu_n(dy),
  \qquad
  s_{K+1,n}
  =
  \int u_{K+1,n}(y)^2\,\mu_n(dy).
\]
The active-layer error satisfies
\[
  M_{K,n}(p)-F(p)
  =
  O_{\mathbb P}\!\left(n^{(2K+1)p-1}\right).
\]
If
\[
  \frac1{2K+3}<p<\frac1{2K+2},
\]
then
\[
  n^{(2K+3)p-1}J_{K+1,n}\Rightarrow L_{K,p},
  \qquad
  L_{K,p}>0\quad\text{a.s.}
\]
If \(p=1/(2K+2)\), then
\[
  J_{K+1,n}
  =
  \frac{L_{K,p}+o_{\mathbb P}(1)}{\log n},
  \qquad
  L_{K,p}>0.
\]

\item If
\[
  \frac1{2K+2}<p<\frac1{2K+1},
  \qquad
  \alpha=2K+1,
\]
then
\[
  n^{1-\alpha p}(M_{K,n}(p)-F(p))
  \Rightarrow
  C_{K,p}Z_{\alpha,p},
\]
and
\[
  n^{1-\alpha p}J_{K+1,n}
  \Rightarrow
  C_{K,p}
  \frac{\mathcal A_{\alpha+1}^2}{\mathcal A_{\alpha+2}},
\]
where \(C_{K,p}>0\),
\[
  \mathcal A_{\alpha+1}
  =
  \sum_{j=1}^{\infty}j^{-(\alpha+1)p}E_j,
  \qquad
  \mathcal A_{\alpha+2}
  =
  \sum_{j=1}^{\infty}j^{-(\alpha+2)p}E_j,
\]
and
\[
  Z_{\alpha,p}
  =
  \zeta_{\mathrm R}(\alpha p)
  +
  \sum_{j=1}^{\infty}j^{-\alpha p}(E_j-1).
\]

\item If \(p_K=1/(2K+3)\), \(K\ge1\), then
\[
  J_{K+1,n}
  =
  \frac{c_K}{\log n}(1+o_{\mathbb P}(1)),
  \qquad
  c_K>0,
\]
and
\[
  M_{K,n}(p_K)-F(p_K)
  =
  O_{\mathbb P}\!\left(n^{-2/(2K+3)}\right).
\]
\end{enumerate}
\end{lemma}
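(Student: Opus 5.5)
The plan is to reduce everything to explicit Gram-determinant identities in the power sums \(S_{a,n}=n^{ap}\sum_{j\le n}j^{-ap}E_j\), and then separate each \(S_{a,n}\) into a law-of-large-numbers part and a heavy-tail part. Write \(S_{a,n}=n\,(m_{a-1}(p)/(1-p))\,(1+o_{\mathbb P}(1))\) when \(ap<1\). When \(ap>1\), the sum is dominated by small \(j\), and \(n^{-ap}S_{a,n}\to\mathcal A_a=\sum_{j\ge1}j^{-ap}E_j\) almost surely. When \(ap=1\), one gets \(S_{a,n}\sim n\log n\).

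The Schur identities follow from standard orthogonalization in \(L^2(\mu_n)\). Adding the direction \(y^{K+1}\) to the span of \(y,\dots,y^K\) lowers the constrained minimum by \(\ell^2/s\), where \(\ell\) is the inner product of the current minimizer \(r_{K,n}\) with the new orthogonalized direction \(u_{K+1,n}\), and \(s=\|u_{K+1,n}\|^2\). This gives \(J_{K+1,n}=\ell_{K+1,n}^2/s_{K+1,n}\). For \(K=1\), I would compute directly. \(M_{1,n}=1-S_{2,n}^2/(S_{1,n}S_{3,n})\) is the projection of \(1\) onto \(y\), and \(J_{2,n}\) comes from the \(2\times2\) Cramer formula; this yields the displayed rational expression. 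The value \(F(p)=1-m_1^2/m_2\) evaluates to \(p^2/(1-2p)^2\) using (A.6).

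For the active-layer error, \(M_{K,n}-F\) is a smooth function of the normalized active moments \(S_{a,n}/n\) with \(a\le 2K+1\). Only the top moment \(a=2K+1\) has fluctuations of order \(n^{(2K+1)p-1}\): its centered part, \(n^{(2K+1)p}\sum_j j^{-(2K+1)p}(E_j-1)\) together with the deterministic Riemann–zeta correction \(\sum_j j^{-\alpha p}-n^{1-\alpha p}/(1-\alpha p)\approx\zeta_{\mathrm R}(\alpha p)\), has exactly this size. The lower moments contribute only \(O(n^{-1/2})\) or \(O(n^{(2K)p-1})\), which are smaller. A delta-method expansion therefore gives the \(O_{\mathbb P}\) bound in all cases. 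In the regime \(\alpha p>1/2\), i.e. \(p>1/(2K+2)\), it gives the distributional limit \(C_{K,p}Z_{\alpha,p}\), where \(C_{K,p}\) is the partial derivative of \(F\) in \(m_{2K}\). I would check that this derivative is positive via the Schur form, since increasing \(m_{2K}\) decreases the benefit of the top monomial.

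For \(J_{K+1,n}\), the new moments \(S_{2K+2,n}\) and \(S_{2K+3,n}\) are inactive. When \(p>1/(2K+3)\) they are heavy, of size \(n^{(2K+2)p}\mathcal A_{2K+2}\) and \(n^{(2K+3)p}\mathcal A_{2K+3}\), while the mixed moment \(S_{2K+1,n}\) is active (\(\alpha p<1\)).

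\begin{itemize}
\item When \(p<1/(2K+2)\), \(\ell_{K+1,n}\) is driven by active moments of order \(n\), and \(s_{K+1,n}\approx S_{2K+3,n}/S_{1,n}\sim n^{(2K+3)p-1}\mathcal A_{2K+3}\). Then \(n^{(2K+3)p-1}J\) converges to \(\ell_\infty^2/\mathcal A_{2K+3}\). I need \(\ell_\infty\ne0\); it is a nondegenerate polynomial expression in the moments \(m_s(p)\), to be verified by explicit Hankel computation.
\item When \(p=1/(2K+2)\), the moment \(S_{2K+2,n}\) produces a log factor that alters \(\ell\), giving the stated \(1/\log n\) (resp.\ \((\log n)^{-2}\) scaling for \(K=1\), whose exponent I will fix by the computation).
\item When \(p>1/(2K+2)\), \(\ell\) is dominated by \(S_{2K+2,n}\sim n^{(2K+2)p}\mathcal A_{\alpha+1}\) multiplied by the coefficient \(C_{K,p}\)-type factor of \(r_{K,n}\). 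Then \(J\approx C\,n^{\alpha p-1}\mathcal A_{\alpha+1}^2/\mathcal A_{\alpha+2}\), giving (iii) and the \(K=1\) analogue.
\item At the endpoint \(p_K\), the moment \(S_{2K+3,n}\sim n\log n\). Hence \(s\) gains a \(\log n\) factor while \(\ell\) stays order one, which gives \(c_K/\log n\). The active error is \(n^{(2K+1)p_K-1}=n^{-2/(2K+3)}\).
\end{itemize}

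The main obstacle is the bookkeeping of cancellations in \(\ell_{K+1,n}\) and \(u_{K+1,n}\). Naively, \(\langle r_K, y^{K+1}\rangle\) involves the heavy moment \(S_{K+2}\) only when \(K+2\ge\) the active threshold, and the orthogonalization against \(y,\dots,y^K\) mixes heavy and active scales. I would handle this by rescaling the Gram matrix by \(D=\operatorname{diag}(\text{scale of }y^a)\) and reading off leading terms. Positivity of the limit constants then reduces to the strict positive definiteness of finite Hankel matrices of \(\nu_p\) together with \(\mathcal A_a>0\) almost surely.
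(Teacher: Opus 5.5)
Your route is essentially the paper's. Its proof is exactly the Schur-complement identity \(M_{K,n}-M_{K+1,n}=\ell_{K+1,n}^2/s_{K+1,n}\), fed by the three regimes for \(S_{r,n}\) (\(rp<1\), \(rp=1\), \(rp>1\)) plus the Euler--Maclaurin fluctuation of the top active moment. Carried out, your bookkeeping does give (i), the lower subband of (ii), (iii) and (iv). For example, both limits in (iii) carry the same constant \(C_{K,p}=(1-p)t_K^2\), where \(t_K\) is the top coefficient of the limiting degree-\(K\) minimizer.

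\textbf{The step that fails: \(p=1/(2K+2)\), \(K\ge2\).} There \((2K+2)p=1\) and \((2K+3)p=1+p\), so
\[
  \int y^{2K+1}\,d\mu_n=(1-p)\log n\,(1+o_{\mathbb P}(1)),
  \qquad
  \int y^{2K+2}\,d\mu_n=(1-p)\,n^{p}\mathcal A_{2K+3}\,(1+o_{\mathbb P}(1)).
\]
Hence \(\ell_{K+1,n}=t_K(1-p)\log n\,(1+o_{\mathbb P}(1))\). Also \(s_{K+1,n}=(1-p)n^{p}\mathcal A_{2K+3}(1+o_{\mathbb P}(1))\), because projecting \(y^{K+1}\) onto \(\operatorname{span}\{y,\dots,y^K\}\) removes only \(O_{\mathbb P}(\log^2 n)\). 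Therefore
\[
  n^{p}(\log n)^{-2}J_{K+1,n}\Rightarrow\frac{(1-p)t_K^2}{\mathcal A_{2K+3}}.
\]
This is the exact analogue of the \(K=1\), \(p=1/4\) clause (there \((1-p)t_1^2=3/16\) and \(J_{2,n}\asymp n^{-1/4}\log^2 n\)). In particular \(J_{K+1,n}\log n\to0\) in probability, so no \(L_{K,p}>0\) as stated can exist.

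So that clause of the statement is false, and the paper's one-line sketch does not justify it either. Your sentence ``the log factor alters \(\ell\), giving the stated \(1/\log n\)'' contradicts itself: a logarithm in \(\ell\) multiplies \(J=\ell^2/s\) by \(\log^2 n\). Prove the corrected clause instead. It still suffices for the critical-boundary proposition, because \(n^{-1/(2K+2)}\log^2 n\) dominates the active error \(O_{\mathbb P}(n^{-1/(2K+2)})\).

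\textbf{A second gap, also present in the paper's sketch: nonvanishing of the limit constants.} Positive definiteness of the Hankel matrices of \(\nu_p\) only gives \(s>0\). It does not give either of the following:
\begin{itemize}
\item \(\ell_\infty=\langle r_K,y^{K+1}\rangle_{\nu_p}\neq0\), needed for \(L_{K,p}>0\) in the lower subbands and for \(c_K>0\) in (iv);
\item \(t_K\neq0\), needed for \(C_{K,p}>0\) and for the log clause.
\end{itemize}
For a measure symmetric about \(0\), one of these always vanishes, so Hankel positivity alone cannot be the reason.

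The missing input is that \(\operatorname{supp}\nu_p=[1,\infty)\) avoids \(0\). Let \(\pi_j\) be the monic orthogonal polynomials of \(\nu_p\); they need moments only up to order \(2j-1\). Write \(y^{K+1}=\pi_{K+1}+q\) with \(\deg q\le K\) and \(q(0)=-\pi_{K+1}(0)\). Using \(r_K\perp y,\dots,y^K\) and \(\langle r_K,1\rangle_{\nu_p}=F(p)\), this gives \(\ell_\infty=-\pi_{K+1}(0)F(p)\). Similarly, \(t_K=0\) if and only if \(\pi_K(0)=0\). The standard sign-change argument puts all zeros of \(\pi_j\) in \((1,\infty)\), so neither quantity vanishes.

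A minor slip: \(\alpha p>1/2\) means \(p>1/(4K+2)\), not \(p>1/(2K+2)\). This is harmless, since the distributional limit is only claimed in the upper subband.
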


\begin{proof}
The proof is a direct Schur-complement expansion using the moment asymptotics
of
\[
  S_{r,n}=\sum_{j=1}^nY_{j,n}^rE_j.
\]
The needed regimes are
\[
  S_{r,n}
  =
  \frac{n}{1-rp}(1+o_{\mathbb P}(1)),
  \qquad rp<1,
\]
\[
  S_{r,n}
  =
  n\log n\,(1+o_{\mathbb P}(1)),
  \qquad rp=1,
\]
and
\[
  n^{-rp}S_{r,n}
  \to
  \sum_{j=1}^{\infty}j^{-rp}E_j,
  \qquad rp>1.
\]
In the upper subbands one also uses the Euler--Maclaurin fluctuation
\[
  n^{1-rp}
  \left(
  \frac{S_{r,n}}{n}-\frac1{1-rp}
  \right)
  \Rightarrow
  \zeta_{\mathrm R}(rp)
  +
  \sum_{j=1}^{\infty}j^{-rp}(E_j-1),
\]
valid in the \(L^2\)-range \(2rp>1\).

For \(K=1\), the degree-one formula and the degree-two Schur complement give
the displayed expression for \(J_{2,n}\).  Substituting the three moment
regimes above yields the lower-subband, logarithmic, and upper-subband limits.

For \(K\ge2\), the degree-\((K+1)\) improvement is the Schur complement
\[
  J_{K+1,n}=\frac{\ell_{K+1,n}^2}{s_{K+1,n}}.
\]
The numerator is determined by the coupling of the active minimizer with the
first inactive direction \(y^{K+1}\), while the denominator is the squared
\(L^2(\mu_n)\)-distance of \(y^{K+1}\) from the active span.  Applying the same
moment asymptotics gives the stated lower-subband, logarithmic, upper-subband,
and endpoint estimates.  The constants \(L_{2,p}\), \(L_{K,p}\), and \(c_K\)
are strictly positive because the corresponding finite moment Gram Schur
complements are strictly positive.
\end{proof}

\begin{proposition}[Critical boundary]
\label{prop:critical-app}
If \(0<p<1/3\) and \(\varepsilon=F(p)\), then there exist
\(\eta(p)>0\) and \(c(p)>0\) such that
\[
  \liminf_{n\to\infty}
  \mathbb P(\rho_n(F(p),p)>\eta(p))\ge c(p).
\]
Moreover, if
\[
  p=\frac1{2K+3}
  \quad\text{or}\quad
  \frac1{2K+3}<p\le\frac1{2K+2},
  \qquad K\ge1,
\]
then the probability tends to one.
\end{proposition}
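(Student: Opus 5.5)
The approach follows Proposition~\ref{prop:above-floor-app}, with the deterministic statement \(M_{K,n}(p)\to F(p)<\varepsilon\) replaced by a probabilistic statement at the level of the first inactive degree. Set \(K=K(p)\ge1\); since \(p<1/3\), \(K\ge1\). The RCD side is already handled. Choose \(\gamma\in(0,1)\) with \(1-\gamma^{1-p}>F(p)\); this is possible because \(F(p)<1\) when \(K\ge1\), since \(\nu_p\) is not a point mass and a degree-one residual strictly improves on \(r\equiv1\). Proposition~\ref{prop:rcd-lower-app} then gives \(a>0\) with \(\mathbb P(T_{\mathrm{RCD},n}/n>a)\to1\). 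It therefore suffices to show that
\[
\liminf_{n}\mathbb P\bigl(T_{\mathrm{CG},n}(F(p),p)\le K+1\bigr)\ge c(p)>0,
\]
with probability tending to one in the listed regimes. On the intersection of the two events, \(\rho_n>a/(K+1)=:\eta(p)\). The liminf of the probability of the intersection is at least \(c(p)\), because the RCD event has probability tending to one.

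By Proposition~\ref{prop:cg-poly-app}, \(T_{\mathrm{CG},n}\le K+1\) holds as soon as \(M_{K+1,n}(p)\le F(p)\). Lemma~\ref{lem:critical-schur-app} writes this as \(M_{K,n}(p)-F(p)\le J_{K+1,n}\). The proof then compares the fluctuation of the active layer with the first-inactive Schur gain, regime by regime.

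In the lower subband \(1/(2K+3)<p<1/(2K+2)\), the active error is \(O_{\mathbb P}(n^{(2K+1)p-1})\), while \(J_{K+1,n}\asymp n^{1-(2K+3)p}L_{K,p}\) with \(L_{K,p}>0\) a.s. Since \((2K+1)p-1<1-(2K+3)p\) is equivalent to \(p<1/(2K+2)\), the gain dominates and the probability tends to one. For \(K=1\) the same comparison uses part (i) with the exponent \(5p-1\), which gives the range \(1/5<p<1/4\). At \(p=1/(2K+2)\), the gain is of order \(1/\log n\) (for \(K=1\), of order \(n^{-1/4}(\log n)^2\)), while the error is \(O_{\mathbb P}(n^{-1/(2K+2)})\); again the gain dominates. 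At the endpoint \(p=1/(2K+3)\), part (iv) gives a gain of order \(c_K/\log n\) against an error of order \(n^{-2/(2K+3)}\), so the probability again tends to one. These cases give the strengthened statement.

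The main obstacle is the upper subband \(1/(2K+2)<p<1/(2K+1)\), including \(1/4<p<1/3\) for \(K=1\). There both quantities live on the same scale \(n^{\alpha p-1}\) with \(\alpha=2K+1\), so one needs joint weak convergence:
\[
n^{1-\alpha p}\bigl(M_{K,n}-F,\ J_{K+1,n}\bigr)\Rightarrow C_{K,p}\bigl(Z_{\alpha,p},\ \mathcal A_{\alpha+1}^2/\mathcal A_{\alpha+2}\bigr).
\]
Both components are continuous functionals of the same normalized moment vector \((n^{-rp}S_{r,n})\) and of the Euler--Maclaurin fluctuations, so I would derive joint convergence from the joint convergence of that vector driven by the same \(E_j\). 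By the portmanteau theorem,
\[
\liminf\mathbb P\bigl(M_{K,n}-F<J_{K+1,n}\bigr)\ge \mathbb P\bigl(Z_{\alpha,p}<\mathcal A_{\alpha+1}^2/\mathcal A_{\alpha+2}\bigr)=:c(p).
\]
It remains to show \(c(p)>0\). The variable \(Z_{\alpha,p}\) contains the term \(E_1-1\), and conditionally on \(E_2,E_3,\dots\) the event becomes a condition on \(E_1\) alone. Making \(E_1\) small lowers \(Z\) by a bounded amount, but it also lowers \(\mathcal A_{\alpha+1}^2/\mathcal A_{\alpha+2}\). I would therefore instead take all of \(E_1,\dots,E_N\) small and the remaining \(E_j\) near typical values. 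This makes \(Z\) close to \(\zeta_{\mathrm R}(\alpha p)-\sum_{j\le N}j^{-\alpha p}\), which tends to \(-\infty\) as \(N\) grows because \(\alpha p<1\). Meanwhile the ratio \(\mathcal A_{\alpha+1}^2/\mathcal A_{\alpha+2}\) stays positive. This event has positive probability, so \(c(p)>0\).
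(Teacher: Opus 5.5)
Your proposal is correct and follows the paper's proof essentially step for step: the RCD lower gate applied at \(\varepsilon=F(p)\), the reduction of \(T_{\mathrm{CG},n}\le K+1\) to \(M_{K,n}(p)-F(p)\le J_{K+1,n}\) via the first-inactive Schur complement, and the same regime-by-regime comparison of active-layer error against Schur gain. There are two differences. First, you state explicitly the joint weak convergence needed in the upper subband, which the paper uses without comment. Second, you prove positivity of the limiting probability by forcing \(E_1,\dots,E_N\) to be small, while the paper uses a shorter observation: \(\mathbb E Z_{\alpha,p}=\zeta_{\mathrm R}(\alpha p)<0\) already gives \(\mathbb P(Z_{\alpha,p}<0)>0\), and \(\mathcal A_{\alpha+1}^2/\mathcal A_{\alpha+2}>0\) almost surely.
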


\begin{proof}
We first analyze the CG side and then combine it with the RCD lower gate.

\medskip
\noindent\textbf{Step 1: the band \(K=1\).}
Assume
\[
  \frac15<p<\frac13.
\]
By Lemma~\ref{lem:critical-schur-app},
\[
  M_{2,n}(p)=M_{1,n}(p)-J_{2,n}.
\]

If \(1/5<p<1/4\), then
\[
  M_{1,n}(p)-F(p)=O_{\mathbb P}(n^{3p-1}),
  \qquad
  n^{5p-1}J_{2,n}\Rightarrow L_{2,p}>0.
\]
Since
\[
  1-5p>3p-1,
\]
the Schur improvement dominates the active-layer error.  Hence
\[
  \mathbb P(M_{2,n}(p)<F(p))\to1,
\]
and therefore
\[
  \mathbb P(T_{\mathrm{CG},n}(F(p),p)\le2)\to1.
\]

At \(p=1/4\), Lemma~\ref{lem:critical-schur-app} gives
\[
  M_{1,n}(p)-F(p)=O_{\mathbb P}(n^{-1/4}),
\]
and
\[
  n^{1/4}(\log n)^{-2}J_{2,n}\Rightarrow L_{2,1/4}>0.
\]
Thus \(J_{2,n}\) again dominates the active error, and
\[
  \mathbb P(T_{\mathrm{CG},n}(F(p),p)\le2)\to1.
\]

If \(1/4<p<1/3\), Lemma~\ref{lem:critical-schur-app} gives
\[
  n^{1-3p}(F(p)-M_{2,n}(p))
  \Rightarrow
  c_p\left(
  \frac{\mathcal A_4^2}{\mathcal A_5}-Z_p
  \right).
\]
The limiting random variable is positive with positive probability: indeed,
\(\mathcal A_4^2/\mathcal A_5>0\) almost surely, while
\[
  \mathbb E Z_p=\zeta_{\mathrm R}(3p)<0
  \qquad (0<3p<1).
\]
Hence
\[
  \liminf_{n\to\infty}
  \mathbb P(T_{\mathrm{CG},n}(F(p),p)\le2)>0.
\]

\medskip
\noindent\textbf{Step 2: interior bands with \(K\ge2\).}
Fix \(K\ge2\) and assume
\[
  \frac1{2K+3}<p<\frac1{2K+1}.
\]
By Lemma~\ref{lem:critical-schur-app},
\[
  M_{K+1,n}(p)=M_{K,n}(p)-J_{K+1,n},
\]
and
\[
  M_{K,n}(p)-F(p)
  =
  O_{\mathbb P}\!\left(n^{(2K+1)p-1}\right).
\]

If
\[
  \frac1{2K+3}<p<\frac1{2K+2},
\]
then
\[
  n^{(2K+3)p-1}J_{K+1,n}\Rightarrow L_{K,p}>0.
\]
Since
\[
  1-(2K+3)p>(2K+1)p-1,
\]
the Schur improvement dominates the active-layer error.  Hence
\[
  \mathbb P(T_{\mathrm{CG},n}(F(p),p)\le K+1)\to1.
\]

At
\[
  p=\frac1{2K+2},
\]
Lemma~\ref{lem:critical-schur-app} gives
\[
  J_{K+1,n}
  =
  \frac{L_{K,p}+o_{\mathbb P}(1)}{\log n},
  \qquad L_{K,p}>0,
\]
whereas
\[
  M_{K,n}(p)-F(p)
  =
  O_{\mathbb P}\!\left(n^{-1/(2K+2)}\right).
\]
Again the Schur improvement dominates, so
\[
  \mathbb P(T_{\mathrm{CG},n}(F(p),p)\le K+1)\to1.
\]

If
\[
  \frac1{2K+2}<p<\frac1{2K+1},
\]
let
\[
  \alpha=2K+1.
\]
Lemma~\ref{lem:critical-schur-app} gives
\[
  n^{1-\alpha p}(F(p)-M_{K+1,n}(p))
  \Rightarrow
  C_{K,p}
  \left(
  \frac{\mathcal A_{\alpha+1}^2}{\mathcal A_{\alpha+2}}
  -
  Z_{\alpha,p}
  \right).
\]
The limiting variable is positive with positive probability because
\[
  \frac{\mathcal A_{\alpha+1}^2}{\mathcal A_{\alpha+2}}>0
  \quad\text{a.s.},
  \qquad
  \mathbb E Z_{\alpha,p}
  =
  \zeta_{\mathrm R}(\alpha p)<0
  \quad (0<\alpha p<1).
\]
Thus
\[
  \liminf_{n\to\infty}
  \mathbb P(T_{\mathrm{CG},n}(F(p),p)\le K+1)>0.
\]

\medskip
\noindent\textbf{Step 3: odd-reciprocal endpoints.}
Let
\[
  p_K=\frac1{2K+3},
  \qquad K\ge1.
\]
By Lemma~\ref{lem:critical-schur-app},
\[
  J_{K+1,n}
  =
  \frac{c_K}{\log n}(1+o_{\mathbb P}(1)),
  \qquad c_K>0,
\]
and
\[
  M_{K,n}(p_K)-F(p_K)
  =
  O_{\mathbb P}\!\left(n^{-2/(2K+3)}\right).
\]
Since \(1/\log n\) dominates \(n^{-2/(2K+3)}\),
\[
  \mathbb P(T_{\mathrm{CG},n}(F(p_K),p_K)\le K+1)\to1.
\]

\medskip
\noindent\textbf{Step 4: combination with the RCD lower gate.}
For \(0<p<1/3\), one has \(F(p)<1\).  Hence
Proposition~\ref{prop:rcd-lower-app} applies at \(\varepsilon=F(p)\): there
exists \(a(p)>0\) such that
\[
  \mathbb P\!\left(
  T_{\mathrm{RCD},n}(F(p),p;I)/n>a(p)
  \right)\to1.
\]
If the CG stopping event has positive liminf probability, then on the
intersection
\[
  \rho_n(F(p),p)
  \ge
  \frac{a(p)}{K(p)+1}.
\]
This gives constants \(\eta(p)>0\) and \(c(p)>0\) such that
\[
  \liminf_{n\to\infty}
  \mathbb P(\rho_n(F(p),p)>\eta(p))\ge c(p).
\]
On the endpoint/lower-subband regimes, the CG stopping event has probability
tending to one, so the same argument gives
\[
  \mathbb P(\rho_n(F(p),p)>\eta(p))\to1
\]
for some \(\eta(p)>0\).
\end{proof}

\subsection{Growing-degree rate bounds for \(0<p<1\)}

The zero-ratio rows with \(0<p<1\) use a conservative growing-degree
non-stopping estimate.  Throughout this subsection,
\[
  \langle f,g\rangle_n=\int f(y)g(y)\,\mu_n(dy),
  \qquad
  \|f\|_n^2=\langle f,f\rangle_n .
\]
Let \(K=K(p)\).  Let \(r_{K,n}\) be the degree-\(K\) minimizer in
\(L^2(\mu_n)\) subject to \(r(0)=1\).  Thus
\[
  M_{K,n}(p)=\|r_{K,n}\|_n^2.
\]
If \(K\ge1\), then
\[
  \langle r_{K,n},y^a\rangle_n=0,\qquad 1\le a\le K,
\]
and we set
\[
  V_{K,n}=\operatorname{span}\{y,\ldots,y^K\}.
\]
If \(K=0\), then
\[
  r_{0,n}\equiv1,
  \qquad
  V_{0,n}=\{0\}.
\]

\begin{lemma}[Growing-degree Schur bound]
\label{lem:growing-schur-app}
Let \(d=d_n\to\infty\) be deterministic and satisfy
\[
  d\log(1+d)=o(\log n).
\]
Then
\[
  M_{K,n}(p)-M_{d,n}(p)=o_{\mathbb P}(1).
\]
More precisely:

\begin{enumerate}[label=(\roman*)]
\item If \(p\) is not an odd reciprocal endpoint, then there are constants
\(A,C<\infty\) and \(\beta=\beta(p)>0\) such that
\[
  0\le M_{K,n}(p)-M_{d,n}(p)
  \le
  O_{\mathbb P}\!\left(
    e^{Cd\log(1+d)}d^A(\log n)^A n^{-\beta}
  \right).
  \tag{A.10}
\]

\item If \(p=p_K=1/(2K+3)\), then there are constants \(A,C,c>0\) such that
\[
\begin{aligned}
  0\le M_{K,n}(p_K)-M_{d,n}(p_K)
  &\le
  O_{\mathbb P}\!\left(
    \frac{1}{\log n-Cd\log(1+d)}
  \right)  \\
  &\quad+
  O_{\mathbb P}\!\left(
    e^{Cd\log(1+d)}d^A(\log n)^A n^{-c}
  \right).
\end{aligned}
  \tag{A.11}
\]
\end{enumerate}
\end{lemma}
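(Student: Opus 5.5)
Our plan is to write the drop from degree $K$ to degree $d$ as a single Schur complement in $L^2(\mu_n)$, and to bound it by splitting the inactive directions into a ``head'' contribution and a ``tail'' contribution. Let $W_{K,n}$ denote the $L^2(\mu_n)$-orthogonal complement of $V_{K,n}$ inside $\operatorname{span}\{y,\ldots,y^d\}$. Then
\[
  M_{K,n}(p)-M_{d,n}(p)=\sup_{0\ne w\in W_{K,n}}\frac{\langle r_{K,n},w\rangle_n^2}{\|w\|_n^2}.
\]
This is the growing-degree analogue of the identity $J_{K+1,n}=\ell_{K+1,n}^2/s_{K+1,n}$ in Lemma~\ref{lem:critical-schur-app}. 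Nonnegativity is immediate from nesting of the feasible sets. The whole task is therefore an upper bound on this supremum, uniformly over $w$ of degree at most $d$.

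\textbf{Step 1: control of $\langle r_{K,n},w\rangle_n$.} We would exploit the limit structure. Under $\nu_p$ the monomials $y^a$ with $a>K$ have divergent norm. Conversely, $r_{K,n}$ is orthogonal to $V_{K,n}$, and in the limit it is orthogonal to $y,\ldots,y^K$ under $\nu_p$.

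For $w=\sum_{a\le d}c_ay^a\in W_{K,n}$, we expand $\langle r_{K,n},w\rangle_n$ through the moments $S_{a+b,n}/S_{1,n}$, $a\le K$, $b\le d$. We split the index range of $j$ into a bulk window $j\ge n/R_n$, where Lemma~\ref{lem:moment-app} type laws of large numbers hold with polynomially small error, and a head window $j<n/R_n$, where $Y_{j,n}$ is large.

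In the bulk, the near-orthogonality of $r_{K,n}$ to $V_{K,n}$ makes the inner product small. In the head, the inner product is dominated by the large values of $y$. Those same large values force $\|w\|_n^2$ to be large whenever the high coefficients of $w$ are nonnegligible, and this is where the divergence of the inactive moments pays off.

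\textbf{Step 2: conditioning cost.} We would normalize by the deterministic Gram matrix on the truncated window and apply Lemma~\ref{lem:random-gram-consistency-app}, just as in Proposition~\ref{prop:p1-cg-app}. This replaces exponential weights by deterministic weights on the polynomial space of dimension $d+1$. The price is the condition number of a Hankel moment matrix of size $d$, which grows like $e^{Cd\log(1+d)}$, as for Hilbert-type matrices.

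All moment fluctuation errors are at most polynomial: they are $O_{\mathbb P}(n^{-\beta})$ times powers of $\log n$ and $d$, with $\beta$ given by the smallest gap $|(2K+1)p-1|$ or $|(2K+3)p-1|$ from the criticality thresholds. Multiplying the two factors gives (A.10) away from the endpoints.

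\textbf{Step 3: the odd-reciprocal endpoint.} At $p=p_K$ the moment $S_{2K+2,n}$ (order $rp=1$) carries a logarithm. The first inactive direction $y^{K+1}$ then contributes $\ell^2/s$ with $s\asymp \log n$ in relative terms, as in Lemma~\ref{lem:critical-schur-app}(iv). Higher inactive directions can borrow against that logarithm only after losing the Hankel conditioning factor. This yields the term $1/(\log n-Cd\log(1+d))$, with the remaining contributions still polynomially small.

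\textbf{Conclusion and main obstacle.} The condition $d\log(1+d)=o(\log n)$ makes both the factor $e^{Cd\log(1+d)}n^{-\beta}$ and the endpoint term tend to zero, which gives $M_{K,n}(p)-M_{d,n}(p)=o_{\mathbb P}(1)$. The main obstacle is Step 1 made uniform in $d$: we need an explicit bound on inverse Hankel entries for the truncated power-law measure. We would try to get it first by comparing with shifted Hilbert matrices via the substitution $y=t^{-p}$, exactly as in (A.9).
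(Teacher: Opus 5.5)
Your outline matches the paper's in several places: the Schur-complement identity (A.12) (your supremum over \(W_{K,n}\) is the same quantity, since \(r_{K,n}\perp V_{K,n}\)), a numerator expanded through the moments \(S_{a+k+1,n}/S_{1,n}\), a conditioning loss \(e^{Cd\log(1+d)}\), and a separate logarithmic treatment of \(y^{K+1}\) at \(p_K\). There is a genuine gap, however. The lemma rests on a lower bound for \(\|w\|_n^2\) that is uniform over degree \(\le d\); you leave this open as your ``main obstacle'', and the tool you offer for it does not apply where it is needed.

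The norms of the inactive monomials \(y^k\), \(k>K\), are carried by the few largest nodes \(Y_{1,n},Y_{2,n},\ldots\); for large \(k\) the node \(j=1\) alone holds most of \(\|y^k\|_n^2\). These nodes therefore have leverage of order one in \(\operatorname{span}\{y,\ldots,y^d\}\), and Lemma~\ref{lem:random-gram-consistency-app} cannot average their exponential weights. Dropping them and working on a truncated window could only work with a cut at \(j\approx\mathrm{poly}(d)\), as with \(J_n=\lceil d_n^4\rceil\) in Proposition~\ref{prop:p1-cg-app}. A cut at \(j\ge n^{\delta}\), which is what polynomially small LLN errors on your bulk would need, loses a factor \(n^{c\delta d}\) against the inactive scale and swamps \(n^{-\beta}\). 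Even with the right cut, you would still owe the deterministic inverse-Gram bound you flag.

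Your numerator mechanism is also off. \(\int_{\mathrm{bulk}}r_{K,n}y^k\,d\mu_n\) is not small for \(k>K\). What is small is \(\gamma_{k,n}=n^{-pk}\langle r_{K,n},y^k\rangle_n\), the inner product measured against the natural scale \(n^{pk}\) of the \(k\)-th inactive direction. Without fixing such a normalization, ``multiplying the two factors'' has no content.

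The paper closes the gap in the non-endpoint case with no Hankel or Hilbert inverse and no matrix concentration:
\begin{enumerate}
\item Rescale \(a_k=n^{pk}c_k\). At the top nodes this gives \(s(Y_{j,n})=\sum_k a_kj^{-pk}=(B_da)_j\) with \(B_d=(j^{-pk})_{1\le j,k\le d}\).
\item Keep only the contribution of \(j=1,\ldots,d\) to \(\|s\|_n^2\).
\item Node separation \(|i^{-p}-j^{-p}|\ge c_pd^{-p-1}|i-j|\) and Lagrange interpolation give \(\|B_d^{-1}\|\le e^{Cd\log(1+d)}\).
\item The only randomness needed is \(\min_{j\le d}E_j\ge d^{-3}\) with high probability, together with tightness of \(S_{1,n}/n\) and its inverse. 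This gives (A.13), \(\|s\|_n^2\ge n^{p-1}e^{-Cd\log(1+d)}\|a\|^2\).
\item Cauchy--Schwarz, \(\langle r_{K,n},s\rangle_n^2\le\|a\|^2\sum_{k>K}\gamma_{k,n}^2\), combined with the union-bound moment estimate (A.14), yields the bound \(e^{Cd\log(1+d)}d^A(\log n)^A\,[n^{1-(2K+3)p}+n^{(2K+1)p-1}]\), which is (A.10).
\end{enumerate}
The small-leverage Gram consistency enters only at the endpoint, for \(\operatorname{dist}_n^2(u_n,B_{d,n})\ge c(\log n-Cd\log(1+d))\) inside the two-subspace estimate (A.15). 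Your Step~3 roughly anticipates this, but the logarithmic moment there is \(S_{2K+3,n}\), not \(S_{2K+2,n}\), because \(\|y^{K+1}\|_n^2=S_{2K+3,n}/S_{1,n}\).
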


\begin{proof}
The Schur-complement form of the improvement from degree \(K\) to degree \(d\)
is
\[
  M_{K,n}-M_{d,n}
  =
  \sup_{0\ne s\in\operatorname{span}\{y,\ldots,y^d\}}
  \frac{\langle r_{K,n},s\rangle_n^2}{\|s\|_n^2}.
  \tag{A.12}
\]
Write
\[
  s(y)=\sum_{k=1}^d c_k y^k,
  \qquad
  a_k=n^{pk}c_k.
\]
At the first \(d\) spectral nodes \(Y_{j,n}=(n/j)^p\),
\[
  s(Y_{j,n})=\sum_{k=1}^d a_k j^{-pk}.
\]
The Vandermonde matrix \(B_d=(j^{-pk})_{1\le j,k\le d}\) satisfies
\[
  \|B_d^{-1}\|\le \exp\{Cd\log(1+d)\}.
\]
Indeed, the node separation
\[
  |i^{-p}-j^{-p}|\ge c_p d^{-p-1}|i-j|
\]
and the Lagrange interpolation formula give this inverse bound.  Since
\(\min_{1\le j\le d}E_j\ge d^{-3}\) with probability tending to one and
\(S_{1,n}/n=O_{\mathbb P}(1)\) with inverse tight, we get
\[
  \|s\|_n^2
  \ge
  n^{p-1}e^{-Cd\log(1+d)}
  \sum_{k=1}^d a_k^2
  \tag{A.13}
\]
with high probability.

For the numerator, the orthogonality of \(r_{K,n}\) to
\(y,\ldots,y^K\) gives
\[
  \langle r_{K,n},s\rangle_n
  =
  \sum_{k=K+1}^d a_k\gamma_{k,n},
  \qquad
  \gamma_{k,n}=n^{-pk}\langle r_{K,n},y^k\rangle_n .
\]
The coefficients of \(r_{K,n}\) are tight, and for
\(0\le a\le K\), \(K+1\le k\le d\),
\[
  \langle y^a,y^k\rangle_n=\frac{S_{a+k+1,n}}{S_{1,n}}.
\]
A union bound over \(k\le d\), together with Markov estimates for these
moments, gives
\[
  \sum_{k=K+1}^d\gamma_{k,n}^2
  =
  O_{\mathbb P}\!\left(
    d^A(\log n)^A
    \bigl[n^{-2p(K+1)}+n^{2(K+1)p-2}\bigr]
  \right).
  \tag{A.14}
\]
Combining (A.12), (A.13), and (A.14),
\[
  M_{K,n}-M_{d,n}
  \le
  O_{\mathbb P}\!\left(
    e^{Cd\log(1+d)}d^A(\log n)^A
    \bigl[
      n^{1-(2K+3)p}+n^{(2K+1)p-1}
    \bigr]
  \right).
\]
If \(p\) is not an odd reciprocal endpoint, then
\[
  (2K+1)p<1<(2K+3)p,
\]
so the bracket is \(O(n^{-\beta})\) for some \(\beta(p)>0\).  This proves
(A.10).

It remains to explain the endpoint \(p=p_K=1/(2K+3)\).  The first inactive
direction \(y^{K+1}\) is logarithmic and has to be separated from the higher
inactive block.  Using the convention above, define
\[
  u_n=(I-P_{V_{K,n}})y^{K+1},
\]
and
\[
  B_{d,n}=(I-P_{V_{K,n}})
  \operatorname{span}\{y^{K+2},\ldots,y^d\},
\]
where projections are in \(L^2(\mu_n)\).  Since \(r_{K,n}\perp V_{K,n}\),
\[
  M_{K,n}-M_{d,n}
  =
  \bigl\|P_{\operatorname{span}\{u_n\}+B_{d,n}}r_{K,n}\bigr\|_n^2 .
\]
A deterministic two-subspace projection estimate gives
\[
  \bigl\|P_{\operatorname{span}\{u_n\}+B_{d,n}}r_{K,n}\bigr\|_n^2
  \le
  O_{\mathbb P}(1)
  \left(
    \|P_{B_{d,n}}r_{K,n}\|_n^2
    +
    \frac{|\langle r_{K,n},u_n\rangle_n|^2}
         {\operatorname{dist}_n^2(u_n,B_{d,n})}
  \right).
  \tag{A.15}
\]
At the endpoint, the distance term is logarithmic:
\[
  \operatorname{dist}_n^2(u_n,B_{d,n})
  \ge
  c\bigl(\log n-Cd\log(1+d)\bigr)
\]
with high probability.  This is the endpoint Christoffel estimate obtained
by changing variables \(t=\log y\): the problem becomes the projection of a
constant on an interval of length \(p\log n\) against an exponential system,
whose Cauchy Gram matrix loses only \(O(d\log(1+d))\).  The random version
follows from the same small-leverage Gram consistency used in the \(p=1\)
row.

Moreover,
\[
  \langle r_{K,n},u_n\rangle_n
  =
  \langle r_{K,n},y^{K+1}\rangle_n
  =
  O_{\mathbb P}(1),
\]
because the highest active moment involved is still subcritical.  Thus the
first inactive direction contributes the first term in (A.11).  The block
\(B_{d,n}\) is controlled by the same Vandermonde argument as above, now
starting at \(K+2\), which gives the second term in (A.11).  Since
\(d\log(1+d)=o(\log n)\), both terms are \(o_{\mathbb P}(1)\).
\end{proof}

\begin{proposition}[Growing-degree CG non-stopping]
\label{prop:growing-cg-app}
Assume \(0<p<1\), \(\varepsilon<F(p)\), and let \(b_n\to\infty\) satisfy
\[
  b_n\log(1+b_n)=o(\log n).
  \tag{A.16}
\]
Then
\[
  \mathbb P(T_{\mathrm{CG},n}(\varepsilon,p)\le b_n)\to0.
\]
\end{proposition}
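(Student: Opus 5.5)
The plan is to reduce the claim to Lemma~\ref{lem:growing-schur-app} together with the fixed-degree limit, Proposition~\ref{prop:fixed-degree-app}. Set \(d_n=\lfloor b_n\rfloor\). Because \(T_{\mathrm{CG},n}\) is integer-valued, the event \(\{T_{\mathrm{CG},n}(\varepsilon,p)\le b_n\}\) coincides with \(\{T_{\mathrm{CG},n}(\varepsilon,p)\le d_n\}\). By Proposition~\ref{prop:cg-poly-app} and the monotonicity of \(M_{d,n}\) in \(d\), this event equals \(\{M_{d_n,n}(p)\le\varepsilon\}\). It therefore suffices to show \(M_{d_n,n}(p)\ge F(p)-o_{\mathbb P}(1)\), because then \(\varepsilon<F(p)\) forces the probability to zero. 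Hypothesis (A.16) transfers to \(d_n\): the map \(x\mapsto x\log(1+x)\) is increasing, so \(d_n\log(1+d_n)\le b_n\log(1+b_n)=o(\log n)\), and \(d_n\to\infty\).

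Next, split
\[
  M_{d_n,n}(p)=M_{K,n}(p)-\bigl(M_{K,n}(p)-M_{d_n,n}(p)\bigr),\qquad K=K(p),
\]
and handle the two terms separately. For the first term, Proposition~\ref{prop:fixed-degree-app} with the fixed degree \(d=K\) gives \(M_{K,n}(p)\to F_K(p)=F(p)\) in probability. (When \(K=0\), i.e.\ \(1/3\le p<1\), this is trivial: \(M_{0,n}=1=F(p)\).) For the second term, Lemma~\ref{lem:growing-schur-app} applied with \(d=d_n\) gives \(M_{K,n}(p)-M_{d_n,n}(p)=o_{\mathbb P}(1)\). Combining the two yields \(M_{d_n,n}(p)\to F(p)\) in probability. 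Then \(\mathbb P(M_{d_n,n}\le\varepsilon)\le\mathbb P(|M_{d_n,n}-F(p)|\ge F(p)-\varepsilon)\to0\).

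The main obstacle is checking that Lemma~\ref{lem:growing-schur-app} really delivers \(o_{\mathbb P}(1)\) in every case of \(p\in(0,1)\). In the non-endpoint case, (A.10) is bounded by \(e^{Cd\log(1+d)}d^A(\log n)^An^{-\beta}\). Since \(d\log(1+d)=o(\log n)\), the prefactors are \(n^{o(1)}\), so the bound is \(n^{-\beta+o(1)}\to0\). At the odd-reciprocal endpoints \(p=1/(2K+3)\), (A.11) contributes an extra term \(1/(\log n-Cd\log(1+d))\sim1/\log n\to0\), and the remaining term is again polynomially small.

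Some cases need separate care. For \(p\ge1/3\) with \(K=0\), the lemma's conventions \(r_{0,n}\equiv1\) and \(V_{0,n}=\{0\}\) apply; at \(p=1/3\), which is the endpoint with \(K=0\), I would verify that the proof of (A.11) covers it. The same check is needed at even reciprocals \(p=1/(2K+2)\), where a logarithmic moment appears, but there the exponent \(\beta\) in (A.10) remains positive, so no new issue arises.
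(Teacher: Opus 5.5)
Your proposal is correct and matches the paper's proof essentially step for step. You floor $b_n$ to $d_n$ and use integrality of $T_{\mathrm{CG},n}$. You combine the fixed-degree limit $M_{K,n}(p)\to F(p)$ with the $o_{\mathbb P}(1)$ conclusion of Lemma~\ref{lem:growing-schur-app}, and conclude from the gap $F(p)-\varepsilon>0$. Your case-by-case recheck of (A.10)--(A.11) is harmless but not needed, since the lemma's headline statement already asserts $M_{K,n}(p)-M_{d,n}(p)=o_{\mathbb P}(1)$ for every $0<p<1$.
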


\begin{proof}
Set
\[
  d_n=\lfloor b_n\rfloor .
\]
Then \(d_n\to\infty\), \(d_n\log(1+d_n)=o(\log n)\), and since
\(T_{\mathrm{CG},n}\) is integer-valued,
\[
  \{T_{\mathrm{CG},n}(\varepsilon,p)\le b_n\}
  =
  \{T_{\mathrm{CG},n}(\varepsilon,p)\le d_n\}.
\]
By Proposition~\ref{prop:fixed-degree-app},
\[
  M_{K,n}(p)\xrightarrow{\mathbb P}F(p).
\]
By Lemma~\ref{lem:growing-schur-app},
\[
  0\le M_{K,n}(p)-M_{d_n,n}(p)=o_{\mathbb P}(1).
\]
Hence
\[
  M_{d_n,n}(p)\ge F(p)-o_{\mathbb P}(1).
\]
Choose \(\delta=(F(p)-\varepsilon)/2>0\).  Then
\[
  \mathbb P(M_{d_n,n}(p)\le\varepsilon)
  \le
  \mathbb P(M_{d_n,n}(p)<F(p)-\delta)
  \to0.
\]
Using the CG polynomial representation,
\[
  \mathbb P(T_{\mathrm{CG},n}(\varepsilon,p)\le b_n)\to0.
\]
\end{proof}

\begin{proposition}[Rate for \(0<p<1\)]
\label{prop:subunit-rate-app}
In every zero-ratio row with \(0<p<1\), if \(b_n\to\infty\) and
\[
  b_n\log(1+b_n)=o(\log n),
\]
then
\[
  \rho_n(\varepsilon,p)=O_{\mathbb P}(1/b_n).
\]
\end{proposition}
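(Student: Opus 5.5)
The argument follows the pattern already used for the logarithmic row in Proposition~\ref{prop:p1-rate-app}: combine an RCD epoch budget with the growing-degree CG non-stopping estimate through an event inclusion. The zero-ratio rows with \(0<p<1\) are exactly \(1/3<p<1\), \(p=1/3\), and \(0<p<1/3\) with \(\varepsilon<F(p)\). In all of them \(\varepsilon<F(p)\) holds: for \(1/3\le p<1\) we have \(K(p)=0\) and \(F(p)=1>\varepsilon\), and in the last row it is the hypothesis. So we are exactly in the setting of Proposition~\ref{prop:growing-cg-app}.

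Fix \(b_n\to\infty\) with \(b_n\log(1+b_n)=o(\log n)\) and a constant \(B>0\). If \(T_{\mathrm{RCD},n}/n\le B\) and \(T_{\mathrm{CG},n}>b_n\), then \(\rho_n\le B/b_n\). Hence
\[
  \left\{\rho_n(\varepsilon,p)>\frac{B}{b_n}\right\}
  \subset
  \left\{\frac{T_{\mathrm{RCD},n}(\varepsilon,p;I)}{n}>B\right\}
  \cup
  \left\{T_{\mathrm{CG},n}(\varepsilon,p)\le b_n\right\}.
\]

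The two events on the right are controlled separately. By (A.4) in Proposition~\ref{prop:rcd-budget-app}, the first has probability at most \(\varepsilon^{-1}(1-(1-p)(1+o(1))/n)^{\lfloor Bn\rfloor}\le \varepsilon^{-1}e^{-cB}+o(1)\). This is uniform in \(n\) large and becomes small as \(B\to\infty\). By Proposition~\ref{prop:growing-cg-app}, the second has probability tending to \(0\) for the given \(b_n\).

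Given \(\delta>0\), first choose \(B\) so that \(\limsup_n\mathbb P(T_{\mathrm{RCD},n}/n>B)<\delta\), then let \(n\to\infty\). This gives \(\limsup_n\mathbb P(\rho_n>B/b_n)\le\delta\), which is the definition of \(\rho_n=O_{\mathbb P}(1/b_n)\).

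There is no real obstacle here; all the difficulty sits in Proposition~\ref{prop:growing-cg-app}, which rests on Lemma~\ref{lem:growing-schur-app}. The only step to check carefully is that this lemma covers every row, including the endpoint \(p=1/3\). At \(p=1/3\) we have \(K=0\), so \(p=1/(2K+3)\) is an odd reciprocal endpoint, and we must use case (ii), bound (A.11), of the lemma. Its first term \(1/(\log n-Cd\log(1+d))\) still tends to zero under the growth condition, so the \(o_{\mathbb P}(1)\) conclusion, and hence the rate, holds there as well.
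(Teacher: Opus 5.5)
Your proposal is correct and follows the paper's proof: the same event inclusion $\{\rho_n>B/b_n\}\subset\{T_{\mathrm{RCD},n}/n>B\}\cup\{T_{\mathrm{CG},n}\le b_n\}$, with (A.4) controlling the first event and Proposition~\ref{prop:growing-cg-app} the second, and with $B$ chosen large before letting $n\to\infty$. You also check explicitly that $\varepsilon<F(p)$ in every zero-ratio row, and that $p=1/3$ falls under the $K=0$ endpoint bound (A.11) of Lemma~\ref{lem:growing-schur-app}; the paper leaves both points implicit, and your treatment agrees with it.
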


\begin{proof}
Use (A.4), Proposition~\ref{prop:growing-cg-app}, and
\[
  \left\{\rho_n>\frac{B}{b_n}\right\}
  \subset
  \left\{T_{\mathrm{RCD},n}/n>B\right\}
  \cup
  \left\{T_{\mathrm{CG},n}\le b_n\right\}.
\]
First choose \(B\) large, then let \(n\to\infty\).
\end{proof}

\subsection{Completion}

The zero-ratio rows are Proposition~\ref{prop:pgt1-app},
Proposition~\ref{prop:p1-rate-app}, Proposition~\ref{prop:below-floor-app},
and Proposition~\ref{prop:subunit-rate-app}.  The strictly above-floor
positive-ratio row is Proposition~\ref{prop:above-floor-app}.  The critical
boundary is Proposition~\ref{prop:critical-app}.

All statements are fixed-parameter statements.  The displayed rates are
proved conservative upper bounds, not sharp rates.  The proof does not address
matching lower bounds, uniformity in \(p\) or \(K\), moving thresholds,
finite-precision effects, preconditioning, restarted CG, non-Haar eigenvectors,
alternative sketching rules, or solver-level performance comparisons.

\section{Proof details for the QRCP obstruction}\label{app:qrcp-proof}

This appendix records the proof structure behind Theorem~\ref{thm:qrcp-source-scale-obstruction}.  The purpose is to make the counterexample readable as a mathematical construction rather than as a computational search log.

The proof is organized as follows.  We first state the construction.  We then prove five named estimates: pivot order, Parseval identity, coherence bound, selected-block conditioning, and source-scale violation.

\subsection{The selected columns}

We build column vectors $A=\begin{pmatrix}a_1&\cdots&a_n\end{pmatrix}\in\mathbb{R}^{k\times n}$,
and then define $Q=A^T$.

\medskip
Fix $H_*>1$, $B>0$, $e\ge0$, and $K\ge1$.  Choose $\eta>0$ so small that
\[
e^{2\eta}<H_*.
\]
Choose $m$ large and set
\[
k=m+1,
\qquad
\epsilon=e^{-\eta/k},
\qquad
s=(1-\epsilon^2)^{1/2},
\qquad
\alpha=\frac1k.
\]

All smallness requirements below are met by taking $k$ sufficiently large. Define the diagonal scales
\[
d_t=\alpha\epsilon^{t-1},
\qquad 1\le t\le m.
\]

The first $k=m+1$ column labels are the intended QRCP labels:
\[
I_{\mathrm{piv}}=(1,2,\ldots,m,m+1).
\]

Choose signs $\sigma_t\in\{\pm1\}$ and set $\beta=\frac12$. This constant controls the strictly upper-triangular couplings among the first $m$ selected columns. Define $R_\beta\in\mathbb{R}^{m\times m}$ by
\[
(R_\beta)_{t,q}=
\begin{cases}
 d_t, & q=t,\\
 -\beta\sigma_t\sigma_qs d_t, & t<q,\\
 0, & t>q.
\end{cases}
\]
The first $m$ selected columns are
\[
a_q=\begin{pmatrix}
    (R_\beta)_{:,q} \\
    0
\end{pmatrix},
\qquad 1\le q\le m.
\]
In matrix form their first $m$ coordinates are
\[
R_\beta=
\begin{pmatrix}
 d_1 & -\beta\sigma_1\sigma_2s d_1 & \cdots & -\beta\sigma_1\sigma_ms d_1\\
 0   & d_2                          & \cdots & -\beta\sigma_2\sigma_ms d_2\\
 \vdots & \vdots                    & \ddots & \vdots\\
 0 & 0 & \cdots & d_m
\end{pmatrix}.
\]
We now define the final selected column.  Set
\[
\gamma_0=\frac14,
\qquad
\tau_0=\frac1{16}.
\]
The first number controls the size of the first-block component of the final selected column; the second fixes its final residual scale.  Define
\[
r=\tau_0d_m^2,
\qquad
\eta_p=\tau_0^{1/2}d_m,
\]
so that $\eta_p^2=r$.  After the first $m$ selected columns have been chosen, their span is $\mathbb{R}^m\oplus\{0\}$; projecting the final selected column away from this span removes its first-block part and leaves only $\eta_p$.  Thus its squared residual at that moment is $r$.

Define $z\in\mathbb{R}^m$ by
\[
z_t=
\begin{cases}
\gamma_0\sigma_ts d_t, & 1\le t<m,\\
\gamma_0\sigma_m(1-\tau_0)^{1/2}d_m, & t=m.
\end{cases}
\]
The final selected column is
\[
a_{m+1}=\begin{pmatrix}
    z \\
    \eta_p
\end{pmatrix},
\]
Thus the selected block of $A$ is
\[
R_+=
\begin{pmatrix}
R_\beta & z\\
0 & \eta_p
\end{pmatrix}.
\]
Equivalently,
\[
R_+=
\begin{pmatrix}
 d_1 & -\beta\sigma_1\sigma_2s d_1 & \cdots & -\beta\sigma_1\sigma_ms d_1 & \gamma_0\sigma_1s d_1\\
 0   & d_2                          & \cdots & -\beta\sigma_2\sigma_ms d_2 & \gamma_0\sigma_2s d_2\\
 \vdots & \vdots                    & \ddots & \vdots                       & \vdots\\
 0 & 0 & \cdots & d_m & \gamma_0\sigma_m(1-\tau_0)^{1/2}d_m\\
 0 & 0 & \cdots & 0 & \tau_0^{1/2}d_m
\end{pmatrix}.
\]
Since $Q=A^T$, the selected row matrix $M(Q)$ is $R_+^T$ up to the fixed ordering.

\subsection{The full matrix and the identity completion}
\label{subsec:B-completing-identity}

We now display the whole matrix.  The construction has four column blocks:

\[
A=\begin{pmatrix} R_+ & X & F & G \end{pmatrix}\in\mathbb{R}^{k\times n},
\qquad k=m+1.
\]

We use the splitting
\[
\mathbb{R}^k=\mathbb{R}^m\oplus\mathbb{R},
\qquad
I_k=\begin{pmatrix}I_m&0\\0&1\end{pmatrix}.
\]

This block split separates the two residual mechanisms in the construction.  The first $m$ coordinates carry the QRCP chain and the triangular amplification; the last coordinate stores the residual mass that remains after the chain has been selected. Thus, $AA^T=I_k$ is proved by checking the off-diagonal blocks, the upper-left block, and the lower-right scalar separately.

\subsubsection*{The cross block $X$}

The final selected column $\begin{pmatrix} z \\ \eta_p\end{pmatrix}$ creates an off-diagonal block $z\eta_p$.  We cancel it by adding two columns.  Choose
\[
\theta_1=\frac35,
\quad \theta_2=\frac45,
\quad b_1=\frac7{15},
\quad b_2=\frac9{10},
\qquad
\theta_1b_1+\theta_2b_2=1.
\]
Define
\[
X=
\begin{pmatrix}
\theta_1z&\theta_2z\\
-b_1\eta_p&-b_2\eta_p
\end{pmatrix}.
\]
Then
\[
R_+R_+^T
=
\begin{pmatrix}
R_\beta R_\beta^T+zz^T&z\eta_p\\
\eta_pz^T&\eta_p^2
\end{pmatrix}
\]
and
\[
XX^T
=
\begin{pmatrix}
(\theta_1^2+\theta_2^2)zz^T&-z\eta_p\\
-\eta_pz^T&(b_1^2+b_2^2)\eta_p^2
\end{pmatrix}.
\]
So the off-diagonal blocks in $R_+R_+^T+XX^T$ are zero.  Put
\[
\kappa_z=1+\theta_1^2+\theta_2^2,
\qquad
\kappa_\eta=1+b_1^2+b_2^2.
\]

\subsubsection*{The pure frame completion block $F$}

After $R_+$ and $X$ have been placed, the upper-left block still missing from $I_m$ is
\[
M_{\mathrm{rem}}
=
I_m-R_\beta R_\beta^T-\kappa_z zz^T.
\]
For $m$ sufficiently large, $M_{\mathrm{rem}}$ is positive definite.

Decompose this missing upper-left block into small rank-one atoms:
\[
M_{\mathrm{rem}}
=
\sum_{h=1}^{N_{\mathrm{frame}}}W_hu_hu_h^T,
\qquad
0<W_h<d_m^2,
\qquad
\|{u_h}\|_2=1.
\]
The atom bound is deliberately at the same scale as the smallest selected-chain residual.  It ensures that no frame column can overtake an intended QRCP pivot.  Such a decomposition is obtained by taking a spectral decomposition of $M_{\mathrm{rem}}$ and subdividing each eigenvalue into sufficiently many positive pieces.  The number of atoms satisfies
\[
N_{\mathrm{frame}}
\le
\frac{\mathrm{Tr}(M_{\mathrm{rem}})}{d_m^2}+2m
\le
\frac{m}{d_m^2}+2m.
\]

For each atom add one column supported only in the first $m$ coordinates:
\[
f_h=\begin{pmatrix}
\sqrt{W_h}\,u_h\\
0
\end{pmatrix}\in\mathbb{R}^m\oplus\mathbb{R},
\qquad 1\le h\le N_{\mathrm{frame}}.
\]
Equivalently, the whole frame block is the explicit matrix
\[
F=
\begin{pmatrix}
\sqrt{W_1}u_1&\sqrt{W_2}u_2&\cdots&\sqrt{W_{N_{\mathrm{frame}}}}u_{N_{\mathrm{frame}}}\\[1mm]
0&0&\cdots&0
\end{pmatrix}
\in\mathbb{R}^{k\times N_{\mathrm{frame}}}.
\]
Then
\[
FF^T=
\begin{pmatrix}
\displaystyle\sum_{h=1}^{N_{\mathrm{frame}}}W_hu_hu_h^T&0\\
0&0
\end{pmatrix}
=
\begin{pmatrix}
M_{\mathrm{rem}}&0\\
0&0
\end{pmatrix}.
\]
Thus $F$ completes exactly the missing upper-left block, produces no off-diagonal block, and contributes no mass to the last coordinate.

The same small-atom bound controls QRCP residuals.  At a selected prefix $S_\ell$ with $\ell<m$, the residual score of $f_h$ is at most its full squared norm:
\[
\|{f_h}\|_2^2=W_h<d_m^2\le d_{\ell+1}^2.
\]
At the last prefix $S_m$, the selected span is $\mathbb{R}^m\oplus\{0\}$, so $f_h$ has residual score zero.  Hence the pure frame columns never compete with the prescribed pivots.

\subsubsection*{The residual-only block $G$}

After the blocks $R_+$, $X$, and $F$ have been placed, the upper-left block has already become $I_m$ and the off-diagonal block is zero. Since $F$ has no last-coordinate component, the only lower-right contribution so far is the selected-and-cross contribution $\kappa_\eta r$.  The amount still missing is therefore
\[
\mu_{\mathrm{res}}
=1-\kappa_\eta r.
\]
For $m$ sufficiently large, $r=\tau_0d_m^2$ is small and $\mu_{\mathrm{res}}>0$.

Split the remaining scalar into small pieces
\[
\mu_{\mathrm{res}}=e_1+\cdots+e_{N_{\mathrm{res}}},
\qquad
0<e_j<\frac r2,
\]
choosing the $e_j$ also to avoid the finitely many residual equalities if a fixed tie-breaking convention is desired.  This can be done with
\[
N_{\mathrm{res}}
\le
\frac{2}{r}+2.
\]
For each piece add one column supported only in the last coordinate:
\[
g_j=\begin{pmatrix}
0\\
\sqrt{e_j}
\end{pmatrix}\in\mathbb{R}^m\oplus\mathbb{R},
\qquad 1\le j\le N_{\mathrm{res}}.
\]
Equivalently,
\[
G=
\begin{pmatrix}
0_{m\times N_{\mathrm{res}}}\\[1mm]
\sqrt{e_1}\ \sqrt{e_2}\ \cdots\ \sqrt{e_{N_{\mathrm{res}}}}
\end{pmatrix}
\in\mathbb{R}^{k\times N_{\mathrm{res}}}.
\]
Therefore
\[
GG^T=
\begin{pmatrix}
0_{m\times m}&0\\
0&\mu_{\mathrm{res}}
\end{pmatrix}.
\]
These columns cannot compete with the final selected column after the first $m$ pivots, because each has squared residual $e_j<r/2<r$.

Putting the four blocks together,
\[
AA^T=R_+R_+^T+XX^T+FF^T+GG^T.
\]
The upper-left block is
\[
R_\beta R_\beta^T+\kappa_z zz^T+M_{\mathrm{rem}}=I_m,
\]
the off-diagonal block is zero, and the lower-right scalar is
\[
\kappa_\eta r+\mu_{\mathrm{res}}=1.
\]
Hence
\[
AA^T=\begin{pmatrix}I_m&0\\0&1\end{pmatrix}=I_k.
\]
% This is the full construction of $A$, and hence of $Q=A^T$.

% The total number of columns of $A$, equivalently rows of $Q$, is
% \[
% n=m+3+N_{\mathrm{frame}}+N_{\mathrm{res}}.
% \]
% Finally, since
% \[
% d_m^2=\alpha^2\epsilon^{2m-2}
% =\frac{1}{k^2}\exp\left(-\frac{2\eta(m-1)}{k}\right),
% \]
% and $m=k-1$, there are constants $0<c_\eta<C_\eta<\infty$, depending only on $\eta$, such that
% \[
% \frac{c_\eta}{k^2}\le d_m^2\le \frac{C_\eta}{k^2}.
% \]
% Consequently
% \[
% N_{\mathrm{frame}}=O(k^3),
% \qquad
% N_{\mathrm{res}}=O(k^2),
% \qquad
% n=O(k^3).
% \]
\subsection{Main lemmas}
\label{sec:B-main-lemmas}

This section isolates the estimates used to prove the main theorem.

\begin{lemma}[Pivot order]
\label{lem:B-pivot-order}
QRCP applied to $A=Q^T$ selects
\[
I_{\mathrm{piv}}=(1,2,\ldots,m,m+1).
\]
\end{lemma}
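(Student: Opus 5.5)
We argue by induction on the step $\ell=0,1,\ldots,m$. Before step $\ell+1$ the selected prefix is $S_\ell=\{1,\ldots,\ell\}$, and we show that column $\ell+1$ has the strictly largest residual among the remaining columns. The key structural fact is that $R_\beta$ is upper triangular. The span of $a_1,\ldots,a_\ell$ is therefore exactly $E_\ell=\operatorname{span}\{\mathbf e_1,\ldots,\mathbf e_\ell\}$ (the diagonal entries $d_t>0$). Hence the residual of any column is its squared norm on coordinates $\ell+1,\ldots,k$.

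The residual scores at step $\ell<m$ are as follows.
\begin{itemize}
\item An unselected chain column $a_q$ with $q>\ell$ has residual $d_q^2+\beta^2s^2\sum_{t=\ell+1}^{q-1}d_t^2$. The intended pivot $a_{\ell+1}$ has residual exactly $d_{\ell+1}^2$.
\item The final column $a_{m+1}$ has residual $\sum_{t>\ell}z_t^2+\eta_p^2$.
\item The cross columns have residuals $\theta_i^2\sum_{t>\ell}z_t^2+b_i^2\eta_p^2$.
\item Frame columns have residual at most $W_h<d_m^2\le d_{\ell+1}^2$, as already noted.
\item Residual-only columns have residual $e_j<r/2<d_{\ell+1}^2$.
\end{itemize}

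It remains to compare $d_{\ell+1}^2$ with the chain and $z$-type residuals. We use $d_t=d_{\ell+1}\epsilon^{t-\ell-1}$ and $s^2=1-\epsilon^2$, which give the geometric sum
\[
s^2\sum_{t=\ell+1}^{q-1}d_t^2=d_{\ell+1}^2\bigl(1-\epsilon^{2(q-\ell-1)}\bigr).
\]
For $q>\ell+1$ the chain residual is therefore
\[
d_{\ell+1}^2\bigl[\epsilon^{2(q-\ell-1)}+\beta^2\bigl(1-\epsilon^{2(q-\ell-1)}\bigr)\bigr]<d_{\ell+1}^2,
\]
since $\beta^2=1/4<1$ and $\epsilon<1$. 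Similarly,
\[
\sum_{t>\ell}z_t^2+\eta_p^2\le\gamma_0^2\,s^2\sum_{t=\ell+1}^{m}d_t^2+\gamma_0^2d_m^2+\tau_0d_m^2 .
\]
This is bounded by $d_{\ell+1}^2\bigl(\gamma_0^2(1-\epsilon^{2(m-\ell)})+(\gamma_0^2+\tau_0)\epsilon^{2(m-\ell-1)}\bigr)$, and we check that this is at most $d_{\ell+1}^2\bigl(\gamma_0^2+\gamma_0^2+\tau_0\bigr)<d_{\ell+1}^2$, because $2/16+1/16<1$.

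The cross columns carry factors $\theta_i^2<1$ and $b_i^2<1$, so their residuals are dominated by the same bound. This is the step where the constants $\beta,\gamma_0,\tau_0,\theta_i,b_i$ all need checking. The main obstacle is making the $z$-column comparison clean uniformly in $\ell$, including $\ell=m-1$ where the $z_m$ term with $(1-\tau_0)^{1/2}$ appears. I would handle this with the crude bound above rather than exact sums.

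Step $m+1$ uses the fact that $\operatorname{span}(a_1,\ldots,a_m)=\mathbb R^m\oplus\{0\}$. The residuals at this step are:
\begin{itemize}
\item $a_{m+1}$: $\eta_p^2=r$;
\item cross columns: $b_i^2r<r$, since $b_1,b_2<1$;
\item frame columns: $0$;
\item $g_j$: $e_j<r/2$.
\end{itemize}
So $a_{m+1}$ is the strict maximizer.

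All comparisons are strict, so tie-breaking is irrelevant and QRCP selects $I_{\mathrm{piv}}=(1,\ldots,m+1)$.
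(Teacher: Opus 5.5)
Your proposal is correct and follows the paper's proof essentially step for step. You use the triangular structure to reduce residuals to tail norms, and the geometric identity $s^2\sum_{t=\ell+1}^{q-1}d_t^2=d_{\ell+1}^2(1-\epsilon^{2(q-\ell-1)})$ for the chain columns, the same $d_m^2$ and $r/2$ bounds for the filler blocks, and the same comparisons $r$, $b_i^2r$, $0$, $e_j$ at the last step. The only difference is cosmetic: the paper computes the final column's residual exactly as $\gamma_0^2(d_{\ell+1}^2-r)+r\le\frac18 d_{\ell+1}^2$, whereas your cruder bound of $\frac{3}{16}d_{\ell+1}^2$ works equally well.
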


\begin{proof}

Define $R_j(S)=\|{(I_k-P_S)a_j}\|_2^2$, where $P_S$ is the orthogonal projector onto $\mathrm{span}\{a_i:i\in S\}$. For $0\le \ell\le m$, put $S_\ell=\{1,\ldots,\ell\}$. At prefix $S_\ell$ with $0\le\ell<m$, the intended next label is $\ell+1$ and
\[
R_{\ell+1}(S_\ell)=d_{\ell+1}^2.
\]
For a later selected-chain label $q$ with $\ell+1<q\le m$, the upper-triangular form of $R_\beta$ gives
\[
R_q(S_\ell)=\beta^2s^2\sum_{a=\ell+1}^{q-1}d_a^2+d_q^2.
\]
Using $d_a=\alpha\epsilon^{a-1}$ and $s^2=1-\epsilon^2$, this becomes
\[
R_q(S_\ell)
=d_{\ell+1}^2\left(\beta^2+(1-\beta^2)\epsilon^{2(q-\ell-1)}\right)
<d_{\ell+1}^2,
\]
because $0<\epsilon<1$ and $0<\beta<1$.

The final selected label $m+1$ satisfies, for $0\le\ell<m$,
\[
R_{m+1}(S_\ell)=\gamma_0^2(d_{\ell+1}^2-r)+r.
\]
Since $r=\tau_0d_m^2\le \tau_0d_{\ell+1}^2$,
\[
R_{m+1}(S_\ell)
\le (\gamma_0^2+\tau_0)d_{\ell+1}^2
=\frac18d_{\ell+1}^2
<d_{\ell+1}^2.
\]

The columns in $X$, $F$, and $G$ are all smaller.  The two columns in $X$ have the same form as the final selected column, but with both coefficients strictly smaller than one; the constants above give residual scores strictly below the final selected column at every prefix. For a pure frame column $f_h$, the atom bound gives, for $\ell<m$,
\[
 \|{f_h}\|_2^2=W_h<d_m^2\le d_{\ell+1}^2.
\]
For a residual-only column $g_j$, its residual score is at most $e_j<r/2<d_{\ell+1}^2$.

At $S_m$, the final selected label has residual score
\[
R_{m+1}(S_m)=r,
\]
whereas the columns in $X$ have last-coordinate residual $b_i^2r<r$, each frame column has residual zero, and each residual-only column has residual $e_j<r/2<r$. Thus $m+1$ is selected at the last step.
\end{proof}

\begin{lemma}[Parseval identity]
\label{lem:B-parseval}
The constructed matrix satisfies
\[
AA^T=I_k,
\qquad\text{and hence}\qquad
Q^T Q=I_k.
\]
\end{lemma}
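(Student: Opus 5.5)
The identity is proved by writing \(AA^T\) as the sum of the four column-block Gram matrices,
\[
AA^T=R_+R_+^T+XX^T+FF^T+GG^T,
\]
and checking the splitting \(\mathbb{R}^k=\mathbb{R}^m\oplus\mathbb{R}\) blockwise: off-diagonal blocks, the upper-left \(m\times m\) block, and the lower-right scalar. The identity \(Q^TQ=I_k\) is then the same statement, since \(Q=A^T\).

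\emph{Off-diagonal block.} From the explicit forms in Section~\ref{subsec:B-completing-identity}, \(R_+\) contributes \(z\eta_p\). The block \(X\) contributes \(-(\theta_1b_1+\theta_2b_2)z\eta_p=-z\eta_p\), because \(\theta_1b_1+\theta_2b_2=\tfrac{3}{5}\cdot\tfrac{7}{15}+\tfrac{4}{5}\cdot\tfrac{9}{10}=\tfrac{7}{25}+\tfrac{18}{25}=1\). The blocks \(F\) and \(G\) contribute nothing, since each of their columns is supported on only one summand. Hence the off-diagonal block vanishes.

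\emph{Upper-left block.} It equals \(R_\beta R_\beta^T+\kappa_z zz^T+\sum_h W_hu_hu_h^T\). This is \(I_m\) by the definition of \(M_{\mathrm{rem}}\), provided the atomic decomposition exists. Existence requires \(M_{\mathrm{rem}}\succ0\) for large \(m\), and this is the one nontrivial point.

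To establish it, I would bound \(\|R_\beta\|_2\) and \(\|z\|_2\). The diagonal entries satisfy \(d_t\le\alpha=1/k\). The strictly upper-triangular part has entries bounded by \(\beta s d_t\), where \(s^2=1-\epsilon^2\approx 2\eta/k\). Its Frobenius norm squared is therefore at most \(m\cdot m\cdot\beta^2 s^2\alpha^2=O(\eta/k)\). So \(\|R_\beta\|_2\le 1/k+O(k^{-1/2})\to0\). Similarly \(\|z\|_2^2\le\gamma_0^2\alpha^2(ms^2+1)=O(k^{-2})\). Thus \(\|R_\beta R_\beta^T+\kappa_z zz^T\|_2<1\) for large \(m\), and \(M_{\mathrm{rem}}\succ0\).

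Then I take an eigendecomposition \(M_{\mathrm{rem}}=\sum_i\lambda_iv_iv_i^T\) with each \(\lambda_i>0\). I split each \(\lambda_i\) into \(\lceil\lambda_i/(d_m^2/2)\rceil+1\) equal pieces, each strictly below \(d_m^2\), and use \(u_h=v_i\) for every piece coming from \(\lambda_i\). This gives \(FF^T=\operatorname{diag}(M_{\mathrm{rem}},0)\).

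\emph{Lower-right scalar.} The contributions are \(\eta_p^2=r\) from \(R_+\), \((b_1^2+b_2^2)r\) from \(X\), zero from \(F\), and \(\mu_{\mathrm{res}}=\sum_je_j\) from \(G\). The total is \(\kappa_\eta r+\mu_{\mathrm{res}}=1\). This needs \(\mu_{\mathrm{res}}=1-\kappa_\eta r>0\), which holds because \(r=\tau_0d_m^2\le 1/(16k^2)\) and \(\kappa_\eta<3\). A finite splitting into pieces \(0<e_j<r/2\) exists trivially.

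Combining the three blocks gives \(AA^T=I_k\), and so \(Q^TQ=AA^T=I_k\).

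The main obstacle is the positive-definiteness of \(M_{\mathrm{rem}}\), specifically controlling the strictly upper-triangular part of \(R_\beta\) via \(s^2=O(\eta/k)\). A crude Frobenius bound appears to suffice here. The remaining steps are bookkeeping.
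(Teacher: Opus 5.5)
Your proof is correct and follows the paper's own argument. You expand \(AA^T=R_+R_+^T+XX^T+FF^T+GG^T\) and check three pieces: the off-diagonal block (cancelled via \(\theta_1b_1+\theta_2b_2=1\)), the upper-left block (completed by \(M_{\mathrm{rem}}\)), and the lower-right scalar (completed by \(\mu_{\mathrm{res}}\)). Your norm estimates showing \(M_{\mathrm{rem}}\succ0\) and \(\mu_{\mathrm{res}}>0\) usefully fill in what the paper only asserts in the construction.

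One small caution: splitting each eigenvalue into pieces below \(d_m^2/2\) roughly doubles \(N_{\mathrm{frame}}\) relative to the construction's bound \(\mathrm{Tr}(M_{\mathrm{rem}})/d_m^2+2m\). This is harmless for this identity, but it would push \(\mu(Q)\) up to about \(2e^{2\eta}\) in the coherence lemma. Using \(\lfloor\lambda_i/d_m^2\rfloor+1\) pieces per eigenvalue avoids this.
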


\begin{proof}
Using the block notation from the construction,
\[
A=\begin{pmatrix}R_+&X&F&G\end{pmatrix}.
\]
Thus
\[
AA^T=R_+R_+^T+XX^T+FF^T+GG^T.
\]
The selected block contributes
\[
R_+R_+^T=
\begin{pmatrix}
R_\beta R_\beta^T+zz^T&z\eta_p\\
\eta_pz^T&\eta_p^2
\end{pmatrix}.
\]
The cross block $X$ cancels the off-diagonal entries because $\theta_1b_1+\theta_2b_2=1$. The pure frame block $F$ contributes exactly $M_{\mathrm{rem}}$ in the upper-left corner and contributes zero to both the off-diagonal blocks and the lower-right entry. The block $G$ contributes exactly the remaining lower-right scalar
\[
\mu_{\mathrm{res}}=1-\kappa_\eta r.
\]
Therefore
\[
AA^T=\begin{pmatrix}I_m&0\\0&1\end{pmatrix}=I_k.
\]
\end{proof}

\begin{lemma}[Coherence bound]
\label{lem:B-coherence}
For $m$ sufficiently large, the constructed matrix satisfies
\[
\mu(Q)<H_*.
\]
\end{lemma}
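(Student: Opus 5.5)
The plan is to compare the largest column norm of $A$ with the average column norm. Since $AA^T=I_k$ by Lemma~\ref{lem:B-parseval}, we have $\sum_{j=1}^n\|a_j\|_2^2=\operatorname{Tr}(AA^T)=k$, and
\[
\mu(Q)=\frac nk\max_j\|a_j\|_2^2 .
\]
So I need an upper bound on the maximal squared column norm and an upper bound on $n$ in terms of $d_m$. The key observation is that all chain scales are comparable:
\[
\frac{d_1^2}{d_m^2}=\epsilon^{-2(m-1)}=e^{2\eta(m-1)/k}<e^{2\eta}<H_* .
\]
The frame block, which dominates the column count, can therefore be made of atoms of size essentially $d_m^2$, within a factor $e^{2\eta}$ of the largest column.

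\emph{Step 1: maximal column norm.} I use the residual formulas in the proof of Lemma~\ref{lem:B-pivot-order} at the empty prefix $S_0$, where the residual equals the full squared norm.
\begin{itemize}
\item Every chain column satisfies $\|a_q\|_2^2=R_q(S_0)\le d_1^2$.
\item The final selected column satisfies $\|a_{m+1}\|_2^2\le d_1^2/8$.
\item The two $X$ columns are dominated entrywise in absolute value by $a_{m+1}$, since $\theta_i,b_i<1$.
\item Frame columns have norm $W_h<d_m^2\le d_1^2$.
\item Residual columns have norm $e_j<r/2<d_1^2$.
\end{itemize}
Hence $\max_j\|a_j\|_2^2\le d_1^2=\alpha^2=1/k^2$.

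\emph{Step 2: counting columns, with a refined atom choice.} The construction leaves the decomposition of $M_{\mathrm{rem}}$ free, so I fix it as follows. Fix a small $\delta\in(0,1)$. Write $M_{\mathrm{rem}}=\sum_i\lambda_iv_iv_i^T$ with $0<\lambda_i\le1$, because $0\prec M_{\mathrm{rem}}\preceq I_m$. Split each eigenvalue into $N_i=\lceil\lambda_i/((1-\delta)d_m^2)\rceil$ equal pieces. Each piece is at most $(1-\delta)d_m^2<d_m^2$, so the atom bound used in Lemma~\ref{lem:B-pivot-order} still holds. This gives
\[
N_{\mathrm{frame}}\le\frac{\operatorname{Tr}M_{\mathrm{rem}}}{(1-\delta)d_m^2}+m\le\frac{m}{(1-\delta)d_m^2}+m .
\]
Also $N_{\mathrm{res}}\le 2/r+2=O(1/d_m^2)$ with the fixed constant $32$. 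Since $d_m^2\asymp k^{-2}$, the block $G$ contributes only $O(k^2)$ columns, while $F$ contributes order $k^3$. Thus
\[
n=m+3+N_{\mathrm{frame}}+N_{\mathrm{res}}\le\frac{m}{(1-\delta)d_m^2}+O(k^2).
\]

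\emph{Step 3: combine.}
\[
\mu(Q)\le\frac{d_1^2}{k}\Bigl(\frac{m}{(1-\delta)d_m^2}+O(k^2)\Bigr)
=\frac{m}{k}\cdot\frac{e^{2\eta(m-1)/k}}{1-\delta}+O(1/k)
\le\frac{e^{2\eta}}{1-\delta}+O(1/k).
\]
Since $e^{2\eta}<H_*$ strictly, I choose $\delta$ so that $e^{2\eta}/(1-\delta)<H_*$. Taking $m$ large then makes the $O(1/k)$ term fit in the remaining gap, which gives $\mu(Q)<H_*$.

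The main obstacle is Step 2. The crude count $N_{\mathrm{frame}}\le m/d_m^2+2m$ from the construction is good enough only because the atoms can be taken nearly as large as the cap $d_m^2$; splitting an eigenvalue into many unnecessarily small pieces would inflate $n$ and destroy the bound. So the argument must make explicit the near-equal splitting that keeps every atom both below $d_m^2$ and within a factor $(1-\delta)$ of it. It must also confirm that the residual block $G$, whose pieces are a constant factor ($\approx 1/32$) smaller than $d_m^2$, contributes only lower-order $O(k^2)$ columns.
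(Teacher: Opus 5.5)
Your proposal is correct and is essentially the paper's proof: you bound every column norm by $d_1^2=\alpha^2=k^{-2}$, observe that $n$ is dominated by the roughly $m/d_m^2$ frame atoms (with $G$ contributing only $O(k^2)$ columns), and use $d_1^2/d_m^2=e^{2\eta(m-1)/k}<e^{2\eta}<H_*$. The one deviation is your $\delta$-slack in splitting the eigenvalues of $M_{\mathrm{rem}}$, which is harmless but unnecessary: splitting each $\lambda_i$ into $\lfloor\lambda_i/d_m^2\rfloor+1$ equal pieces already gives atoms strictly below $d_m^2$ with $N_{\mathrm{frame}}\le \mathrm{Tr}(M_{\mathrm{rem}})/d_m^2+m$, consistent with the count $N_{\mathrm{frame}}\le m/d_m^2+2m$ recorded in the construction, and this yields $\mu(Q)\le e^{2\eta}+O(1/k)$ directly.
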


\begin{proof}
Because $Q=A^T$, the squared norm of a row of $Q$ is the squared norm of the corresponding column of $A$.  We first prove the row-leverage bound
\[
\max_i\|{Q(i,:)}\|_2^2=\alpha^2.
\]

For the first $m$ selected columns, direct summation gives
\[
\|{a_q}\|_2^2
=d_q^2+\beta^2s^2\sum_{t=1}^{q-1}d_t^2
=\alpha^2\left(\beta^2+(1-\beta^2)\epsilon^{2(q-1)}\right)
\le \alpha^2,
\]
with equality at $q=1$.  For the final selected column,
\[
\|{(z^T,\eta_p)}\|_2^2
=\|{z}\|_2^2+\eta_p^2
=\gamma_0^2(\alpha^2-r)+r
<\alpha^2,
\]
because $0<\gamma_0<1$ and $r<\alpha^2$.

For the two columns in $X$,
\[
\|{(\theta_i z^T,-b_i\eta_p)}\|_2^2
=\theta_i^2\|{z}\|_2^2+b_i^2\eta_p^2
<\|{z}\|_2^2+\eta_p^2
<\alpha^2,
\]
since $0<\theta_i,b_i<1$.

For a pure frame column in $F$,
\[
\|{f_h}\|_2^2=W_h<d_m^2\le d_1^2=\alpha^2.
\]
Finally, a residual-only column in $G$ has squared norm
\[
e_j<r/2<\alpha^2.
\]
Thus the maximum row leverage is $\alpha^2$.

Therefore
\[
\mu(Q)=\frac{n}{k}\alpha^2=\frac{n}{k^3}.
\]
Using the count
\[
n=m+3+N_{\mathrm{frame}}+N_{\mathrm{res}},
\]
together with
\[
N_{\mathrm{frame}}\le \frac{m}{d_m^2}+2m,
\qquad
N_{\mathrm{res}}\le \frac{2}{\tau_0d_m^2}+2,
\]
we estimate the two filler contributions separately.  Since
\[
d_m^2=\alpha^2\epsilon^{2m-2}
=\frac1{k^2}\exp\left(-\frac{2\eta(m-1)}{k}\right),
\]
we have
\[
\frac{m}{k^3d_m^2}
=\frac{k-1}{k}\exp\left(\frac{2\eta(k-2)}{k}\right)
\longrightarrow e^{2\eta}.
\]
The remaining terms contribute only $O(k^{-1})$ after division by $k^3$.  Hence
\[
\mu(Q)\le e^{2\eta}+o(1).
\]
Because $e^{2\eta}<H_*$, taking $k$ sufficiently large gives $\mu(Q)<H_*$. 
\end{proof}

\begin{lemma}[Selected-block pressure]
\label{lem:B-block-pressure}
Let
\[
Z=\|{R_\beta^{-1}z}\|_2^2.
\]
Then $M(Q)$ is nonsingular and
\[
\gamma(Q,I_{\mathrm{piv}})^2\ge \frac{1+Z}{r}\ge \frac{Z}{r}.
\]
Moreover, $Z$ eventually dominates every fixed polynomial in $k$.
\end{lemma}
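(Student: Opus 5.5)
The plan is to handle the three claims in turn: nonsingularity of $M(Q)$, the lower bound on $\gamma$ from an explicit near-null vector, and the superpolynomial growth of $Z$ via back-substitution.

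\textbf{Nonsingularity.} By Lemma~\ref{lem:B-pivot-order} the pivot labels are $(1,\ldots,m+1)$. So $M(Q)$ is $R_+^T$ up to the fixed ordering, and $\|M(Q)^{-1}\|_2=\|R_+^{-1}\|_2$. The matrix $R_+$ is upper triangular with diagonal entries $d_1,\ldots,d_m,\eta_p$, all strictly positive, hence $M(Q)$ is nonsingular.

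\textbf{Lower bound on $\gamma$.} I will use the vector $v=\begin{pmatrix}-R_\beta^{-1}z\\ 1\end{pmatrix}$, which satisfies $R_+v=\begin{pmatrix}0\\ \eta_p\end{pmatrix}$. This gives
\[
\|R_+^{-1}\|_2\ \ge\ \frac{\|v\|_2}{\|R_+v\|_2}=\frac{\sqrt{1+Z}}{\eta_p}.
\]
Since $\eta_p^2=r$, we get $\gamma(Q,I_{\mathrm{piv}})^2\ge (1+Z)/r\ge Z/r$.

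\textbf{Growth of $Z$.} I will solve $R_\beta w=z$ explicitly by back-substitution, writing $w_t=\sigma_t x_t$.
\begin{itemize}
\item Row $t<m$ reads $d_t\sigma_t x_t-\beta s d_t\sigma_t\sum_{q>t}\sigma_q^2x_q=\gamma_0\sigma_t s d_t$. Because $\sigma_q^2=1$, the signs drop out and this becomes $x_t-\beta s\,T_t=\gamma_0 s$, where $T_t=\sum_{q>t}x_q$.
\item Row $m$ gives $x_m=\gamma_0(1-\tau_0)^{1/2}>0$.
\item Hence $T_{t-1}=T_t+x_t=(1+\beta s)T_t+\gamma_0 s$. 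By induction all $x_t,T_t$ are positive (no cancellation), and $T_1\ge x_m(1+\beta s)^{m-2}$.
\end{itemize}
It follows that
\[
Z\ge x_1^2\ge \beta^2 s^2 T_1^2\ge \beta^2 s^2\gamma_0^2(1-\tau_0)(1+\beta s)^{2(m-2)}.
\]

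\textbf{Asymptotics.} We have $s^2=1-e^{-2\eta/k}\sim 2\eta/k$, so $s\ge c_\eta k^{-1/2}$ for large $k$, and $\log(1+\beta s)\ge \beta s/2$. Therefore
\[
(1+\beta s)^{2(m-2)}\ge \exp\!\big(c'_\eta\sqrt{k}\big)
\]
with $c'_\eta>0$. This gives $Z\ge C k^{-1}\exp(c'_\eta\sqrt k)$, which eventually exceeds any fixed polynomial in $k$.

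The main obstacle is the growth step. One has to notice that the sign pattern in $R_\beta$ and $z$ makes every contribution in the back-substitution add rather than cancel, giving a geometric recursion with ratio $1+\beta s$. One also has to check that $s\asymp k^{-1/2}$ still leaves $(1+\beta s)^m$ superpolynomial, namely of order $e^{c\sqrt k}$; the remaining steps are routine.
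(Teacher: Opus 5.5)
Your proposal is correct and follows the paper's proof essentially step for step. It uses the same triangular-diagonal argument for nonsingularity and the same test vector $v=\begin{pmatrix}-R_\beta^{-1}z\\ 1\end{pmatrix}$, and the same sign-cancelling back-substitution yields a geometric recursion with ratio $1+\beta s$. The only differences are minor: you track the partial sums $T_t$ instead of the paper's closed form $y_t=\gamma_0 s(1+\beta s)^{m-t-1}\bigl(1+\beta(1-\tau_0)^{1/2}\bigr)$, and you make the $e^{c\sqrt{k}}$ growth explicit where the paper leaves it implicit.
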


\begin{proof}
The matrix $R_+$ is upper triangular with positive diagonal entries $d_1,\ldots,d_m,\eta_p$, so it is nonsingular.

Let $x=R_\beta^{-1}z$ and set
\[
v=\begin{pmatrix}-x\\1\end{pmatrix}.
\]
Then
\[
R_+v=\begin{pmatrix}0\\\eta_p\end{pmatrix},
\qquad
\|{v}\|_2^2=1+Z.
\]
Thus
\[
\sigma_{\min}(R_+)^2\le \frac{\eta_p^2}{1+Z}=\frac{r}{1+Z},
\]
which gives the inverse-norm lower bound.

It remains to see why $Z$ is large.  Write $x_t=\sigma_ty_t$.  Back-substitution gives
\[
y_m=\gamma_0(1-\tau_0)^{1/2},
\qquad
 y_t=s\left(\gamma_0+\beta\sum_{q=t+1}^m y_q\right),\quad t<m.
\]
Hence
\[
y_t=\gamma_0s(1+\beta s)^{m-t-1}\left(1+\beta(1-\tau_0)^{1/2}\right),
\]
and consequently
\[
Z\ge \gamma_0^2s^2(1+\beta s)^{2(m-2)}.
\]
Since $m=k-1$ and there are constants $0<c<C<\infty$ such that
\[
ck^{-1/2}\le s\le Ck^{-1/2},
\]
this lower bound grows faster than any fixed power of $k$.
\end{proof}

\begin{lemma}[Source-scale denominator]
\label{lem:B-denominator}
The row count satisfies
\[
r(n-k+1)\le C_{\mathrm{den}}\,k,
\qquad
C_{\mathrm{den}}=3.
\]
\end{lemma}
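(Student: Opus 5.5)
We need to bound \(r(n-k+1)\) where \(n=m+3+N_{\mathrm{frame}}+N_{\mathrm{res}}\) and \(k=m+1\). Then \(n-k+1=2+N_{\mathrm{frame}}+N_{\mathrm{res}}+1\); more precisely, \(n-k+1=(m+3)-(m+1)+1+N_{\mathrm{frame}}+N_{\mathrm{res}}=3+N_{\mathrm{frame}}+N_{\mathrm{res}}\). The plan is to multiply each piece by \(r=\tau_0 d_m^2\) and use the atom counts already recorded in the construction, namely
\[
N_{\mathrm{frame}}\le \frac{m}{d_m^2}+2m,
\qquad
N_{\mathrm{res}}\le \frac{2}{r}+2 .
\]

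First, the frame contribution. Since \(r=\tau_0d_m^2\) with \(\tau_0=1/16\), we get
\[
rN_{\mathrm{frame}}\le \tau_0 m+2m\tau_0 d_m^2 .
\]
Because \(d_m^2\le\alpha^2=k^{-2}\), the second term is at most \(2\tau_0 m/k^2=o(1)\). Hence \(rN_{\mathrm{frame}}\le \tau_0 k+o(1)\).

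Second, the residual-only contribution. Here
\[
rN_{\mathrm{res}}\le 2+2r\le 2+2\tau_0k^{-2}.
\]

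Third, the constant term satisfies \(3r\le 3\tau_0 k^{-2}\).

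Summing the three pieces gives
\[
r(n-k+1)\le \tfrac1{16}k+2+O(k^{-1}).
\]
Since \(k\ge1\), this is at most \(3k\): indeed \(\tfrac1{16}k+2+o(1)\le 3k\) for all sufficiently large \(k\), which matches the standing convention that \(k\) is taken sufficiently large. For small \(k\) the inequality also holds, using the crude bounds \(d_m\le 1/k\le 1\) and \(\tau_0<1\).

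There is no real obstacle here. The only point needing care is that the count of \(N_{\mathrm{frame}}\) uses \(\operatorname{Tr}(M_{\mathrm{rem}})\le m\). This follows from \(M_{\mathrm{rem}}\preceq I_m\), since \(R_\beta R_\beta^T+\kappa_z zz^T\succeq0\). It is then worth re-deriving the bound \(N_{\mathrm{frame}}\le \operatorname{Tr}(M_{\mathrm{rem}})/d_m^2+2m\) from the construction: subdivide each of the \(m\) eigenvalues \(\lambda\) into \(\lceil \lambda/(d_m^2/2)\rceil\)-type pieces strictly below \(d_m^2\). This gives at most \(2\lambda/d_m^2+1\) pieces per eigenvalue, so the leading constant may be \(2\tau_0\) rather than \(\tau_0\). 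Either way, the leading term stays \(\le k/8\), well within \(C_{\mathrm{den}}=3\).
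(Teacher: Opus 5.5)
Your proof is correct and follows the paper's argument essentially step for step. You write $n-k+1=3+N_{\mathrm{frame}}+N_{\mathrm{res}}$, multiply the recorded atom counts by $r=\tau_0 d_m^2$, and absorb the remainders $5r+2\tau_0 d_m^2 m=O(k^{-1})$ to get $r(n-k+1)\le \tau_0 m+2+o(1)\le 3k$ for large $k$. The factor-of-two hedge on $N_{\mathrm{frame}}$ is unnecessary. Splitting each eigenvalue $\lambda$ of $M_{\mathrm{rem}}$ into $\lfloor \lambda/d_m^2\rfloor+1$ equal pieces already gives $N_{\mathrm{frame}}\le \mathrm{Tr}(M_{\mathrm{rem}})/d_m^2+m$. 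That sharper constant is what the coherence lemma relies on, though it does not matter for this lemma.
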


\begin{proof}
Since $k=m+1$ and $n=m+3+N_{\mathrm{frame}}+N_{\mathrm{res}}$,
\[
n-k+1=3+N_{\mathrm{frame}}+N_{\mathrm{res}}.
\]
The frame part gives
\[
rN_{\mathrm{frame}}
\le \tau_0d_m^2\left(\frac{m}{d_m^2}+2m\right)
=\tau_0m+2\tau_0d_m^2m.
\]
The residual-only part gives
\[
rN_{\mathrm{res}}
\le 2+2r.
\]
Therefore
\[
r(n-k+1)
\le 3r+\tau_0m+2\tau_0d_m^2m+2+2r.
\]
Because $d_m^2=O(k^{-2})$ and $r=\tau_0d_m^2$, the finite remainder $5r+2\tau_0d_m^2m<1$ for $k$ large. Hence
\[
r(n-k+1)
\le 3+\tau_0m
\le 3k
\]
for all sufficiently large $k$.
\end{proof}

\subsection{Proof of the obstruction theorem}
\label{sec:B-assembly}

We now assemble the preceding lemmas.

\begin{proof}[Proof of \Cref{thm:qrcp-source-scale-obstruction}]
Choose $k=m+1\ge K$ large enough so that the following hold:
\begin{enumerate}[label=(\roman*)]
  \item the coherence estimate in \Cref{lem:B-coherence} gives $\mu(Q)<H_*$;
  \item the denominator estimate in \Cref{lem:B-denominator} holds;
  \item the triangular growth satisfies
  \[
  Z>C_{\mathrm{den}} B^2k^{2e+2}.
  \]
\end{enumerate}
The last condition is possible by \Cref{lem:B-block-pressure}, since $Z$ dominates every fixed polynomial in $k$.

By \Cref{lem:B-parseval}, the matrix $Q=A^T$ has orthonormal columns.  By \Cref{lem:B-pivot-order}, exact QRCP selects
\[
I_{\mathrm{piv}}=(1,2,\ldots,m,m+1).
\]
By \Cref{lem:B-block-pressure}, $M(Q)$ is nonsingular and
\[
\gamma(Q,I_{\mathrm{piv}})^2\ge \frac{Z}{r}.
\]
Therefore
\[
\frac{\gamma(Q,I_{\mathrm{piv}})^2}{k(n-k+1)}
\ge
\frac{Z}{rk(n-k+1)}.
\]
Using \Cref{lem:B-denominator},
\[
r(n-k+1)\le C_{\mathrm{den}} k,
\]
we obtain
\[
\frac{\gamma(Q,I_{\mathrm{piv}})^2}{k(n-k+1)}
\ge
\frac{Z}{C_{\mathrm{den}} k^2}
>
B^2k^{2e}.
\]
Taking square roots gives
\[
\frac{\gamma(Q,I_{\mathrm{piv}})}{\sqrt{k(n-k+1)}}>Bk^e.
\]
This proves the theorem.
\end{proof}

\end{document}